\documentclass{article}

\usepackage{microtype}
\usepackage{graphicx}
\usepackage{subcaption}
\usepackage{booktabs} 

\usepackage[ruled,vlined]{algorithm2e}
\usepackage{multicol, multirow}
\usepackage[table]{xcolor}
\usepackage{colortbl} 
\usepackage{rotating}
\usepackage{array}
\usepackage{arydshln} 
\usepackage{float}

\usepackage{booktabs}
\usepackage{xcolor} 
\usepackage{threeparttablex}
\usepackage{wrapfig}  
\usepackage{rotating}
\newcommand{\boldres}[1]{{\textbf{\textcolor{black}{#1}}}}
\newcommand{\secondres}[1]{{\underline{\textcolor{black}{#1}}}}
\usepackage[most]{tcolorbox}

\definecolor{lightblue}{rgb}{0.85,0.95,1}
\definecolor{lightgray}{rgb}{0.95,0.95,0.95}

\usepackage{hyperref}

\usepackage[preprint]{icml2026}

\usepackage{amsmath}
\usepackage{amssymb}
\usepackage{mathtools}
\usepackage{amsthm}

\usepackage[capitalize,noabbrev]{cleveref}

\theoremstyle{plain}
\newtheorem{theorem}{Theorem}[section]

\theoremstyle{definition}
\newtheorem{definition}[theorem]{Definition}

\theoremstyle{remark}

\usepackage[textsize=tiny]{todonotes}

\icmltitlerunning{Bi-level Heterogeneous Learning for Time Series Foundation Models}

\begin{document}

\twocolumn[
  \icmltitle{Bi-level Heterogeneous Learning for Time Series Foundation Models: \\
  A Federated Learning Approach}



  \icmlsetsymbol{equal}{*}

  \begin{icmlauthorlist}
    \icmlauthor{Shengchao Chen}{yyy}
    \icmlauthor{Guodong Long}{yyy}
    \icmlauthor{Dikai Liu}{zzz}
    \icmlauthor{Jing Jiang}{yyy}
  \end{icmlauthorlist}

  \icmlaffiliation{yyy}{Australian AI Institute, University of Technology Sydney}
  \icmlaffiliation{zzz}{Robotics Institute, University of Technology Sydney}

  \icmlcorrespondingauthor{Shengchao Chen}{shengchao.chen@uts.edu.au}

  \icmlkeywords{Machine Learning, ICML}

  \vskip 0.3in
]



\printAffiliationsAndNotice{}  

\begin{abstract}
Heterogeneity in time series data is more pronounced than in vision or language, as temporal dynamics vary substantially across domains and tasks. Existing efforts on training time series foundation models (TSFMs) from scratch are often trained with mixed-batch strategies that merge large-scale datasets, which can cause gradient conflicts and degrade representation quality. To address this, we propose a fine-grained learning method that distills invariant knowledge from heterogeneous series while reducing cross-domain interference. We characterize heterogeneity at two levels: inter-domain and intra-domain. To tackle this bi-level heterogeneity, we design a federated learning method that mitigates intra-domain conflicts by enforcing domain-invariant and semantically consistent representations through local regularization, and addresses inter-domain discrepancies by enhancing cross-domain collaboration via domain-aware aggregation. Experiments across diverse benchmarks show that TSFMs trained with our method consistently outperform both centralized and federated TSFM baselines in point and probabilistic forecasting, while also achieving competitive zero-shot performance at scale, offering a flexible pathway for training TSFMs from scratch in heterogeneous environments.
\end{abstract}

\section{Introduction}
Time series forecasting plays a critical role in decision-making domains such as weather~\cite{bi2023accurate}, energy~\cite{wu2022timesnet}, and urban computing~\cite{zhang2024towards}. Recent advances in foundation models have shifted research from task-specific structures toward general-purpose forecasting models trained on diverse datasets~\cite{goswami2024moment,shi2024time}. However, existing time series foundation models (TSFMs) are often built under centralized training paradigms~\cite{shi2024time,das2024decoder,woo2024unifiedtraininguniversaltime}, where heterogeneous datasets are simply merged for pretraining. In practice, this mixed-batch strategy can amplify gradient conflicts and obscure domain-specific structures, ultimately limiting the quality of learned representations. Moreover, temporal data are inherently fragmented across organizations and applications, making centralized collection impractical and further aggravating heterogeneity across diverse domains.

Concretely, these limitations manifest as a form of bi-level heterogeneity. The first level is inter-domain heterogeneity~\cite{chen2025federated}, where datasets from different sources exhibit covariate shifts and divergent temporal patterns due to variations in sensing modalities, sampling rates, and environments. This misalignment often causes centralized or mixed-batch training to overfit domain-specific signals and fail to capture globally consistent dynamics. The second level is intra-domain conflict, which arises within a single dataset or domain~\cite{wang2024medformer}. Temporal concept drift from reconfiguration or task evolution, together with latent sub-domains such as geographically distributed devices, leads to semantically similar but contextually divergent fragments. Naive training over such mixtures produces incoherent embeddings that are neither specialized nor aligned within sub-domains. The bi-level heterogeneity undermines the learning of domain-invariant and temporally coherent patterns, thereby limiting the generalization of TSFMs.

A more flexible learning paradigm is required to handle bi-level heterogeneity in TSFMs training, one that can integrate knowledge across diverse domains while limiting cross-domain interference. Federated learning (FL)~\cite{mcmahan2017communication} naturally offers such a strategy: by enabling decentralized training over distributed datasets, it allows domains to collaborate without centralizing raw data. Yet, existing FL methods primarily address inter-domain heterogeneity through personalization~\cite{tan2022towards} or domain-aware adaptations~\cite{yang2023survey}, while typically assuming that each client`s local data is homogeneous and stationary. While such assumptions may hold in static modalities like CV and NLP, they break down in time series domains where a single client can contain multiple latent sub-domains or undergo temporal concept drift. These intra-domain conflicts blur local representations, leaving them neither domain-specialized nor globally aligned. Recent work~\cite{chen2025federated} explores task-agnostic FL for time series, but it largely targets inter-domain misalignment while overlooking intra-domain conflicts and evolving semantics. The open question is how to design a FL paradigm that can simultaneously resolve bi-level heterogeneity, and in doing so, provide a scalable pathway for building robust TSFMs.

\begin{figure}[tbh]
    \centering
    \includegraphics[width=1\columnwidth]{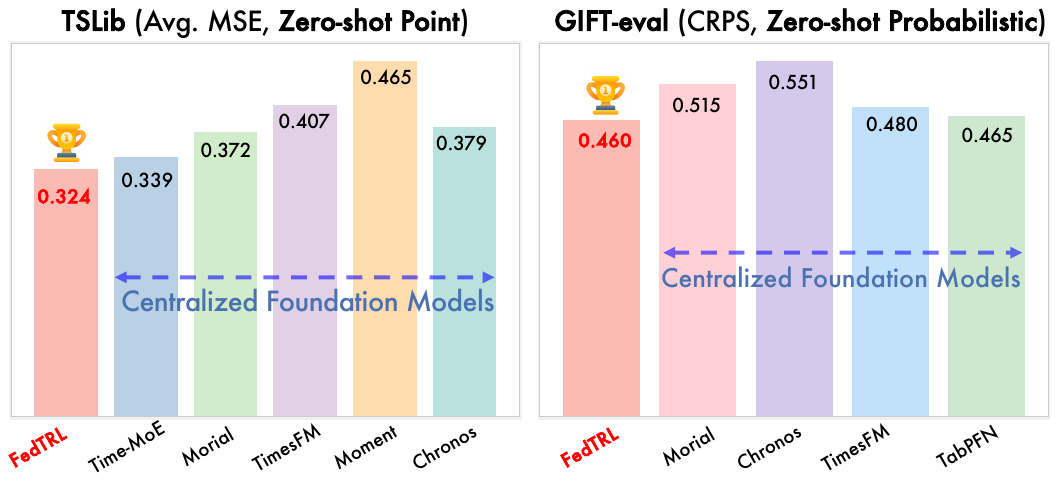}  
    \caption{FedTRL outperforms centralized TSFMs in \emph{zero-shot point} and \emph{probabilistic forecasting}. Details are provided in Sec.~\ref{sec:main_res}}
    \label{fig:intro}
    \vspace{-3pt}
\end{figure}

To this end, we propose FedTRL, a FL method for bi-level heterogeneous learning that enables domain-invariant time series representations under both inter-domain heterogeneity and intra-domain conflicts. Our FedTRL integrates two complementary components, including: (1) a domain-adversarial optimization objective combined with prototype alignment, which mitigates sub-domain drift and suppresses domain-specific artifacts to enforce semantically consistent local representations, and (2) a domain-aware aggregation strategy that adaptively balances invariance and alignment across domains. FedTRL enables the training of robust TSFMs under decentralization. As shown in \textbf{Fig.~\ref{fig:intro}}, FedTRL not only closes but surpasses centralized TSFMs in both zero-shot point (TSLib~\cite{wu2022timesnet}) and probabilistic forecasting (GIFT-eval~\cite{aksu2024gift}), demonstrating its effectiveness. Our contributions are summarized as:
\begin{itemize}
    \item We propose FedTRL, a FL method for bi-level heterogeneous learning that explicitly addresses both inter-domain and intra-domain heterogeneity challenges, providing a scalable and more flexible pathway for training time series foundation models from scratch.
    \item We propose a fine-grained joint optimization \& aggregation that combines domain-adversarial regularization with prototype alignment and domain-aware aggregation, enforcing semantically consistent representations while adaptively weighting client updates.
    \item Extensive experiments across representation learning, full/zero-shot forecasting, and probabilistic forecasting benchmarks show that FedTRL effectively scales FL to train TSFMs, achieving performance competitive with or even superior to centralized approaches.
\end{itemize}

\section{Related Work}
\paragraph{Unsupervised Time Series Representation Learning.} Unsupervised representation learning extracts general temporal patterns from unlabeled sequences~\cite{zhang2024self}, forming the foundation of TSFMs (\emph{see Appendix~\ref{sec:more_related}}), in contrast to task-specific supervised methods~\cite{wu2022timesnet}. Existing works mainly fall into masked reconstruction and contrastive learning. Masked reconstruction predicts missing inputs, with point-wise strategies such as TST~\cite{zerveas2021transformer}, TimeMAE~\cite{cheng2023timemae}, and SimMTM~\cite{dong2023simmtm}, and patch-wise variants like PatchTST~\cite{nie2022time} and CrossTimeNet~\cite{cheng2025cross} that better capture local structures. Contrastive learning instead distinguishes positives from negatives, exemplified by TS-TCC~\cite{eldele2021time}, which exploits temporal smoothness, and CoST~\cite{woo2022cost}, which leverages time–frequency augmentations. While effective on single datasets, these approaches face two issues when scaled to TSFMs: \emph{(i) dependence on dataset-specific fine-tuning} and \emph{(ii) conflicting embeddings across heterogeneous domains}. This highlights the need for more flexible training paradigms that unify representation learning across domains, where FL~\cite{mcmahan2017communication} offers a strategy for collaborative pretraining on decentralized heterogeneous data.

\paragraph{Federated Learning in Time Series.} Recent studies have extended FL to time series, exploring personalized cross-domain adaptation~\cite{liu2024time,soi2024federated} and task-agnostic modeling across domains~\cite{chen2025federated, chen2023prompt, sun2024hifi}. However, these methods often assume that local data within each client is homogeneous and domain-stable, an assumption rarely satisfied in practice. Most of them primarily target inter-domain heterogeneity across clients while overlooking intra-domain variability within individual clients. In reality, a single client may contain multiple latent sub-domains with divergent patterns, and cross-client distributions often remain misaligned despite similar marginal statistics. Such bi-level heterogeneity fundamentally disrupts representation learning, yielding incoherent local embeddings and misaligned global models that struggle to generalize across domains and tasks. To address these challenges, we propose FedTRL, a FL method for bi-level heterogeneous learning that explicitly tackles both inter-domain and intra-domain heterogeneity, enabling robust and scalable pretraining of TSFMs from scratch.

\begin{figure*}[tbh]
    \centering
    \includegraphics[width=.95\textwidth]{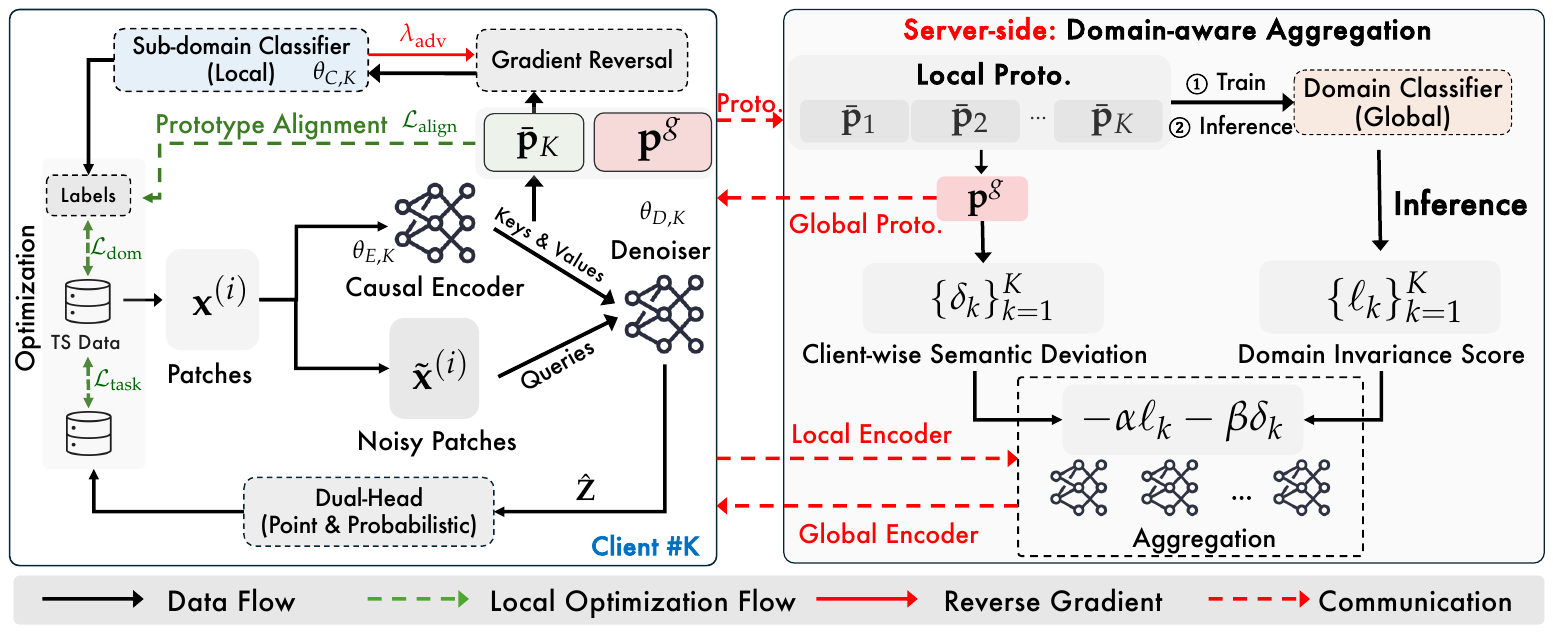}
    \caption{Structure of FedTRL (single client example). Each client performs unsupervised diffusion-based reconstruction and uploads only the encoder $\theta_E$ and prototypes $\bar{\mathbf{p}}$ per round; the server applies domain-aware aggregation (DaG) to produce a unified encoder.}
    \label{fig:overall_arch}
    \vspace{-6pt}
\end{figure*}
\section{Methodology}
\paragraph{Problem Definition.} We consider an FL system with a server and $K$ clients (domains), where each client $k$ holds a local dataset $\mathcal{D}_k$ that may include data from one/multiple sub-domains. These datasets are heterogeneous due to differences in sampling rates, temporal dynamics, and domain-specific patterns. The objective is to collaboratively learn a unified representation that generalizes across domains and supports diverse forecasting tasks without supervision. The global objective can be formulated as:
\begin{equation}
    \min_{\theta} F(\theta) := \sum\nolimits_{k=1}^{K} \frac{n_k}{n} F_k(\theta_k; \mathcal{D}_k),
\end{equation}
where $F_k$ is the local objective on client (domain) $k$, $n_k$ is its sample size, and $n=\sum_{k=1}^K {n_k}$. However, this standard objective fails to yield generalizable patterns because: (i) \emph{inter-domain heterogeneity}, where distinct distributions across domains lead to divergent local optima $\theta_k^*$, making naive aggregation ineffective; and (ii) \emph{intra-domain conflicts}, where a single domain host multiple latent sub-domains, and contextual shifts create inconsistent or hybrid embeddings.

\paragraph{Overview.}
We propose FedTRL (\textbf{Fig.~\ref{fig:overall_arch}}), a FL method for bi-level heterogeneous learning that explicitly addresses both inter- and intra-domain heterogeneity. FedTRL integrates: \textbf{(i)} a fine-grained local optimization objective that enforces domain invariance through adversarial supervision and prototype alignment and \textbf{(ii)} an domain-aware aggregation mechanism that re-weights client updates by combining domain discrimination risk with representation alignment.

\subsection{Local Updating on Clients}
To address heterogeneity at different levels, each client optimizes a unified objective: (i) reconstruction for temporal semantics and (ii) adversarial regularization for mitigating intra-domain conflicts, while (iii) prototype alignment is designed to promote inter-domain consistency. We first decompose a multivariate time series $\mathbf{X} \in \mathbb{R}^{T \times C}$ of length $T$ into univariate series $\mathbf{x} \in \mathbb{R}^T$ along channels. Each univariate series is first stationarized using a non-parametric two-stage instance normalization~\cite{liu2022non} to mitigate temporal distribution shifts and anomalous ranges, thereby improving robustness across latent sub-domains. We then divide each sequence into non-overlapping patches of length $P$, yielding $N = \lfloor T/P \rfloor$ patches $\mathcal{P} = \{\mathbf{x}^{(i)}\}_{i=1}^N$. This preserves local temporal structure while reducing computation for long sequences. Given these patches, we adopt diffusion  reconstruction~\cite{wang2025timedartdiffusionautoregressivetransformer} to perform unsupervised representation learning. Specifically, we apply a forward diffusion process $\mathcal{D}_{\text{fwd}}$ to obtain its noisy counterpart:
\begin{equation}
    \tilde{\mathbf{x}}^{(i)} = \sqrt{\bar{\alpha}_t}\,\mathbf{x}^{(i)} + \sqrt{1-\bar{\alpha}_t}\,\boldsymbol{\epsilon}, \boldsymbol{\epsilon}\sim\mathcal{N}(0,I), \ t\sim\mathcal{U}(0,T_d)
\end{equation}
Unlike contrastive~\cite{woo2022cost} or masked reconstruction methods~\cite{chen2025federated} that implicitly assume data homogeneity, our approach prioritizes intrinsic temporal dynamics over domain-specific artifacts by jointly modeling point-, patch-, and sequence-level patterns. All clean patches are embedded and stacked into a token sequence: $\mathbf{E} \in \mathbb{R}^{N \times d}$, which is processed by a causal Transformer encoder $\theta_E$: $\mathbf{H} = \theta_E(\mathbf{E}) \in \mathbb{R}^{N \times d}$. In parallel, noisy patches $\tilde{\mathbf{x}}^{(i)}$ are embedded as query tokens $\tilde{\mathbf{E}} \in \mathbb{R}^{N \times d}$. A denoising decoder $\theta_D$ reconstructs latent signals using cross-attention mechanism, with queries $\tilde{\mathbf{E}}\mathbf{W}_q$ attending to encoder outputs $\mathbf{H}\mathbf{W}_k$,$\mathbf{H}\mathbf{W}_v$ as $\hat{\mathbf{Z}} = \theta_D(\tilde{\mathbf{E}}\mathbf{W}_q,\ \mathbf{H}\mathbf{W}_k,\ \mathbf{H}\mathbf{W}_v)$.

\vspace{-2pt}
\paragraph{Subdomain-Adversarial and Prototype-Aligned Regularization.} To mitigate conflicts among semantically related but contextually divergent sub-domains, we adapt subdomain-invariant learning via adversarial training~\cite{ganin2016domain}. We introduce a sub-domain classifier $D_{\theta_C}$ on clients, trained to predict the sequence-level sub-domain label $y^{\text{dom}}$, where each time series (and all patches derived from it) shares the same sub-domain category. Given encoder outputs $\mathbf{H}$, we compute the temporal prototype by mean pooling $\bar{\mathbf{p}}$ = $\mathrm{Mean}(\mathbf{H})$. The classifier objective is
\begin{equation}
    \mathcal{L}_{\text{dom}} = \frac{1}{N} \sum\nolimits_{i=1}^N \mathrm{CE}\!\big(D_{\theta_C}(\mathrm{GRL}(\bar{\mathbf{p}}^{(i)})),\ y^{\text{dom}}_i \big),
\end{equation}
where GRL (Gradient Reverse Layer) acts as identity in the forward pass and scales gradients by $-\lambda$ in the backward pass, encouraging the encoder to suppress subdomain-specific signals. The parameter $\lambda$ balances temporal modeling and subdomain invariance. Beyond local adversarial regularization, we facilitate cross-domain knowledge transfer via lightweight prototype-based alignment. Rather than constraining full model parameters, which are entangled with domain-specific dynamics, each client aligns its local prototype $\bar{\mathbf{p}}$ with the global prototype $\mathbf{p}^g$ as:
\begin{equation}\label{eq:align}
    \mathcal{L}_{\text{align}} = ||\bar{\mathbf{p}} - \mathbf{p}^g ||_2^2.
\end{equation}
This ensures semantic consistency across clients while preserving local flexibility, yielding a domain-invariant representation space that facilitates robust aggregation.
\vspace{-2pt}

\paragraph{Point and Probabilistic Forecasting Support.} Each latent token $\hat{\mathbf{z}}^{(i)} \in \hat{\mathbf{Z}}$ is projected back to the patch space through an MLP, yielding reconstructed patches $\hat{\mathbf{x}}^{(i)}$. On top of these shared representations, we support both point and probabilistic forecasting within a unified dual-head architecture. The deterministic head outputs point predictions $\hat{y}_i$, optimized by mean square error (MSE), while the probabilistic head parameterizes a Student-t distribution with mean $\mu_i$, scale $\sigma_i$ and degrees of freedom $\nu$ (fixed $\nu=5$ empirically) encouraging the model to capture heavy-tailed uncertainties via negative log-likelihood (NLL). Formally, the joint training objective can be formulated as:
\begin{equation}
\begin{aligned}
\mathcal{L}_{\text{task}}
&= \frac{1}{N}\sum_{i=1}^{N}(y_i-\hat{y}_i)^2
+ \beta \cdot \frac{1}{N}\sum_{i=1}^{N}\Bigg[
\frac{1}{2}\log\!\big(\pi(\nu-2)\sigma_i^2\big) \\
&\qquad\qquad\qquad
+ \frac{\nu+1}{2}\log\!\left(1+\frac{(y_i-\mu_i)^2}{(\nu-2)\sigma_i^2}\right)
\Bigg],
\end{aligned}
\end{equation}
where $y_i$ is the ground-truth value of the forecasting target derived from the original time series $\mathbf{x}$, and $\beta$ balances the deterministic and probabilistic objectives. To stabilize optimization under federated heterogeneity, we adopt a warm-up schedule: only the deterministic head is trained in the initial $R_{\text{warm}}$ rounds, after which the probabilistic head is gradually activated by annealing $\beta=0$ to $1$. This training design ensures accurate point forecasts first, followed by well-calibrated probabilistic predictions.

\paragraph{Local Optimization Objective.} Each client jointly optimizes the three components introduced above. The reconstruction objective ensures temporal fidelity, the adversarial loss suppresses subdomain-specific bias, and the prototype alignment term promotes semantic consistency across clients. The overall local objective can be formulated as:
\begin{equation}
\mathcal{L} = \mathcal{L}_{\text{task}} + \lambda_{\text{dom}} \cdot \mathcal{L}_{\text{dom}} + \lambda_{\text{align}} \cdot \mathcal{L}_{\text{align}},
\end{equation}
where $\lambda_{\text{dom}}$ and $\lambda_{\text{align}}$ are balancing coefficients controlling the contribution of subdomain adversarial regularization and prototype alignment, respectively. Only the encoder $\theta_E$ and the local prototypes $\bar{\mathbf{p}}$ are shared with the server, as the goal is to collaboratively learn a domain-generalizable encoder. The decoder $\theta_D$ and sub-domain classifier $\theta_C$ are retained locally as auxiliary components to facilitate denoising and mitigate domain-specific bias. Their exclusion from aggregation prevents local semantics from interfering with the global encoder, enhancing generalization across domains.

\subsection{Domain-aware Aggregation on Sever}
Naive FL aggregation~\cite{mcmahan2017communication} often dilutes discriminative signals and exacerbates domain entanglement under heterogeneous time series. To address this, we propose Domain-aware Aggregation (DaG), which integrates invariance and alignment into encoder aggregation. The server receives local encoders $\{\theta_{E,k}\}_{k=1}^{K}$ and prototypes $\{\bar{\mathbf{p}}_k\}_{k=1}^{K}$, and first updates a global prototype $\mathbf{p}_g$ via FedAvg. To measure invariance, we train a global domain classifier $f_d:\mathbb{R}^d \to \mathbb{R}^K$ using local prototypes, where the client (domain) index serves as the category label. After training, the classifier is fixed for inference. For each prototype $\bar{\mathbf{p}}_k$, the Domain Invariance Score is defined as $\ell_k=\mathcal{L}_{\mathrm{CE}}(f_d(\bar{\mathbf{p}}_k),k)$: domain-specific prototypes yield low $\ell_k$, while domain-invariant ones approach uniform predictions and higher $\ell_k$. To complement this, we also measure semantic alignment with the global prototype (as Eq.~\ref{eq:align}), which captures residual domain drift beyond classifier predictions. We then integrate the two signals into a unified score: $s_k = -\alpha \ell_k - \beta \delta_k$, with $\alpha,\beta>0$ balancing discriminability and alignment. Final aggregation is obtained via softmax weighting, which can be formulated as: 
\begin{equation}
\theta_E^{g} = \sum\nolimits_{k=1}^{K} \operatorname{Softmax}(s_k) \cdot \theta_{E,k}.
\end{equation}
By dynamically weighting client contributions based on both invariance and alignment, DaG prevents domain-dominant clients from skewing the global encoder and enhances cross-domain generalization. The global prototype $\mathbf{p}^g$ and updated global model $\theta_E^g$ are broadcast to clients for the next training round until convergence. The workflow description and algorithm are provided in \textbf{Algorithm~\ref{alg:fedtrl_workflow}}.

\begin{table*}[tbh]
    \centering
    \caption{In-domain forecasting results (averaged across forecasting horizons $\{96, 192, 336, 720\}$). \textbf{Bold}: the best; \underline{Underline}: the second best. The $^\dagger$ symbol denotes that based on the FedAvg aggregation protocol. Full results are provided in \textbf{Table \ref{tab:forecasting-full}}.}
    \label{tab:forecasting-in-sim}
        \resizebox{.95\textwidth}{!}{
        \begin{tabular}{l|cc|cccccccc|cccc|cc}
        \toprule
         & \multicolumn{2}{c|}{\textsc{Ours}}     
         & \multicolumn{8}{c|}{\textsc{Federated Self-supervised}$^\dagger$}                          
         & \multicolumn{4}{c|}{\textsc{Federated FMs}} 
         & \multicolumn{2}{c}{\textsc{Centralized}} \\
        Methods & \multicolumn{2}{c|}{\textbf{FedTRL}} 
                & \multicolumn{2}{c}{SimMTM} 
                & \multicolumn{2}{c}{PatchTST} 
                & \multicolumn{2}{c}{TimeMAE} 
                & \multicolumn{2}{c|}{CoST} 
                & \multicolumn{2}{c}{FFTS} 
                & \multicolumn{2}{c|}{FedAvg} 
                & \multicolumn{2}{c}{All Mixed} \\
                
        Metric & MSE & MAE   
               & MSE & MAE   
               & MSE & MAE   
               & MSE & MAE   
               & MSE & MAE   
               & MSE & MAE   
               & MSE & MAE  
               & MSE & MAE  \\
        \midrule
        ETTh1 & \textbf{0.448} & \textbf{0.472} & 0.495 & 0.512 & 0.478 & 0.495 & 0.502 & 0.518 & 0.521 & 0.540 & 0.463 & \underline{0.485} & 0.476 & 0.495 & \underline{0.458} & 0.488 \\ 
        
        ETTh2 & \textbf{0.382} & \textbf{0.414} & 0.442 & 0.468 & 0.410 & 0.432 & 0.455 & 0.474 & 0.471 & 0.490 & \underline{0.395} & \underline{0.419} & 0.410 & 0.433 & 0.399 & 0.424 \\
        
        ETTm1 & \underline{0.375} & \underline{0.402} & 0.415 & 0.439 & 0.386 & 0.408 & 0.421 & 0.444 & 0.436 & 0.455 & 0.380 & 0.406 & 0.399 & 0.406 & \textbf{0.372} & \textbf{0.398} \\
        
        ETTm2 & \textbf{0.278} & \textbf{0.330} & 0.324 & 0.352 & 0.296 & 0.340 & 0.330 & 0.360 & 0.345 & 0.369 & 0.290 & 0.338 & 0.299 & 0.341 & \underline{0.286} & \underline{0.336} \\
        
        Electricity & \textbf{0.181} & \textbf{0.272} & 0.216 & 0.306 & 0.192 & 0.288 & 0.225 & 0.315 & 0.242 & 0.329 & \underline{0.187} & \underline{0.280} & 0.198 & 0.293 & 0.189 & 0.285 \\
        
        Traffic  & \underline{0.426} & \underline{0.288} & 0.472 & 0.315 & 0.438 & 0.295 & 0.485 & 0.322 & 0.498 & 0.335 & \textbf{0.420} & \textbf{0.285} & 0.434 & 0.294 & 0.428 & 0.289 \\
        
        Weather & \textbf{0.241} & \textbf{0.282} & 0.285 & 0.310 & 0.260 & 0.295 & 0.295 & 0.312 & 0.305 & 0.322 & \underline{0.252} & \underline{0.290} & 0.264 & 0.299 & 0.254 & 0.292 \\
        
        Exchange & \textbf{0.375} & \textbf{0.423} & 0.460 & 0.480 & 0.405 & 0.445 & 0.468 & 0.490 & 0.472 & 0.492 & \underline{0.390} & 0.440 & 0.403 & 0.444 & \underline{0.390} & \underline{0.438} \\
        \midrule
        \rowcolor[gray]{0.95}
        \bf $1^{\text{st}}$ Count & \multicolumn{2}{c|}{\textbf{12}} & \multicolumn{2}{c}{0} & \multicolumn{2}{c}{0} & \multicolumn{2}{c}{0} & \multicolumn{2}{c|}{0} & \multicolumn{2}{c}{2} & \multicolumn{2}{c|}{0} & \multicolumn{2}{c}{2} \\
        \bottomrule
        \end{tabular}%
        }
        \vspace{-5pt}
\end{table*}
\begin{table*}[tbh]
  \centering
  \caption{Full-shot point forecasting results (averaged across $\{96, 192, 336, 720\}$). \boldres{Bold}: the best; \secondres{Underline}: the second best. Note that FedTRL-trained models are adapted to the target dataset with only one epoch.}
  \resizebox{1\textwidth}{!}{
    \begin{tabular}{l|cc|cc|cc|cc|cc|cc|cc|cc|cc|cc}
    \toprule
    Models & \multicolumn{2}{c|}{\bf FedTRL}  
           & \multicolumn{2}{c|}{TimeMixer} 
           & \multicolumn{2}{c|}{TimeXer} 
           & \multicolumn{2}{c|}{iTransformer} 
           & \multicolumn{2}{c|}{PatchTST} 
           & \multicolumn{2}{c|}{TimesNet} 
           & \multicolumn{2}{c|}{DLinear} 
           & \multicolumn{2}{c|}{Autoformer} 
           & \multicolumn{2}{c|}{Stationary} 
           & \multicolumn{2}{c}{LightTS}  \\
\cmidrule{2-21}    
    Metrics & \multicolumn{1}{c}{MSE} & \multicolumn{1}{c|}{MAE} 
            & \multicolumn{1}{c}{MSE} & \multicolumn{1}{c|}{MAE} 
            & \multicolumn{1}{c}{MSE} & \multicolumn{1}{c|}{MAE} 
            & \multicolumn{1}{c}{MSE} & \multicolumn{1}{c|}{MAE} 
            & \multicolumn{1}{c}{MSE} & \multicolumn{1}{c|}{MAE} 
            & \multicolumn{1}{c}{MSE} & \multicolumn{1}{c|}{MAE} 
            & \multicolumn{1}{c}{MSE} & \multicolumn{1}{c|}{MAE} 
            & \multicolumn{1}{c}{MSE} & \multicolumn{1}{c|}{MAE} 
            & \multicolumn{1}{c}{MSE} & \multicolumn{1}{c|}{MAE} 
            & \multicolumn{1}{c}{MSE} & \multicolumn{1}{c}{MAE}   \\
    \midrule
    ETTh1 & \boldres{0.372} & \boldres{0.399} & 0.448 & 0.443 & 0.491 & 0.489 & 0.474 & 0.476 & \secondres{0.413} & \secondres{0.430} & 0.458 & 0.450 & 0.422 & 0.437 & 0.496 & 0.487 & 0.570 & 0.537 & 0.491 & 0.479 \\
    
    ETTh2 & \boldres{0.330} & \secondres{0.383} & 0.364 & 0.394 & \secondres{0.356} & 0.398 & 0.372 & 0.411 & \boldres{0.330} & \boldres{0.379} & 0.414 & 0.427 & 0.431 & 0.446 & 0.450 & 0.459 & 0.526 & 0.516 & 0.602 & 0.543 \\
    
    ETTm1 & \boldres{0.316} & \boldres{0.369} & 0.381 & 0.395 & 0.381 & 0.402 & 0.368 & 0.396 & \secondres{0.351} & 0.380 & 0.400 & 0.406 & 0.357 & \secondres{0.378} & 0.588 & 0.517 & 0.481 & 0.456 & 0.435 & 0.437 \\
    
    ETTm2 & \secondres{0.257} & \boldres{0.313} & 0.275 & 0.323 & 0.275 & 0.329 & 0.271 & 0.331 & \boldres{0.255} & \secondres{0.315} & 0.291 & 0.333 & 0.267 & 0.333 & 0.327 & 0.371 & 0.306 & 0.347 & 0.409 & 0.436 \\

    Electricity & \boldres{0.143} & \boldres{0.247} & 0.175 & 0.272 & 0.171 & 0.273 & \secondres{0.161} & 0.256 & \secondres{0.161} & \secondres{0.252} & 0.192 & 0.295 & 0.188 & 0.295 & 0.227 & 0.338 & 0.193 & 0.296 & 0.229 & 0.329 \\

    Traffic & \boldres{0.355} & \boldres{0.256} & 0.405 & 0.284 & 0.419 & 0.302 & \secondres{0.385} & 0.275 & 0.390 & \secondres{0.263} & 0.620 & 0.336 & 0.433 & 0.295 & 0.628 & 0.379 & 0.624 & 0.340 & 0.622 & 0.392 \\

    Weather & \boldres{0.214} & \boldres{0.252} & 0.241 & 0.272 & \secondres{0.227} & 0.268 & 0.235 & 0.274 & 0.255 & \secondres{0.264} & 0.259 & 0.287 & 0.248 & 0.300 & 0.338 & 0.382 & 0.288 & 0.314 & 0.261 & 0.312 \\
    
    Exchange & \boldres{0.372} & \boldres{0.390} & 0.428 & 0.445 & 0.474 & 0.466 & 0.442 & 0.457 & \secondres{0.400} & \secondres{0.431} & 0.674 & 0.591 & 0.556 & 0.542 & 0.762 & 0.686 & 0.954 & 0.669 & 0.704 & 0.616 \\
    \midrule
     \rowcolor[gray]{0.95}
    \bf $1^{\text{st}}$ Count & \multicolumn{2}{c|}{\textbf{12}} & \multicolumn{2}{c|}{0} & \multicolumn{2}{c|}{0} & \multicolumn{2}{c|}{0} & \multicolumn{2}{c|}{\secondres{3}} & \multicolumn{2}{c|}{0} & \multicolumn{2}{c|}{0} & \multicolumn{2}{c|}{0} & \multicolumn{2}{c|}{0} & \multicolumn{2}{c}{0} \\
    \bottomrule
    \end{tabular}}
  \label{tab:cross_domain}
  \vspace{-4pt}
\end{table*}
\section{Main Results}\label{sec:main_res}
We evaluate FedTRL across multiple forecasting settings (details in \textbf{Appendix~\ref{app:baseline_bench}}). First, we compare it with advanced unsupervised representation learning methods on in-domain forecasting (Sec.~\ref{subsec:in-domain}). We then assess its effectiveness in training large time series forecasting models (Sec.~\ref{subsec:large_model}), as well as its full-shot adaption and zero-shot generalization (Sec.~\ref{subsec:zero_point} and Sec.~\ref{subsec:zero_pro}) on widely recognized benchmarks and real-world weather station datasets. Finally, we analyze FedTRL`s benefits for unsupervised pretraining, conduct ablation and hyperparameter studies, and examine its scalability (Sec.~\ref{subsec:analysis}). FedTRL proves that federated pretraining can scale into large TSFMs, achieving competitive deterministic point or probabilistic forecasting performance while even surpassing advanced centralized models. Our code for review is available at \url{https://anonymous.4open.science/r/FedTRL-Review-7BDA}.

\subsection{In-domain Point Forecasting}\label{subsec:in-domain}
\paragraph{Setup.} We conduct experiments on eight benchmark datasets from TSLib~\cite{wu2022timesnet}. Each dataset is treated as an independent client during federated training (without probabilistic head). For fine-tuning, we adapt the pretrained model to the corresponding target datasets. The look-back window is fixed at 512, and forecasting performance is evaluated across four horizons \{96,192,336,720\} using MSE and MAE. Model architectures and training configurations are summarized in \textbf{Table~\ref{tab:configuration}}.

\paragraph{Results.} We compare FedTRL with three categories of baselines: (i) Masked reconstruction-based methods, including SimMTM~\cite{dong2023simmtm} and PatchTST~\cite{nie2022time}; (ii) Contrastive Learning-based methods, including TS-TCC~\cite{eldele2021time} and CoST~\cite{woo2022cost}; (iii) FL methods, including FFTS~\cite{chen2025federated} and FedAvg \citep{mcmahan2017communication}. Each client perform TimeDART~\citep{wang2025timedartdiffusionautoregressivetransformer} in FedAvg. Across eight datasets, FedTRL achieves consistent improvements over both federated and centralized baselines. It not only surpasses existing federated pretraining approaches by clear margins, but also outperforms centralized training on several datasets, showing that properly aligned and aggregated FL can exceed centralized solutions. Its notable gains over FedAvg highlight robustness under heterogeneous clients.

\subsection{Scaling to Time Series Foundation Models}\label{subsec:large_model}
FedTRL can be scaled to learn a large TSFMs from cross-domain datasets, thereby enabling strong generalization capabilities on unseen data. To evaluate this capability, we pretrain a model on Time-MoE-300B~\cite{shi2024time}, which spans nine domains and contains over 300 billion time points. Each domain is treated as an independent client, allowing FedTRL to learn cross-domain patterns in a federated setting. Note that all evaluation datasets were excluded from the pre-training dataset to ensure fairness. Model and training setups for large-scale pretraining are in \textbf{Table~\ref{tab:configuration}}. 

\begin{table*}[tbh]
  \centering
  \caption{Zero-shot point forecasting results, averaged over horizons ${96,192,336,720}$ with look-back lengths ${512,1024,2048,3072}$. Bold and underline indicate the best and second-best results; “–” denotes not evaluable. TSLib results are from~\citep{shi2024time}; full RW-Bench results are in \textbf{Table~\ref{tab:zero_shot_full_rw}}, with VisionTS~\cite{chen2024visionts} and Sundial~\cite{liu2025sundial} comparisons in \textbf{Table~\ref{tab:R6_zero_shot_forecasting}}.}
  \setlength{\tabcolsep}{1.4pt}
  \setlength{\extrarowheight}{3pt}
  \resizebox{1\textwidth}{!}{
    \begin{tabular}{l|cc|cc|cc|cc|cc|cc|cc|cc|cc|cc|cc|cc}
    \toprule
    \bf Models & \multicolumn{2}{c|}{\bf FedTRL} & \multicolumn{2}{c|}{\bf Time-MoE$_{b}$} & \multicolumn{2}{c|}{\bf Time-MoE$_{l}$} & \multicolumn{2}{c|}{\bf Time-MoE$_{u}$} &  \multicolumn{2}{c|}{\bf Moirai$_{s}$} & \multicolumn{2}{c|}{\bf Moirai$_{b}$} & \multicolumn{2}{c|}{\bf Moirai$_{l}$} & \multicolumn{2}{c|}{\bf TimesFM} & \multicolumn{2}{c|}{\bf Moment} & \multicolumn{2}{c|}{\bf Chronos$_{s}$} & \multicolumn{2}{c|}{\bf Chronos$_{b}$} & \multicolumn{2}{c}{\bf Chronos$_{l}$} \\
    \cmidrule(lr){1-1} \cmidrule(lr){2-3} \cmidrule(lr){4-9} \cmidrule(lr){10-15} \cmidrule(lr){16-17} \cmidrule(lr){18-19} \cmidrule(lr){20-25}
    Metrics & MSE   & MAE   & MSE   & MAE  & MSE   & MAE   & MSE   & MAE   & MSE   & MAE   & MSE   & MAE   & MSE   & MAE   & MSE   & MAE   & MSE   & MAE   & MSE   & MAE   & MSE   & MAE & MSE   & MAE  \\
    \midrule
    ETTh1 & \secondres{0.399} & \secondres{0.420} & 0.400 & 0.424 & \boldres{0.394} & \boldres{0.419} & 0.412 & 0.426 & 0.428 & 0.427 & 0.417 & \boldres{0.419} & 0.480 & 0.439 & 0.473 & 0.443 & 0.683 & 0.566 & 0.545 & 0.472 & 0.591 & 0.468 & 0.588 & 0.466 \\
    
    ETTh2 & \boldres{0.350} & \secondres{0.380} & 0.366 & 0.404 & 0.405 & 0.415 & 0.371 & 0.399 & \secondres{0.361} & 0.384 & 0.362 & 0.382 & 0.367 & \boldres{0.377} & 0.392 & 0.406 & \secondres{0.361} & 0.409 & 0.424 & 0.430 & 0.405 & 0.410 & 0.455 & 0.427 \\

    ETTm1 & \boldres{0.349} & \boldres{0.383} & 0.394 & 0.415 & 0.376 & 0.405 & \secondres{0.356} & 0.391 & 0.436 & 0.410 & 0.406 & \secondres{0.385} & 0.422 & 0.391 & 0.433 & 0.418 & 0.670 & 0.536 & 0.640 & 0.499 & 0.645 & 0.500 & 0.555 & 0.465 \\
    
    ETTm2 & \boldres{0.282} & \boldres{0.330} & 0.317 & 0.365 & 0.316 & 0.361 & \secondres{0.288} & 0.344 & 0.307 & 0.347 & 0.311 & \secondres{0.337} & 0.329 & 0.343 & 0.328 & 0.346 & 0.316 & 0.365 & 0.349 & 0.380 & 0.310 & 0.350 & 0.295 & 0.338 \\
    
    Weather & \boldres{0.238} & \secondres{0.279} & 0.265 & 0.297 & 0.270 & 0.300 & 0.270 & 0.300 & 0.275 & 0.286 & 0.287 & 0.281 & \secondres{0.264} & \boldres{0.273} & -- & -- &0.294 & 0.326 & 0.300 & 0.318 & 0.292 & 0.315 & 0.279 & 0.306 \\
    
    \midrule
    RW-Bench &  \bf 0.599 & \bf 0.519 & 0.621 & 0.536 & \secondres{0.608} & \secondres{0.525} & -- & -- & 1.328 & 0.627 & 1.029 & 0.598 & 1.244 & 0.606 & 0.991 & 0.747 & 1.091 & 0.756 & 1.097 & 0.683 & 1.156 & 0.697 & 1.117  &  0.683 \\
    \midrule
    \rowcolor[gray]{0.95}
    \bf 1$^{st}$ Count &
    \multicolumn{2}{c|}{\boldres{8}} &
    \multicolumn{2}{c|}{1} &
    \multicolumn{2}{c|}{\secondres{2}} &
    \multicolumn{2}{c|}{1} &
    \multicolumn{2}{c|}{0} &
    \multicolumn{2}{c|}{1} &
    \multicolumn{2}{c|}{\secondres{2}} &
    \multicolumn{2}{c|}{0} &
    \multicolumn{2}{c|}{1} &
    \multicolumn{2}{c|}{0} &
    \multicolumn{2}{c|}{1} &
    \multicolumn{2}{c}{0} \\
    \bottomrule
    \end{tabular}}
    \label{tab:zero_shot}%
    \vspace{-3pt}
\end{table*}%

\begin{table*}[tbh]
    \caption{Zero-shot results on GIFT-eval, which contains 32 datasets characterized by a variety of frequencies/variate numbers/prediction lengths. We evaluate performance using $100$ generated series, being consistent with~\citep{woo2024unifiedtraininguniversaltime}. Lower MASE/CRPS indicates a better performance. Rank assigns a numerical ranking of all $97$ configurations. Baseline results are officially from~\citep{aksu2024gift}.}
    \label{tab:gift_eval}
    \centering
    \renewcommand{\multirowsetup}{\centering}
    \resizebox{1\textwidth}{!}{
    \begin{tabular}{l|cccccccccccccccc}
    \toprule
    \textbf{Type} & \multicolumn{4}{c}{\textbf{Statistical Methods}} & \multicolumn{5}{c}{\textbf{Supervised Task-Specific Models}} & \multicolumn{4}{c}{\textbf{Zero-Shot Models}} & \multicolumn{3}{c}{\textbf{Zero-Shot Models (FL)}} \\
    \cmidrule(lr){1-1} \cmidrule(lr){2-5} \cmidrule(lr){6-10} \cmidrule(lr){11-14} \cmidrule(lr){15-17}
        \multirow{2}{*}{\textbf{Model}}
            & \multirow{2}{*}{{Na\"ive}} & {Seasonal} & {Auto} & {Auto} & \multirow{2}{*}{{DeepAR}} & \multirow{2}{*}{{TiDE}} & \multirow{2}{*}{{N-BEATS}} & \multirow{2}{*}{{PTST.}} & \multirow{2}{*}{{iTrans.}} & \multirow{2}{*}{{TimesFM}} & \multirow{2}{*}{{TabPFN}} & \multirow{2}{*}{{Chronos}} & \multirow{2}{*}{{Moirai}} & \multirow{2}{*}{FedAvg} & \multirow{2}{*}{FFTS} & \textbf{FedTRL} \\
            & & {Na\"ive} & {ARIMA} & {Theta} &  &  &  &  &  &  &  &  & &  & &  \textbf{(Ours)} \\
        \midrule
            \textbf{MASE} & 1.260 & 1.000 & 0.964 & 0.978 & 1.206  & 0.980 & 0.842 & 0.762 & 0.802 & \secondres{0.680} & 0.748 & 0.786 &  0.809 &  0.872 &  0.766 &\boldres{0.675}\\
            \textbf{CRPS}  & 1.383 & 1.000 & 0.770 & 1.051 & 0.721 & 0.652 & 0.689 & 0.496 & 0.524 & \secondres{0.465} & 0.480 & 0.551 &  0.515 & 0.700 & 0.521 &\boldres{0.460}\\
            \textbf{Rank}  & 28.072  & 26.175 & 21.515 & 24.031 & 18.938 & 18.557 & 21.381 & 10.052 & 11.320 & \boldres{8.237} & \secondres{8.268} & 14.309 & 10.175 & 18.231 & 9.787 & 8.676 \\
        \bottomrule
    \end{tabular}}
    \vspace{-2pt}
\end{table*}
\subsubsection{Full/Zero-shot Point Forecasting}\label{subsec:zero_point} 
We evaluate FedTRL on widely recognized forecasting benchmarks. Specifically, we adopt TSFLib~\cite{wu2022timesnet} for evaluating point forecasting. The ECL is excluded to avoid data leakage. Beyond standard benchmarks, we introduce RW-Bench, a collection of real-world weather datasets from 15 geographically diverse regions. Unlike curated academic corpora, RW-Bench retains  noisy observations, providing a realistic testbed for zero-shot forecasting. Covering 10 variables at hourly resolution, it offers a timely (by Aug 2025) and challenging benchmark to evaluate TSFM`s applicability under real-world conditions. Dataset statistics are provided in \textbf{Appendix~\ref{app:rw-bench}}.

\paragraph{Full-shot Adaption.} We compare FedTRL against several advanced deep time series forecasting models, including TimeMixer~\cite{wang2024timemixer}, TimeXer~\cite{wang2024timexer}, PatchTST~\cite{nie2022time}, TimesNet~\cite{wu2022timesnet}, DLinear~\cite{zeng2023transformers}, iTransformer~\cite{liu2023itransformer}, Autoformer~\cite{wu2021autoformer}, Non-Stationary Transformer~\cite{liu2022non}, and LightTS~\cite{zhang2022less}, as shown in \textbf{Table~\ref{tab:cross_domain}}. FedTRL-trained model outperforms all baselines, showing notable advantages in both complex and simple domains despite requiring only 1 adaptation epoch.

\paragraph{Zero-shot Inference.} We evaluate the FedTRL-trained TSFM on TSLib and RW-Bench, following the evaluation protocol of~\cite{shi2024time}. \textbf{Table~\ref{tab:zero_shot}} shows that FedTRL consistently achieves state-of-the-art performance, delivering consistent MSE reductions across 20 datasets (5 in TSLib and 15 in RW-Bench). In particular, it demonstrates strong robustness on complex real-world weather station data, where temporal variability and distribution shifts are most pronounced. Notably, FedTRL-trained models outperform Time-MoE across all three reported scales, despite using fewer parameters than Time-MoE$_\text{ultra}$ (details about models are provided in \textbf{Table~\ref{tab:scale_config}}). They also surpass TimesFM and Moment, which require additional fine-tuning for variable-length adaptation, whereas FedTRL operates fully zero-shot. This highlights that our optimization–aggregation design enables efficient pretraining of large time series foundation models under federated settings, producing representations that are both generalizable and well-calibrated across diverse domains. These suggest that FedTRL not only matches but in some cases exceeds centralized paradigms, demonstrating the viability of federated pretraining as a powerful approach for TSFM training.

\subsubsection{Zero-shot Probabilistic Forecasting}\label{subsec:zero_pro} 
Beyond point forecasting, probabilistic forecasting is important to real-world applications with uncertainty. We experiment on GIFT-Eval~\cite{aksu2024gift} and FEV leaderboard~\cite{ansari2024chronos}, following their official evaluation suite and assessing point (MASE) and probabilistic (CPRS and WQL) metrics. Note that all evaluated datasets are excluded from the pre-training dataset.

\begin{figure*}[tbh]
    \centering
    \includegraphics[width=.95\textwidth]{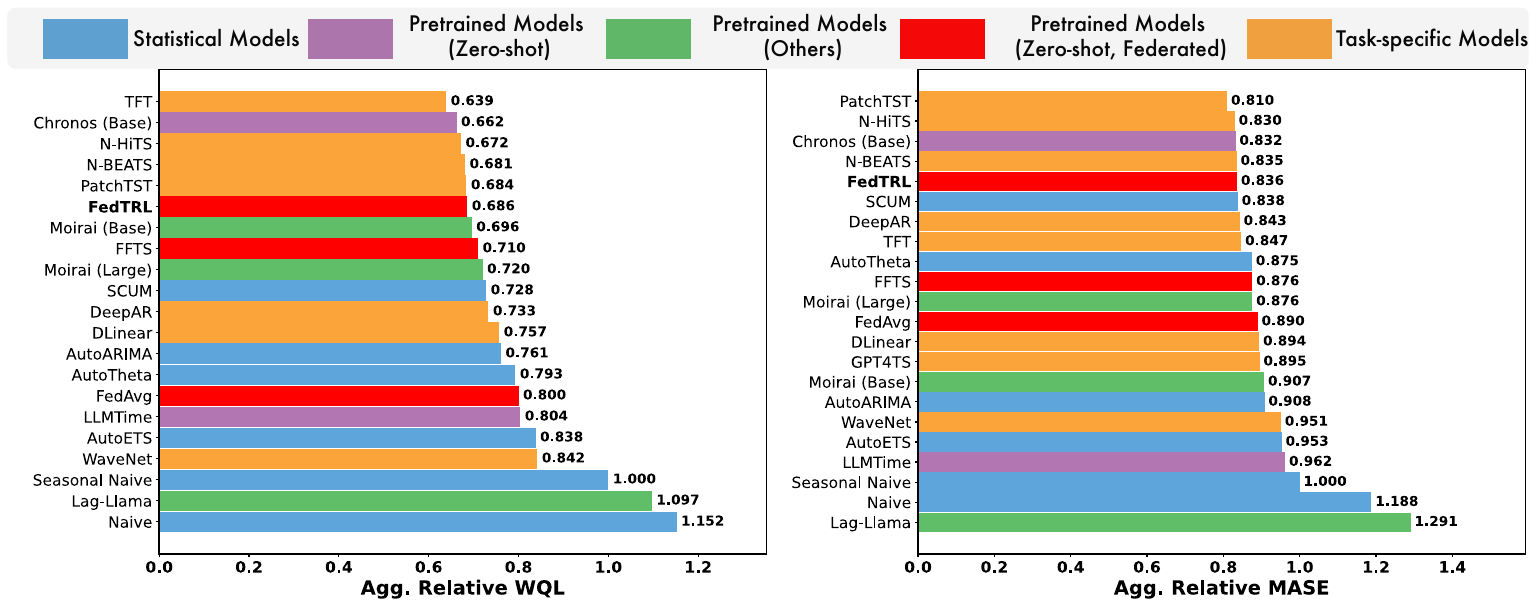}
    \caption{Results on FEV leaderboard. Baseline includes statistical methods, task-specific deep models trained on each dataset, and pre-trained foundation models. Pre-trained Models that have seen several datasets during pre-training are denoted as Pre-trained Models (Other). Lower MASE/WQL is better. FedTRL reports probabilistic forecasts using 20 generated series, following~\cite{ansari2024chronos}.}
    \label{fig:fev_eval}
    \vspace{-4pt}
\end{figure*}
\paragraph{GIFT-eval.} The results on GIFT-eval are shown in \textbf{Table~\ref{tab:gift_eval}}. This benchmark covers 23 datasets and 15 baselines, including statistical models, task-specific methods (DeepAR~\cite{salinas2020deepar}, TiDE~\cite{das2023long}, N-BEATS~\cite{oreshkin2020nbeatsneuralbasisexpansion}, PatchTST~\cite{nie2022time}, iTransformer~\cite{liu2023itransformer}), time series FMs (TimesFM~\cite{das2024decoder}, TabPFN~\cite{hoo2024tabular}, Chronos~\cite{ansari2024chronos}), and federated FMs (FedAvg~\cite{mcmahan2017communication}, FFTS~\cite{chen2025federated}). Across all unseen datasets, FedTRL achieves the best MASE/CRPS, and even surpasses most advanced pretrained foundation models, highlighting its effectiveness for large-scale cross-domain pretraining.

\paragraph{FEV Leaderboard.} We evaluate FedTRL-trained model on FEV leaderboard established by AutoGluon~\cite{ansari2024chronos}, which covers 27 datasets for probabilistic forecasting. As shown in \textbf{Fig.~\ref{fig:fev_eval}}, our model achieves zero-shot forecasting performance that surpasses most statistical and task-specific models trained with in-domain supervision. Among pretrained FMs, FedTRL ranks second overall (outperforming Chronos) and ranks first among federated FMs (outperforming FFTS/FedAvg). These improvements benefit from the diffusion-based reconstruction strategy, which integrates both patch- and point-wise reconstruction to capture fine-grained temporal dynamics. This enriched representation is further complemented by the dual-head design with a scheduling strategy that first secures accurate point forecasts before introducing uncertainty modeling. Together with DaG aggregation, these components allow FedTRL to scale federated learning into an effective training framework for TSFMs in both point and probabilistic forecasting.

\subsection{Framework Analysis}\label{subsec:analysis}
\paragraph{Ablation Study.} We evaluate the contribution of three key components: (i) GRL-based adversarial domain regularization, (ii) prototype alignment, and (iii) domain-aware aggregation (DaG). Variants include \emph{w/o} GRL, \emph{w/o} Proto, and \emph{w/o} DaG (replaced with vanilla FedAvg). Results on point forecasting is reported in \textbf{Table~\ref{tab:abla_point}}, with probabilistic forecasting degradation shown in \textbf{Fig.~\ref{fig:abla_pro}}. Across both settings, removing any component consistently degrades performance. GRL stabilizes adversarial training and enforces subdomain invariance, yielding the largest gains in probabilistic metrics. Prototype alignment is crucial for bridging client-level representation gaps, especially in zero-shot transfer, while replacing DaG with FedAvg causes consistent performance drops, underscoring its importance.

\begin{table}[tbh]
    \centering
    \caption{Ablation study on point forecasting tasks.}
    \resizebox{.95\columnwidth}{!}{
    \begin{tabular}{c|ccccccc}
    \toprule
         \multicolumn{2}{c}{\textbf{Tasks}}
          & \multicolumn{2}{c}{\textbf{In-domain}}
          & \multicolumn{2}{c}{\textbf{Full-shot}}
          & \multicolumn{2}{c}{\textbf{Zero-shot}} \\
        \cmidrule(lr){3-4}\cmidrule(lr){5-6}\cmidrule(lr){7-8}
        \multicolumn{2}{c}{\textbf{Variants}} & MSE & MAE & MSE & MAE & MSE & MAE \\
        \midrule
        \multirow{4}{*}{\rotatebox{90}{ETTh1}}
        & \cellcolor{gray!15} \bf FedTRL         & \cellcolor{gray!15} \textbf{0.448} & \cellcolor{gray!15} \textbf{0.472} & \cellcolor{gray!15} \bf 0.372 & \cellcolor{gray!15} \bf 0.399 & \cellcolor{gray!15} \bf 0.399 &\cellcolor{gray!15}  \bf 0.420 \\
        & \emph{w/o} GRL & 0.464 & 0.486 & 0.377 & 0.400 & 0.407 & 0.435 \\
        & \emph{w/o} Proto & 0.458 & 0.482 & 0.382 & 0.410 & 0.417 & 0.432 \\
        & \emph{w/o} DaG  & 0.472 & 0.494 & 0.394 & 0.421 & 0.436 & 0.459 \\
        \midrule
        \multirow{4}{*}{\rotatebox{90}{ETTm1}}
        & \cellcolor{gray!15} \bf FedTRL         & \cellcolor{gray!15} \textbf{0.375} & \cellcolor{gray!15} \textbf{0.402} & \cellcolor{gray!15} \bf 0.316 & \cellcolor{gray!15} \bf 0.369 & \cellcolor{gray!15} \bf 0.350 & \cellcolor{gray!15} \bf 0.380 \\
        & \emph{w/o} GRL & 0.388 & 0.414 & 0.324 & 0.372 & 0.359 & 0.384 \\
        & \emph{w/o} Proto & 0.382 & 0.408 & 0.320 & 0.372 & 0.361 & 0.384 \\
        & \emph{w/o} DaG  & 0.400 & 0.420 & 0.333 & 0.364 & 0.380 & 0.399 \\
        \midrule
        \multirow{4}{*}{\rotatebox{90}{Weather}}
        & \cellcolor{gray!15} \bf FedTRL         & \cellcolor{gray!15} \textbf{0.241} & \cellcolor{gray!15} \textbf{0.282} & \cellcolor{gray!15} \bf 0.214 & \cellcolor{gray!15} \bf 0.252 & \cellcolor{gray!15} \bf 0.238 & \cellcolor{gray!15} \bf 0.279 \\
        & \emph{w/o} GRL & 0.260 & 0.295 & 0.217 & 0.257 & 0.240 & 0.287 \\
        & \emph{w/o} Proto & 0.252 & 0.290 & 0.222 & 0.259 & 0.251 & 0.282 \\
        & \emph{w/o} DaG  & 0.254 & 0.292 & 0.239 & 0.275 & 0.289 & 0.292 \\
        \bottomrule
    \end{tabular}}
    \label{tab:abla_point}
    \vspace{-4pt}
\end{table}

\begin{figure}[H]
    \centering
    \includegraphics[width=1\columnwidth]{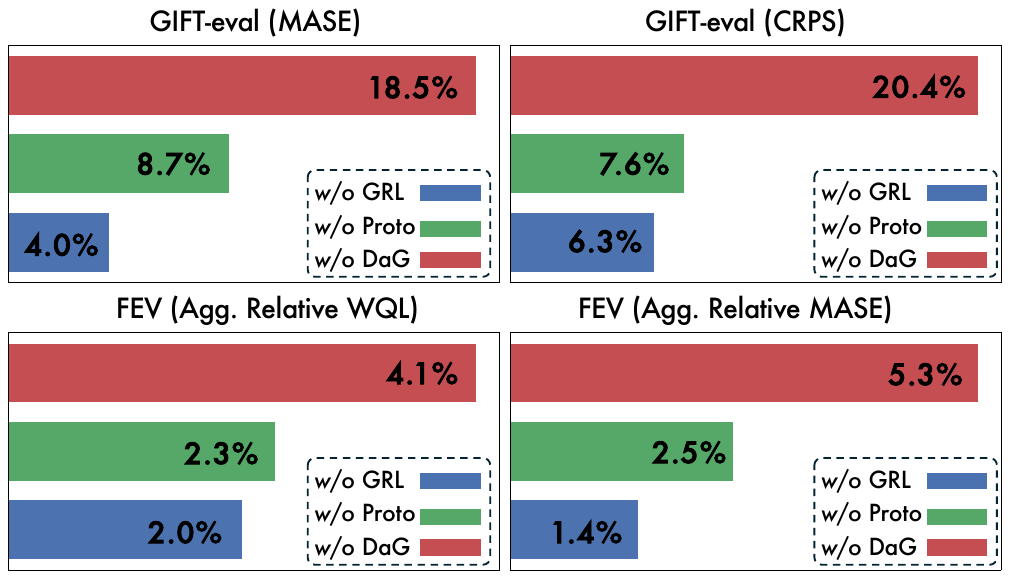}
    \caption{Relative performance drop of FedTRL ablations in zero-shot probabilistic forecasting on GIFT-eval and FEV leaderboard.}
    \label{fig:abla_pro}
    \vspace{-4pt}
\end{figure}

\paragraph{Can FedTRL Improves Masked Reconstruction-based Methods?} To evaluate the generality of our proposed FedTRL, we integrate its key components into other masked reconstruction-based representation methods, including SimMTM and PatchTST. The results are reported in \textbf{Table~\ref{tab:fedtrl_etts}}. We observe that incorporating FedTRL consistently improves performance across four long-term forecasting datasets. These demonstrate that FedTRL can seamlessly enhance different reconstruction-based representation methods, highlighting its effectiveness and broad applicability.

\begin{table}[tbh]
  \centering
  \caption{\small Results on ETT, Weather and Traffic (averaging across four horizons). Improvement (\%) is relative to the baseline.}
  \resizebox{\columnwidth}{!}{
  \begin{tabular}{l|cc|cc|cc|cc}
    \toprule
    \multirow{2}{*}{\bf Model} & \multicolumn{2}{c|}{\bf ETTh1} & \multicolumn{2}{c|}{\bf ETTm1} & 
                             \multicolumn{2}{c|}{\bf Weather} & \multicolumn{2}{c}{\bf Traffic} \\
    \cmidrule(lr){2-9}
     & MSE & MAE & MSE & MAE & MSE & MAE & MSE & MAE \\
    \midrule
    SimMTM$^\dagger$ 
    & 0.495 & 0.512 & 0.415 & 0.439 & 0.285 & 0.310 & 0.472 & 0.315   \\
    \rowcolor{gray!15}
    \quad + FedTRL 
      & 0.476 & 0.501 & 0.401 & 0.430 & 0.263 & 0.301 & 0.459 & 0.305 \\
    \quad \bf Improvement (\%) 
      & \bf 3.84 & \bf 2.15 & \bf 3.37 & \bf 2.05 & \bf 7.88 & \bf 2.48 & \bf 2.82 & \bf 3.13 \\
    \midrule
    PatchTST$^\dagger$
      & 0.478 & 0.495 & 0.386 & 0.408 & 0.260 & 0.295 & 0.438 & 0.295 \\
    \rowcolor{gray!15}
    \quad + FedTRL 
      & 0.461 & 0.487 & 0.378 & 0.404 & 0.252 & 0.288 & 0.430 & 0.291  \\
    \quad \bf Improvement (\%) 
      & \bf 3.56 & \bf 1.62 & \bf 2.07 & \bf 0.98 & \bf 3.08 & \bf 2.37 & \bf 1.83 & \bf 1.36 \\
    \bottomrule
  \end{tabular}}
  \label{tab:fedtrl_etts}
  \vspace{-6pt}
\end{table}

\paragraph{Model Scalability.} Scalability is a key property for large time series foundation models. We evaluate the proposed FedTRL across different backbone sizes (from 38M to 302M, more structure details are provided in \textbf{Table~\ref{tab:scale_config}}) to assess its scalability. Results (\textbf{Fig.~\ref{fig:scale}}) consistently show that FedTRL maintains strong performance improvements on both point and probabilistic forecasting benchmarks as the backbone grows, demonstrating its robustness as a scalable training paradigm for models of varying capacity.

\begin{figure}[tbh]
      \centering
      \includegraphics[width=1\columnwidth]{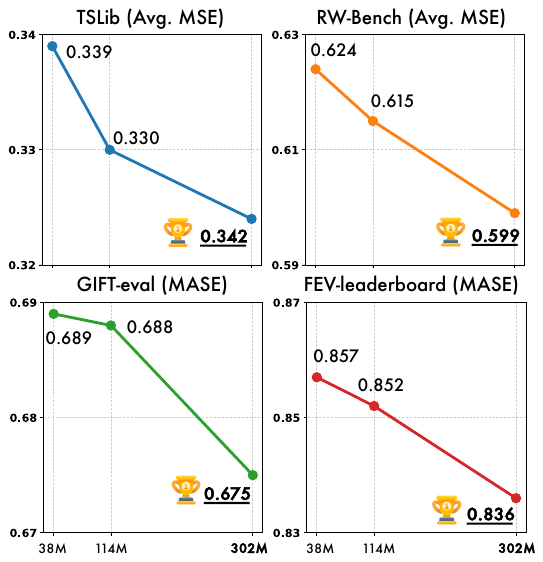}
      \caption{Model scalability across zero-shot both point forecasting and probabilistic forecasting benchmarks.}
      \label{fig:scale}
      \vspace{-2pt}
\end{figure}

\paragraph{Hyperparameter Sensitivity.} We evaluate hyperparameter sensitivity ($\alpha$, $\beta$, $\lambda_{\text{dom}}\&\lambda_{\text{align}}$, $\lambda$) in \textbf{Fig.~\ref{fig:sensi}}, showing consistent trends across zero-shot point forecasting tasks. First, reducing $\alpha$ will gradually degrades performance, highlighting the importance of strong domain discriminability for aggregation. Second, $\beta$ exhibits a U-shaped trend: too small under-utilizes alignment, while too large over-regularizes, confirming that moderate values balance local and global consistency. Third, $\lambda_{\text{dom}},\lambda_{\text{align}}$ are relatively stable within $[0.1,0.3]$, but overly small values harm invariance learning. Finally, $\lambda$ also follows a concave trend, where weak reversal fails to suppress domain bias and strong reversal over-penalizes representations. These show FedTRL is robust across a range of settings, with default parameters achieving a good balance between discriminability and invariance.

\begin{figure}[tbh]
      \centering
      \includegraphics[width=1\columnwidth]{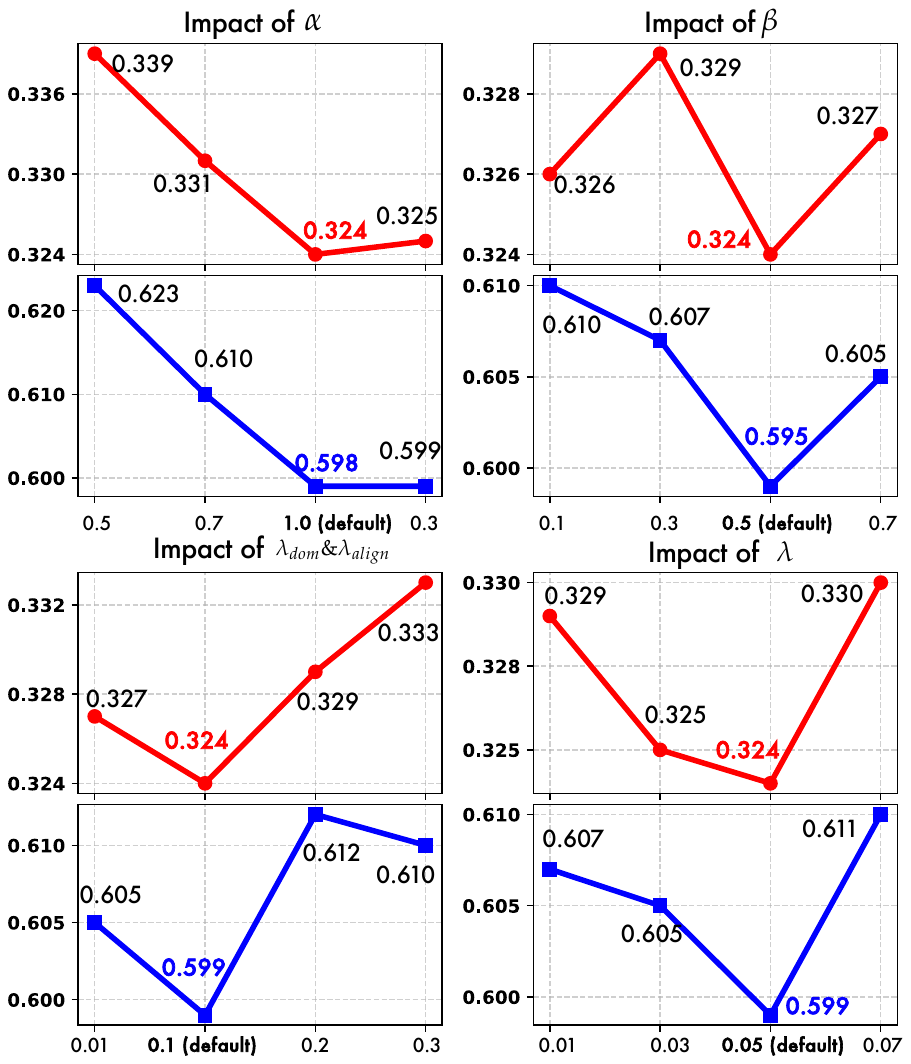}
      \caption{Hyperparameter sensitivity results (Avg. MSE). \textcolor{red}{\bf Red Line}: TSLib results; \textcolor{blue}{\bf Blue Line}: RW-Bench results.}
      \label{fig:sensi}
      \vspace{-2pt}
\end{figure}

\section{Conclusion}
We proposed FedTRL, a federated bi-level heterogeneous learning that tackles both inter- and intra-domain variability in TSFM pretraining. By unifying domain-invariant local optimization with domain-aware aggregation, FedTRL learns patterns that remain consistent within sub-domains while generalizing effectively across domains. Experiments on multiple benchmarks covering point \& probabilistic forecasting show that TSFMs trained with FedTRL not only outperform centralized and FL baselines, but also scale reliably to large sizes, achieving strong zero-shot generalization.
\clearpage

\section*{Impact Statement}
This work proposes a federated pretraining framework for time series foundation models that explicitly handles multi-level data heterogeneity, enabling scalable pretraining without centralized data aggregation. By improving robustness and generalization across diverse and unseen domains, the approach facilitates the development of foundation models applicable to real-world time series settings like energy, climate, transportation, and healthcare. The contributions are methodological and aim to support reliable and general-purpose temporal modeling under realistic data constraints.


\bibliography{example_paper}
\bibliographystyle{icml2026}

\newpage
\appendix
\onecolumn

\paragraph{Roadmap.}
This appendix provides supplementary materials omitted from the main text, together with extended experimental results, theoretical analyses, and additional discussions. Specifically:

\begin{enumerate}
    \item \textbf{Appendix~\ref{sec:more_related}} presents extended related work, covering large-scale \emph{time series foundation models} (Appendix~\ref{sec:tsfm}) and \emph{heterogeneous federated learning} (Appendix~\ref{sec:hfl}).

    \item \textbf{Appendix~\ref{sec:supp}} provides supplementary methodological details, including the formulation of the \emph{Gradient Reversal Layer} (Appendix~\ref{sec:grl}), the design of \emph{sub-domain and global domain classifiers} (Appendix~\ref{sec:subdomain_domain_classifier}), and the complete \emph{workflow of FedTRL} (Appendix~\ref{sec_app:workflow_of_fedtrl}).

    \item \textbf{Appendix~\ref{appendix:theory}} presents formal definitions and theoretical analyses of key concepts in FedTRL, together with proofs establishing domain invariance, domain awareness, and global dynamics.

    \item \textbf{Appendix~\ref{app:baseline_bench}} details the experimental settings, datasets, and baseline implementations used across all forecasting tasks.

    \item \textbf{Appendix~\ref{sec_app:discussion}} reports additional experimental results and in-depth analyses, including model scale comparisons (Appendix~\ref{sec_app:model_scale}), evaluations of federated baselines under heterogeneity (Appendix~\ref{sec_app:fed}), extensive forecasting results (Appendices~\ref{sec_app:forecast}--\ref{sec_app:zero_shot_prob}), scalability studies (Appendix~\ref{sec_app:scalability}), robustness analyses, and further discussions on generalization and representation behavior (Appendix~\ref{sec_app:additional}).
    
    \item \textbf{Appendix~\ref{sec_app:showcases}} provides showcase during pretraining and forecasting.
\end{enumerate}

\section{More Related Work}\label{sec:more_related}
\subsection{Time Series Foundation Models}\label{sec:tsfm}
Pre-trained models that scale in capacity and data can evolve into foundation models capable of transferring across diverse domains. Moving beyond dataset-specific architectures, recent efforts have focused on building large time series foundation models (TSFMs) via cross-domain pretraining for zero-shot inference. Early examples include ForecastFPN~\cite{dooley2024forecastpfn} using synthetic data, CloudOps~\cite{woo2023pushing} with masked modeling for domain-specific prediction, and TimeGPT-1~\cite{garza2023timegpt} as the first commercial API. More recent advances such as MOMENT~\cite{goswami2024moment}, Moirai~\cite{woo2024unifiedtraininguniversaltime}, TimeFM~\cite{das2024decoder}, Chronos~\cite{ansari2024chronos}, Timer~\cite{liu2024timer}, and Time-MoE~\cite{shi2024time} scale pretraining to ultra-large datasets, achieving strong cross-task transfer. However, most TSFMs are trained under centralized paradigms~\cite{shi2024time,das2024decoder,woo2024unifiedtraininguniversaltime,chen2023foundation}, where heterogeneous datasets are simply merged into mixed batches. This practice amplifies gradient conflicts and obscures domain-specific structure, leading to suboptimal representations. Temporal data are also fragmented across organizations and applications, making centralized collection impractical. These challenges call for flexible training strategies that explicitly address bi-level heterogeneity, spanning both inter-domain misalignment and intra-domain variability. Our work therefore introduces a federated learning framework for bi-level heterogeneous learning, enabling scalable pretraining of TSFMs.

\subsection{Heterogeneous Federated Learning}\label{sec:hfl}
FL~\cite{mcmahan2017communication,yan2025federated} enables decentralized model training across distributed clients without sharing raw data. A key challenge is statistical heterogeneity, which degrades performance due to distributional shifts across clients. Early approaches reduce client--server divergence through alignment or regularization, but generally assume mild heterogeneity and shared feature spaces, an assumption that breaks down in diverse real-world domains. Personalized Federated Learning (PFL) tailors models to individual clients using strategies such as regularization-based decomposition\citep{hanzely2020lower,li2021ditto,chenfedal,chen2024free}, partial model sharing~\cite{li2021fedbn,chen2024personalized, collins2021exploiting,chen2025restyled,feng2025taming,feng2026visual}, adaptive aggregation~\cite{zhang2020personalized,chen2024federated}, or meta-learning~\cite{fallah2020personalized}. While effective in vision and language tasks, these methods rely on transferable low-level features~\cite{chen2025federated,ren2024distributed}, which are less applicable to time series due to deeper variability in resolution, semantics, and physical context. More importantly, PFL prioritizes personalization over generalization, making it ill-suited for TSFM pretraining where cross-domain transferability is crucial. Instead of producing personalized models, our goal is to train a unified TSFM that explicitly resolves bi-level heterogeneity, capturing both inter-domain discrepancies and intra-domain conflicts during pretraining.

\section{Supplementary Materials}\label{sec:supp}

\subsection{Gradient Reversal Layer}\label{sec:grl}
The Gradient Reversal Layer (GRL)~\cite{ganin2016domain} is a standard technique in domain-adversarial learning. It acts as an identity function during the forward pass but reverses and scales gradients during backpropagation. Formally, given an input feature $\mathbf{h}$, GRL defines:
\begin{equation}
\mathrm{GRL}(\mathbf{h}) = \mathbf{h}, \quad
\frac{\partial \,\mathrm{GRL}(\mathbf{h})}{\partial \mathbf{h}} = -\lambda I,
\end{equation}
where $\lambda \geq 0$ is a tunable coefficient and $I$ denotes the identity matrix. This operation allows the sub-domain classifier to minimize its standard cross-entropy loss, while the encoder is simultaneously optimized to maximize the same objective, thereby learning domain-invariant representations. The coefficient $\lambda$ controls the strength of adversarial regularization, balancing domain invariance against the preservation of task-relevant temporal dynamics. Note that for training the local sub-domain classifier, we assign each sub-domain an index as its category label $y^{\text{dom}}$, while for the global domain classifier, we use the client index as the category label.

\subsection{Sub-domain and Domain Classifier}\label{sec:subdomain_domain_classifier}
The local sub-domain classifier promotes intra-domain invariance by mitigating fragmentation within each client, whereas the global domain classifier evaluates inter-domain separability to guide adaptive aggregation. Together, they target both levels of heterogeneity and support robust, scalable pretraining of TSFMs. The specific details of these classifiers are described below.
\paragraph{Local Sub-domain Classifier}
On each client, a sub-domain classifier $D_{\theta_C}$ is trained on $\bar{\mathbf{p}}$ with sub-domain index $y^{\text{dom}}$ as supervision. To encourage sub-domain-invariant representations, we apply a GRL to the encoder pathway and optimize the sub-domain classifier. The classifier learns to predict sub-domains while the encoder learns to remove sub-domain cues (GRL scales gradients by $-\lambda$). This min--max game reduces the mutual information between embeddings and sub-domain labels, mitigating intra-domain drift without collapsing temporal content.

\paragraph{Global Domain Classifier}
At the server level, we must evaluate how domain-specific each client's representation remains to guide aggregation. For this purpose, we train a global domain classifier $f_d:\mathbb{R}^d \to \mathbb{R}^K$, where $K$ is the number of clients (domains). Using client indices as supervision, $f_d$ is trained on client prototypes $\{\bar{\mathbf{p}}_k\}$. After training, $f_d$ is fixed and used for inference. For each prototype $\bar{\mathbf{p}}_k$, the Domain Invariance Score is defined as $\ell_k=\mathcal{L}_{\mathrm{CE}}(f_d(\bar{\mathbf{p}}_k),k).$ Low $\ell_k$ indicates that the prototype is easily classifiable and thus domain-specific, while high $\ell_k$ suggests domain-invariance as predictions approach uniform distribution. These scores, combined with semantic alignment to a global prototype, determine domain-aware aggregation weights.

\subsection{Workflow of FedTRL}\label{sec_app:workflow_of_fedtrl}
The workflow of our proposed FedTRL is shown in \textbf{Alg.~\ref{alg:fedtrl_workflow}}. At each round, the server broadcasts the current global encoder and global prototype. Each selected client performs diffusion-based reconstruction to learn temporal features, applies a local sub-domain classifier with a GRL to promote domain invariance, and aligns its prototype to the previous global prototype. The client then uploads only its encoder and prototype. The server forms a new global prototype via weighted averaging, trains a global domain classifier on client prototypes, and computes for each client a domain invariance score $(\ell_k)$ and a prototype deviation $(\delta_k)$ as the semantic alignment vectors. These are combined into scores $s_k=-\alpha\ell_k-\beta\delta_k$, softmax-normalized to weights $s_k$, which are used to aggregate encoders into the updated global model. The decoder and sub-domain classifier remain local throughout, preventing client-specific bias from contaminating the global encoder.

\begin{algorithm}[tbh]
\caption{Workflow of FedTRL}
\label{alg:fedtrl_workflow}
\KwIn{Initial encoder $\theta^{(0)}$; total rounds $T$; local epochs $E$; learning rate $\eta$; weighting factors $\lambda_{\text{domain}}, \lambda_{\text{align}}$}
\KwOut{Global encoder $\theta^{g, T}_E$ as the time series foundation model}

\For{$t = 1, \dots, T$}{
    Server selects a full client set $\mathcal{S}^t \subseteq \{1, \dots, K\}$ of clients\;
    \ForEach{ client $k \in \mathcal{S}^t$ \textbf{in parallel}}{
        Download the last global encoder $\theta^{t-1}_E$ and prototype $\mathbf{p}^{t-1}$\;
        
        \For{$e = 1$ \KwTo $E$}{
            Sample mini-batch $(\mathbf{X}_k, \mathbf{Y}_k) \sim \mathcal{D}_k$\;
            
            Generate noised patches and perform denoising reconstruction\;
            
            Compute reconstruction loss: $\mathcal{L}_{\text{task}}$\;
            
            Compute domain loss via local classifier and GRL: $\mathcal{L}_{\text{dom}}$\;
            
            Compute prototype deviation: $\mathcal{L}_{\text{align}}^t = \|\bar{\mathbf{p}}_k^{t} - \mathbf{p}^{g, t-1}\|^2$\;

            Combine local optimization objective: $\mathcal{L}^t_k = ( \mathcal{L}_{\text{task},k}^t + \lambda_{\text{dom}} \mathcal{L}_{\text{dom},k}^t + \lambda_{\text{align}}\mathcal{L}_{\text{align},k}^t)$;
            
            Update local encoder via: $\theta_{E,k}^t \leftarrow \theta_{E,k}^{t-1} - \eta \nabla_{\theta_{E,k}^t} \mathcal{L}^t_k$;

            \textcolor{gray}{(\textbf{in parallel}) Updated local decoder via: $\theta_{D,k}^t \leftarrow \theta_{D,k}^{t-1} - \eta \nabla_{\theta_{D,k}^t} \mathcal{L}^t_k$;}

            \textcolor{gray}{(\textbf{in parallel}) Updated local classifier via: $\theta_{C,k}^t \leftarrow \theta_{C,k}^{t-1} - \eta \nabla_{\theta_{C,k}^t} \mathcal{L}^t_k$;}
        }
        Upload only $\theta_{E,k}^{t}$, $\bar{\mathbf{p}}_k^t$ (local prototype) to server for aggregation\;
    }

    \tcc{Server-side aggregation}
    Aggregate prototypes: $\mathbf{p}^{g,t} =: \sum_k w_k \bar{\mathbf{p}}_k^t$\;
    
    Updated and evaluate domain classifier on all $\mathbf{p}_k^t$, obtain domain invariance score $\boldsymbol{\ell}_k^t$\;
    
    Compute semantic alignment vectors: $\delta_k = \|\bar{\mathbf{p}}^t_k - \mathbf{p}^{g,t}\|^2$\;
    
    Refine weights: $s_k = \operatorname{Softmax}(-\alpha \cdot \ell_k^t - \beta \cdot \delta_k^t)$\;
    
    Aggregate encoder: $\theta^{t}_E = \sum_k s_k \theta_{E,k}^t$\;
    
    Broadcast new $\theta_E^{g,t}$, $\mathbf{p}^{g,t}$ to all clients\;
}
\Return{Time series foundation model $\theta_E^{T}$}
\end{algorithm}

\section{Experiments Details}\label{app:baseline_bench}

\subsection{In-domain Point Forecasting}\label{sec_app:in_domain_point_forecasting}
This task evaluates self-supervised time series representation learning. We introduce baselines across three categories: (i) masked reconstruction methods, including SimMTM~\cite{dong2023simmtm} and PatchTST~\cite{nie2022time}; (ii) contrastive learning methods, including TS-TCC~\cite{eldele2021time} and CoST~\cite{woo2022cost}; and (iii) federated learning methods, including FFTS~\cite{chen2025federated} and FedAvg~\cite{mcmahan2017communication}, where each client adopts TimeDART~\cite{wang2025timedartdiffusionautoregressivetransformer}. Introducation of these baselines are summarized as follows:
\begin{itemize}
    \item \textbf{Masked Reconstruction Methods:}
    \begin{itemize}
        \item \textbf{SimMTM}~\cite{dong2023simmtm}: It recovers masked time series by learning point-wise similarities between masked and unmasked sequences through a manifold learning approach.
        \item \textbf{PatchTST}~\cite{nie2022time}: It introduces a patching mechanism and channel-independent strategy to extract local semantic information and capture long-range dependencies effectively.
    \end{itemize}

    \item \textbf{Contrastive Learning Methods:}
    \begin{itemize}
        \item \textbf{TS-TCC}~\cite{eldele2021time}: It employs temporal and contextual contrastive modules to learn robust representations by ensuring consistency across different augmented views.
        \item \textbf{CoST}~\cite{woo2022cost}: It learns disentangled seasonal-trend representations by applying contrastive losses in both the time and frequency domains.
    \end{itemize}

    \item \textbf{Federated Learning Methods:}
    \begin{itemize}
        \item \textbf{FedAvg}~\cite{mcmahan2017communication}: The standard federated learning algorithm that aggregates global models by performing a weighted average of local model parameters.
        \item \textbf{FFTS}~\cite{chen2025federated}: A federated learning framework specifically designed for time series that addresses data heterogeneity across distributed clients to achieve a unified representation space.
    \end{itemize}

    \item \textbf{Centralized Counterpart}
    \begin{itemize}
        \item \textbf{Centralized}: A baseline where all datasets are pooled into a single server for training, serving as an empirical performance upper bound for the federated scenarios.
    \end{itemize}
\end{itemize}
For fair comparison, all unsupervised baselines (except the FL ones) are implemented under the FedAvg protocol. In addition, a centralized baseline is constructed by mixing all datasets and training with the channel-independent~\cite{nie2022time}. All methods adopt a consistent Transformer backbone. The results are shown in \textbf{Table~\ref{tab:forecasting-in-sim}}.

\subsection{Full-shot Point Forecasting}\label{sec_app:full_shot_point_forecasting}
This task evaluates long-term point forecasting in a fully supervised setting, where models are trained or adapted using the entire training dataset. We benchmark FedTRL against advanced deep forecasting models, including TimeMixer~\cite{wang2024timemixer}, TimeXer~\cite{wang2024timexer}, PatchTST~\cite{nie2022time}, TimesNet~\cite{wu2022timesnet}, DLinear~\cite{zeng2023transformers}, iTransformer~\cite{liu2023itransformer}, Autoformer~\cite{wu2021autoformer}, Non-Stationary Transformer~\cite{liu2022non}, and LightTS~\cite{zhang2022less}. Brief introductions of these baselines are summarized as follows:
\begin{itemize}
    \item \textbf{TimeMixer}~\cite{wang2024timemixer}: It analyzes temporal variations through a decomposable multiscale mixing architecture, effectively integrating both fine-grained and coarse-grained information.
    \item \textbf{TimeXer}~\cite{wang2024timexer}: It reconciles endogenous and exogenous variables by introducing a novel embedding layer and capturing dependencies at both patch and variate levels.
    \item \textbf{PatchTST}~\cite{nie2022time}: It leverages a patching mechanism and channel-independent strategy to enhance the extraction of local semantic patterns and the modeling of long-range dependencies.
    \item \textbf{TimesNet}~\cite{wu2022timesnet}: It transforms 1D time series into 2D tensors based on multi-periodicity to model intra-period and inter-period variations using 2D convolutional kernels.
    \item \textbf{DLinear}~\cite{zeng2023transformers}: A simple yet effective decomposition-based MLP model that separately maps trend and seasonal components through linear layers.
    \item \textbf{iTransformer}~\cite{liu2023itransformer}: It inverts the Transformer architecture by treating each variate's entire series as a token, allowing attention to capture multivariate correlations and FFNs to model temporal patterns.
    \item \textbf{Autoformer}~\cite{wu2021autoformer}: It integrates a deep decomposition architecture with an auto-correlation mechanism to replace standard self-attention for efficient long-term forecasting.
    \item \textbf{Non-Stationary Transformer}~\cite{liu2022non}: It addresses distribution shifts in real-world data using Series Stationarization and De-stationary Attention to restore original non-stationary information.
    \item \textbf{LightTS}~\cite{zhang2022less}: A lightweight, sampling-oriented MLP-based framework that employs interval sampling and multi-scale structures for fast and robust multivariate forecasting.
\end{itemize}
For FedTRL, the model is first pretrained on Time-MoE-300B and then adapted to each target dataset with only a single epoch for fast transfer. The implementation is based on TSLib~\cite{wu2022timesnet}. Results are reported in \textbf{Table~\ref{tab:cross_domain}}.

\subsection{Zero-shot Point Forecasting}\label{sec_app:zero_shot_point_forecasting}
This task evaluates zero-shot long-term point forecasting, where models are pretrained on large time series datasets. We benchmark FedTRL against advanced foundation models, including Time-MoE~\cite{shi2024time}, TimesFM~\cite{das2024decoder}, Moirai~\cite{woo2024unifiedtraininguniversaltime}, Chronos~\cite{ansari2024chronos}, and Moment~\cite{goswami2024moment}. Evaluation is conducted on two real-world benchmarks: TSLib (four ETT-series and Weather) and RW-Bench (15 datasets from diverse regions; details are in Section~\ref{app:rw-bench}). Following the protocol of~\cite{shi2024time}, the look-back window is set to $\{512, 1024, 2048, 3072\}$ with corresponding prediction horizons $\{96, 192, 336, 720\}$. Among these baselines, TimesFM and Moment require fine-tuning for adaptation. The results on TSLib are officially from \citep{shi2024time}. Time-MoE$_\text{ultra}$ is excluded since it is not publicly available. The results are shown in \textbf{Table~\ref{tab:zero_shot}} and full results in \textbf{Table~\ref{tab:zero_shot_full}}.

\subsection{Zero-shot Probabilistic Forecasting}\label{sec_app:zero_shot_prob_forecasting}
This task evaluates zero-shot probabilistic forecasting. We evaluate FedTRL on two widely used benchmarks. GIFT-Eval~\cite{aksu2024gift} comprehensively assesses performance across 23 datasets, covering 144K time series and 177M data points, with a total of 97 forecasting configurations. We follow the official evaluation protocol provided by Salesforce and report aggregated results in \textbf{Table~\ref{tab:gift_eval}}. In addition, we evaluate both accuracy and inference efficiency on the FEV leaderboard\citep{ansari2024chronos}, maintained by AutoGluon, which includes 27 datasets for standardized zero-shot testing. Aggregated results are reported in \textbf{Fig.~\ref{fig:fev_eval}}.

\subsection{RW-Bench for Zero-shot Point Forecasting Evaluation}\label{app:rw-bench}

To assess the real-world applicability of pretrained FMs, we introduce the Real-world Weather Benchmark (RW-Bench), constructed from 15 ground-based weather observation stations collected in collaboration with our industry partners. Unlike curated academic datasets, RW-Bench preserves the complexity of raw operational data: no outliers are removed, and missing values are simply zero-filled. This design ensures that models are tested under conditions closer to real deployment scenarios. RW-Bench spans January 2022 to August 2025, providing timely and authentic observational records that complement the synthetic reanalysis data (e.g., ERA5) used in large-scale pretraining corpora such as Time-MoE-300B. \emph{Importantly, the spatiotemporal non-overlap between RW-Bench and pretraining datasets enables a fair zero-shot evaluation of model generalization.} Each dataset contains hourly multivariate time series covering ten meteorological variables: air pressure, air temperature, relative humidity, precipitation, wind direction, wind speed, maximum wind direction, maximum wind speed, maximum temperature, and minimum temperature. RW-Bench presents a challenging testbed for zero-shot forecasting. Dataset statistics for TSLib and RW-Bench are provided in \textbf{Table~\ref{tab:dataset-tslib}} and \textbf{Table~\ref{tab:dataset_rwbench}}, while sample visualizations appear in \textbf{Fig.~\ref{fig:rweatehr}}.

\begin{table}[tbh]
  \centering
    \caption{Dataset statistics about TSLib~\citep{wu2022timesnet}. Channels indicates the number of time series (i.e., variables), and the size is organized in (training, validation, testing). $^\dagger$SE: Spectral Entropy; ACF: Autocorrelation-based; Proxy: Error-based. All three quantify time series forecastability, higher scores indicate greater predictability (i.e., lower task difficulty).}
    \resizebox{\textwidth}{!}{
    \begin{tabular}{ccccccccc}
    \toprule
    Dataset & Domain & \# Channels & Frequency & Size  & Forecast Length  & SE$^\dagger$ & ACF$^\dagger$ & Proxy$^\dagger$\\
    \midrule
    ETTh1 & Power & 7     & Hourly & (8545, 2881, 2881) & $\{96, 192, 336, 720\}$ & 0.523 & 20.372 & 0.857 \\
    ETTh2 & Power & 7     & Hourly & (8545, 2881, 2881) & $\{96, 192, 336, 720\}$ & 0.652 & 35.940 & 0.943\\
    ETTm1 & Power & 7     & 15 Minute & (34465, 11521, 11521) & $\{96, 192, 336, 720\}$ & 0.580 & 24.335 & 0.949 \\
    ETTm2 & Power & 7     & 15 Minute & (34465, 11521, 11521) & $\{96, 192, 336, 720\}$ & 0.696 & 40.276 & 0.978 \\
    Electricity & Energy & 321 &  Hourly & (17805, 2537, 5166) & $\{96, 192, 336, 720\}$ & 0.706 & 12.137 & 0.814\\
    Traffic & Transportation & 862 & Hourly & (11673, 1661, 3413) & $\{96, 192, 336, 720\}$&  0.567 & 6.977 & 0.681\\
    Weather & Weather & 21    & 10 Minute & (36792, 5271, 10540) & $\{96, 192, 336, 720\}$ & 0.549 & 25.818 & 0.842 \\
    Exchange & Finance & 8 & 1 Day & (4704, 665, 1422)& $\{96, 192, 336, 720\}$ & 0.793 & 46.054 & 0.998\\
    \bottomrule
    \end{tabular}}
  \label{tab:dataset-tslib}%
\end{table}%

\begin{figure}[tbh]
    \centering
    \includegraphics[width=.9\textwidth]{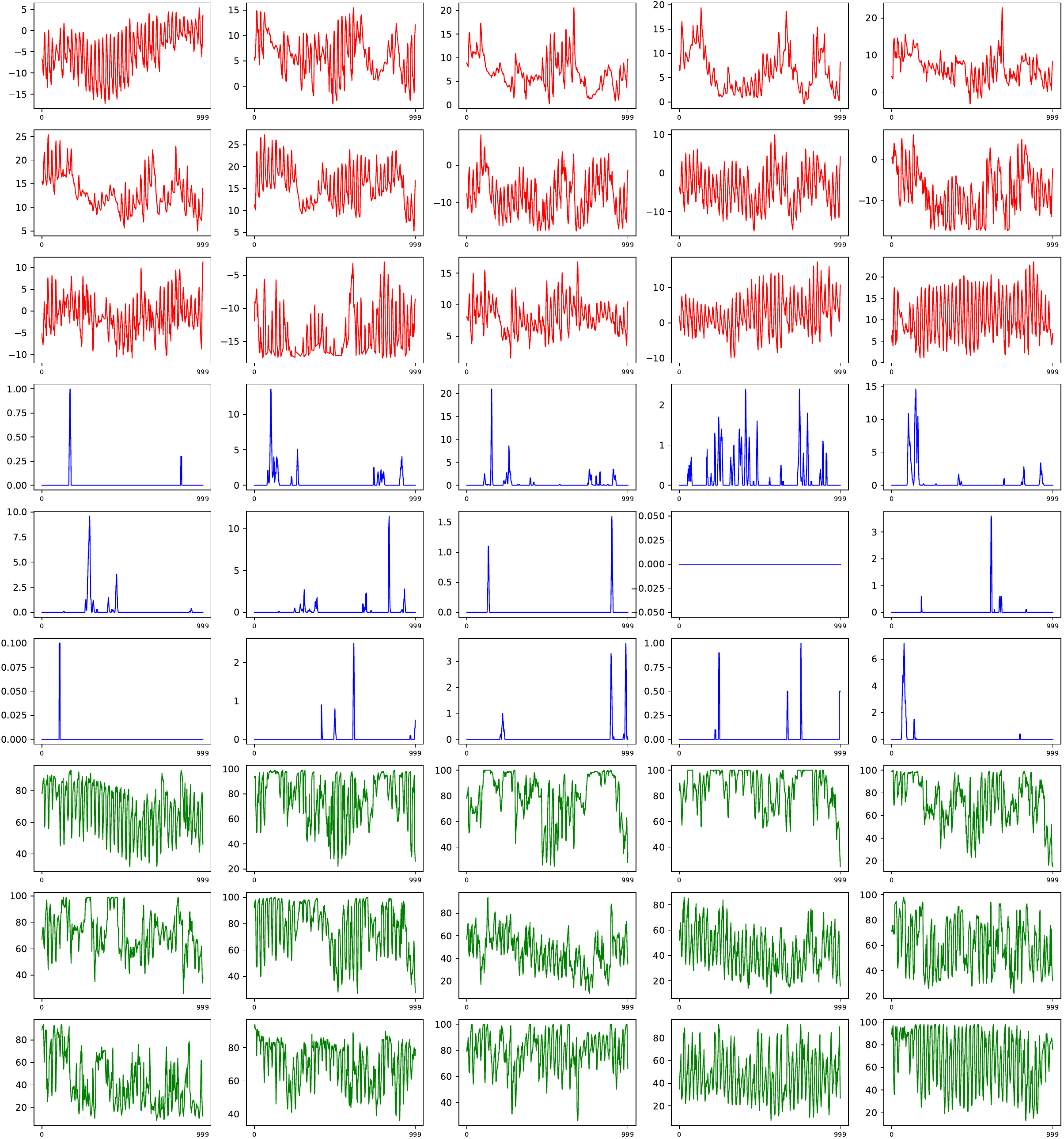}
    \caption{Visualization of RW-Bench samples: red denotes temperature, blue denotes precipitation, and green denotes humidity. Variables are arranged following the order in Table~\ref{tab:dataset_rwbench}. RW-Bench will be open-sourced once the required permissions are obtained.}
    \label{fig:rweatehr}
\end{figure}

\begin{table}[tbh]
  \centering
    \caption{\small Dataset statistics about real-world weather benchmark (RW-Bench). Channels indicates the number of time series (i.e., variables), and the size is organized in (training, validation, testing). $^\dagger$SE: Spectral Entropy; ACF: Autocorrelation-based; Proxy: Error-based. All three quantify time series forecastability, higher scores indicate greater predictability (i.e., lower task difficulty).}
    \resizebox{1\textwidth}{!}{
    \begin{tabular}{ccccccccc}
    \toprule
    Dataset & Region & \# Channels & Frequency & Size  & Forecast Length  & SE$^\dagger$ & ACF$^\dagger$ & Proxy$^\dagger$\\
    \midrule
    RW-1 & Yuzhno-Sakhalinsk & 10 & Hourly & January 2022 - August 2025 & $\{96, 192, 336, 720\}$ & 0.438 & 17.089 & 0.682\\
    
    RW-2 & Tokyo & 10 & Hourly & January 2022 - August 2025 & $\{96, 192, 336, 720\}$ & 0.438 & 17.090 & 0.682 \\
    
    RW-3 & Beijing & 10 & Hourly & January 2022 - August 2025 & $\{96, 192, 336, 720\}$ & 0.431 & 16.660 & 0.711\\
    
    RW-4 & Mumbai/Santacruz & 10 & Hourly & January 2022 - August 2025 & $\{96, 192, 336, 720\}$ & 0.398 & 16.474 & 0.706 \\
    
    RW-5 & Ho Chi Minh City & 10 & Hourly & January 2022 - August 2025 & $\{96, 192, 336, 720\}$ & 0.462 & 17.292 & 0.768\\
    
    RW-6 & Cairo & 10 & Hourly & January 2022 - August 2025 & $\{96, 192, 336, 720\}$ & 0.456 & 16.912 & 0.734\\
    
    RW-7 & Nairobi Dagoretti & 10 & Hourly &January 2022 - August 2025 & $\{96, 192, 336, 720\}$ & 0.425 & 16.217 & 0.656\\
    
    RW-8 & Nauru & 10 & Hourly &January 2022 - August 2025 & $\{96, 192, 336, 720\}$ & 0.452 & 11.586 & 0.705\\
    
    RW-9 & Honolulu & 10 & Hourly & January 2022 - August 2025 & $\{96, 192, 336, 720\}$ & 0.435 & 11.696 & 0.734\\
    
    RW-10 & Los Angeles  & 10 & Hourly & January 2022 - August 2025 & $\{96, 192, 336, 720\}$ & 0.425 & 15.879 & 0.726\\
    
    RW-11 & Edmonton & 10 & Hourly & January 2022 - August 2025 & $\{96, 192, 336, 720\}$ & 0.437 & 18.865 & 0.805 \\
    
    RW-12 & Oxford & 10 & Hourly & January 2022 - August 2025 & $\{96, 192, 336, 720\}$ & 0.455 & 18.871 & 0.816\\
    
    RW-13 & Zurich & 10 & Hourly & January 2022 - August 2025 & $\{96, 192, 336, 720\}$ & 0.463 & 17.805 & 0.801\\
    
    RW-14 & McMurdo  & 10 & Hourly & January 2022 - August 2025 & $\{96, 192, 336, 720\}$ & 0.431 & 16.438 & 0.720\\
    
    RW-15 & Novolazarevskaya & 10 & Hourly & January 2022 - August 2025 & $\{96, 192, 336, 720\}$ & 0.426 & 15.508 & 0.699\\
    \bottomrule
    \end{tabular}}
  \label{tab:dataset_rwbench}%
\end{table}%

\subsection{Model Architecture and Training Configuration}\label{sec_app:model_architecture}
We adopt a vanilla Transformer~\cite{vaswani2017attention} as the backbone. The local encoder is implemented as a causal Transformer encoder. For in-domain forecasting, we employ a standard multi-head attention mechanism with positional embeddings. For large-scale foundation model pretraining, we replace this with FlashAttention~\cite{dao2022flashattention} and RoPE~\cite{su2024roformer} to improve efficiency, consistent with advanced foundation model practices. Details in \textbf{Table~\ref{tab:configuration}}.
\begin{table}[tbh]
\centering
\caption{Detailed model architecture and training configuration.}
\resizebox{.8\textwidth}{!}{
\begin{tabular}{l|c|c}
\toprule
\multirow{2}[2]{*}{\textbf{Category}} & \textbf{Configuration} & \textbf{Configuration}\\
   &  \bf (In-domain Forecasting) & \bf (Large Foundation Model) \\
\midrule
Optimizer & Adam  & Adam \\
Batch Size & 128 & 2048 \\
Local Epochs & 10 & 100 \\
Global Rounds & 200 & 10,000\\
Learning Rate (Local Updating) & 0.0001 & 0.0005 \\
Input Sequence Length & 512 & 3072\\
Learning Scheduler & StepLR & StepLR \\
Diffusion Step & 1250 & 2500 \\
Computation Devices & 4 $\times$ Nvidia RTX A5500-24GB GPUs & 16$ \times$ Nvidia A100-80G GPUs \\
\midrule
Attention & Vanilia & FlashAttention with RoPE \\
Encoder Layer & 4  & 24 \\
(Denoising) Decoder Layers & 2 & 6\\
Feedforward Dimension & 512 & 4096\\
Model Dimension & 512 & 1024\\
Number of Heads & 4 & 8\\
Dropout & 0.4 & 0.4\\
Activation Function & GELU & GELU\\
Patch Length & 16 & 16\\
Stride & 16 & 16 \\
\midrule
Domain Classifier Structure & 2 $\times$ MLP with ReLU & 4 $\times$ MLP with ReLU \\
Training Epoch on Server & 5 & 20 \\
Learning Rate & 0.0005 & 0.0001 \\
\midrule
Domain Discriminability Weight $\alpha$ & 1.0 & 1.0 \\
Semantic Alignment Weight $\beta$ & 0.5 & 0.5 \\
Domain-adversarial Weight $\lambda_{\text{dom}}$ & 0.1 & 0.1 \\
Prototype Alignment Weight $\lambda_{\text{align}}$ & 0.1 & 0.1\\
Warmup Coefficients $R_{\text{warm}}$ & Not applicable & 4,000 \\
\bottomrule
\end{tabular}}
\label{tab:configuration}%
\end{table}

\section{Theoretical Analysis and Proofs}
\label{appendix:theory}

In this section, we provide formal definitions and complete proofs for the concepts of \textbf{Domain-Invariant Representations}, \textbf{Domain Awareness}, and \textbf{Global Dynamics}. We formulate the learning problem within the framework of domain adaptation and distributed optimization. Let $\mathcal{X} \subseteq \mathbb{R}^{T \times C}$ be the input space and $\mathcal{Z} \subseteq \mathbb{R}^{N \times d}$ be the representation space. We consider $K$ source domains (clients), where the $k$-th domain is characterized by a distribution $\mathcal{D}_k$ over $\mathcal{X} \times \mathcal{Y}$. The encoder $E_{\theta_E}: \mathcal{X} \to \mathcal{Z}$ maps input time series to latent representations.

\subsection{Domain-Invariant Representations}

\begin{definition}[\textbf{$\mathcal{H}$-Divergence for Domain Invariance}]
Let $\mathcal{H}$ be a hypothesis class of binary classifiers (domain discriminators). The $\mathcal{H}$-divergence between two distributions $\mathcal{D}_i^Z$ and $\mathcal{D}_j^Z$ in the latent space induced by $E$ is defined as:
\begin{equation}
d_{\mathcal{H}}(\mathcal{D}_i^Z, \mathcal{D}_j^Z) = 2 \sup_{h \in \mathcal{H}} \left| \operatorname{Pr}_{\mathbf{z} \sim \mathcal{D}_i^Z}[h(\mathbf{z})=1] - \operatorname{Pr}_{\mathbf{z} \sim \mathcal{D}_j^Z}[h(\mathbf{z})=1] \right|.
\end{equation}
A representation is \textit{domain-invariant} if $d_{\mathcal{H}}(\mathcal{D}_i^Z, \mathcal{D}_j^Z)$ is minimized for all pairs $i, j$.
\end{definition}

\begin{theorem}[\textbf{Adversarial Minimization of Divergence}]
\label{thm:invariance_proof}
The local adversarial optimization in FedTRL, defined by $\min_{\theta_E} \max_{\theta_C} \mathcal{L}_{\mathrm{dom}}$, minimizes the empirical $\mathcal{H}$-divergence between the local domain distribution and the uniform distribution over sub-domains, thereby upper-bounding the generalization error on unseen domains.
\end{theorem}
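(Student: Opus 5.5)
The plan is the classical Ganin et al.\ argument: relate the optimal discriminator's cross-entropy to a divergence, then invoke a Ben-David-type bound. It has three steps.

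\textbf{Step 1: inner maximization.} Fix $\theta_E$ and write $M$ for the number of sub-domains on a client. Let $\mathcal{D}_m^Z$ be the law of the prototype $\bar{\mathbf{p}}$ under sub-domain $m$, and let $\pi_m$ be the sub-domain proportions. Assume the classifier class $\{D_{\theta_C}\}$ is rich enough to contain the Bayes posterior. Then the classifier that minimizes $\mathcal{L}_{\mathrm{dom}}$ is
\begin{equation*}
D^*(m\mid \mathbf{z}) = \frac{\pi_m\, p_m(\mathbf{z})}{\sum_{m'} \pi_{m'} p_{m'}(\mathbf{z})}.
\end{equation*}
Its loss equals $H(\pi) - I(Z;Y^{\mathrm{dom}})$. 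By the standard computation, this equals $H(\pi)$ minus the generalized Jensen--Shannon divergence $\mathrm{JS}_\pi(\mathcal{D}_1^Z,\dots,\mathcal{D}_M^Z)$.

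Through the GRL, the encoder pushes $\mathcal{L}_{\mathrm{dom}}$ upward, i.e.\ toward $H(\pi)$. The statement's phrase ``min--max'' should be read in this sense: the classifier minimizes cross-entropy and the encoder maximizes it. So at the saddle point the encoder is minimizing $\mathrm{JS}_\pi$. The maximum is attained exactly when $D^*$ outputs $\pi$ for every $\mathbf{z}$. For balanced sub-domains this is the uniform prediction, which is the ``uniform distribution over sub-domains'' in the statement.

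\textbf{Step 2: from JS to $\mathcal{H}$-divergence.} For each pair $(i,j)$, first use $d_{\mathcal{H}} \le 2\,\mathrm{TV}$, which holds for any hypothesis class. Then apply Pinsker's inequality to the JS divergence, in the form $\mathrm{TV}(P,Q) \le 2\sqrt{2\,\mathrm{JS}(P,Q)}$. The pairwise JS term is in turn controlled by $\mathrm{JS}_\pi/\min(\pi_i,\pi_j)$. This gives
\begin{equation*}
d_{\mathcal{H}}(\mathcal{D}_i^Z,\mathcal{D}_j^Z) \le C_\pi \sqrt{\mathrm{JS}_\pi}.
\end{equation*}
Hence driving $\mathcal{L}_{\mathrm{dom}}$ to its maximum drives every pairwise $d_{\mathcal{H}}$ to zero.

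For the empirical version, replace the population quantities by averages over the $N$ prototypes. Add a uniform convergence term of order $\sqrt{(\mathrm{VC}(\mathcal{H})\log N + \log(1/\delta))/N}$, using the standard Ben-David et al.\ lemma relating $\hat d_{\mathcal{H}}$ to $d_{\mathcal{H}}$.

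\textbf{Step 3: generalization to unseen domains.} Apply the multi-source domain adaptation bound of Ben-David et al. For a target domain $T$ and any predictor head $h$ on $\mathcal{Z}$,
\begin{equation*}
\epsilon_T(h) \le \sum_m \pi_m \epsilon_m(h) + \tfrac12 d_{\mathcal{H}\Delta\mathcal{H}}\Big(\textstyle\sum_m \pi_m \mathcal{D}_m^Z, \mathcal{D}_T^Z\Big) + \lambda^*.
\end{equation*}
Then bound the target divergence via the triangle inequality through the sub-domain mixture. To make the unseen target enter at all, assume $\mathcal{D}_T^Z$ lies in (or near) the convex hull of the source laws; this is the usual hull assumption of Albuquerque et al. Under that assumption the divergence term is bounded by the maximal pairwise source divergence plus the target's distance to the hull. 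The pairwise source divergences are exactly what Step 2 controls.

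\textbf{Main obstacle.} The hardest part is making Steps 1--2 rigorous. First, the mismatch between the hypothesis class of the MLP classifier $D_{\theta_C}$ and $\mathcal{H}\Delta\mathcal{H}$ must be handled; I would simply assume $\mathcal{H}\Delta\mathcal{H} \subseteq \{D_{\theta_C}\}$-induced binary tests. Second, the saddle point must actually be reached by GRL gradient play; I would state the result at the equilibrium rather than prove convergence. The unseen-domain claim also genuinely needs the hull assumption, and I would state it explicitly as a hypothesis.
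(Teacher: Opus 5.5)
Your argument is sound under the hypotheses you state, but it takes a different route from the paper. The paper identifies $D_{\theta_C}$ with a hypothesis $h\in\mathcal{H}$. It quotes Ben-David et al.'s identity $d_{\mathcal{H}} = 2\bigl(1-\min_{h\in\mathcal{H}}(\epsilon_i(h)+\epsilon_j(h))\bigr)$, which links the divergence to the best discriminator's $0$--$1$ error. It then argues that optimizing over $\theta_C$ finds that discriminator, so optimizing over $\theta_E$ minimizes (a bound on) $d_{\mathcal{H}}$ and leaves the discriminator at chance level. The unseen-domain generalization bound is asserted, not derived.

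You instead analyze the cross-entropy objective that is actually trained. The Bayes-posterior computation gives the inner value $H(\pi)-\mathrm{JS}_\pi$. Pinsker together with $d_{\mathcal{H}}\le 2\,\mathrm{TV}$ turns this into the quantitative bound $d_{\mathcal{H}}\le C_\pi\sqrt{\mathrm{JS}_\pi}$. The generalization part is made explicit through the multi-source bound plus a hull hypothesis.

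\emph{What each route buys.} The paper's route has one conceptual merit: it works directly with a restricted class $\mathcal{H}$ and needs no Bayes-richness assumption. But it conflates cross-entropy with $0$--$1$ error and has the classifier \emph{maximizing} the loss, so it remains a heuristic. Your route gets the sign of the GRL game right. Because it goes through total variation, it controls the divergence for every class at once, which also makes your proposed inclusion assumption on $\mathcal{H}\Delta\mathcal{H}$ unnecessary. It also exposes the assumptions the paper leaves implicit: classifier richness, reaching the equilibrium, the target lying near the hull of the sub-domain laws, and a joint-error term $\lambda^*$ that itself depends on $\theta_E$ and is not controlled by the adversarial loss.

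\emph{Two scoping remarks.} First, $\mathcal{L}_{\mathrm{dom}}$ only sees $\bar{\mathbf{p}}=\mathrm{Mean}(\mathbf{H})$. So your Step 3 bound covers predictors that factor through the prototype, not the decoder that cross-attends to all of $\mathbf{H}$, whose law is not made invariant. Second, the \emph{empirical} claim follows most directly by running Steps 1--2 on the empirical prototype measures themselves. The uniform-convergence term is only needed to pass to population divergences for Step 3. There, because $\theta_E$ is fit on the same sample, the capacity term must cover the composed class $\{h\circ E_{\theta_E}\}$ rather than $\mathcal{H}$ alone.
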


\begin{proof}
Consider the domain discriminator $D_{\theta_C}$ as the hypothesis $h \in \mathcal{H}$. The objective $\mathcal{L}_{\mathrm{dom}}$ is the Cross-Entropy loss. For a specific client $k$ (which shares a sub-domain label), the discriminator tries to distinguish samples from distribution $\mathcal{D}_k$ versus others.
The empirical risk of the discriminator $D$ is:
\begin{equation}
R(D) = \frac{1}{N} \sum_{i=1}^N \mathbb{I}[D(\mathbf{z}_i) \neq y_i^{dom}].
\end{equation}
According to~\cite{ben2010theory}, the $\mathcal{H}$-divergence is related to the optimal discriminator error $\epsilon(h)$ by $d_{\mathcal{H}} = 2(1 - \min_{h \in \mathcal{H}} (\epsilon_i(h) + \epsilon_j(h)))$.
In FedTRL, GRL reverses the gradient for the encoder $\theta_E$. Thus, the encoder update corresponds to:
\begin{equation}
\theta_E^* = \arg\min_{\theta_E} \left( \max_{\theta_C} \mathcal{L}_{\mathrm{dom}} \right).
\end{equation}
Maximizing $\mathcal{L}_{\mathrm{dom}}$ with respect to $\theta_C$ finds the supremum in the divergence definition (the best discriminator). Minimizing this maximum with respect to $\theta_E$ directly minimizes the upper bound of $d_{\mathcal{H}}(\mathcal{D}_k^Z, \mathcal{D}_{\text{others}}^Z)$.
Therefore, the representation $\mathbf{z}$ learned by $\theta_E$ is forced into a space where the domain discriminator performs no better than random guessing (maximizing entropy of prediction), which satisfies the definition of domain invariance.
\end{proof}

\subsection{Domain Awareness via Entropy-Regularized Aggregation}

Here we provide a theoretical justification for the specific softmax weighting mechanism used in DaG.

\begin{definition}[\textbf{Domain Awareness Risk}]
We define the \textit{Global Incompatibility Cost} for client $k$ as $C_k = \alpha \ell_k + \beta \delta_k$, where $\ell_k$ measures domain distinctiveness (risk of overfitting to local artifacts) and $\delta_k$ measures semantic deviation from the global centroid.
\end{definition}

\begin{theorem}[\textbf{Optimality of Softmax Aggregation}]
\label{thm:softmax_proof}
The weighting scheme $s_k = \operatorname{Softmax}(-C_k)$ employed in FedTRL is the optimal solution to the problem of minimizing the expected global incompatibility cost subject to an entropy regularization constraint, which prevents collapse to a single client.
\end{theorem}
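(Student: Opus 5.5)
We prove this as the classical Gibbs variational principle. Let $\Delta_K = \{\mathbf{w}\in\mathbb{R}^K : w_k\ge 0,\ \sum_k w_k = 1\}$ be the probability simplex of aggregation weights. We pose the regularized problem
\begin{equation*}
\min_{\mathbf{w}\in\Delta_K}\ J(\mathbf{w}) := \sum_{k=1}^K w_k C_k - \tau H(\mathbf{w}), \qquad H(\mathbf{w}) = -\sum_{k=1}^K w_k\log w_k,
\end{equation*}
with temperature $\tau=1$. This is the Lagrangian form of ``minimize $\sum_k w_k C_k$ subject to $H(\mathbf{w})\ge h_0$'', where the multiplier is absorbed into the scale of $\alpha,\beta$. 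The goal is to show that the unique minimizer is $w_k^* = e^{-C_k}/\sum_j e^{-C_j} = \operatorname{Softmax}(-C_k) = s_k$.

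\textbf{Existence, uniqueness, interiority.} $\Delta_K$ is compact and $J$ is continuous, so a minimizer exists. The map $w\mapsto w\log w$ is strictly convex, and the cost term is linear, so $J$ is strictly convex and the minimizer is unique. Because $\partial_{w_k}(w_k\log w_k)\to-\infty$ as $w_k\to0^+$, no minimizer lies on the boundary. This justifies imposing only the equality constraint and is exactly the ``no collapse to a single client'' property.

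\textbf{Stationarity.} Form $\mathcal{L}(\mathbf{w},\mu) = \sum_k w_kC_k + \sum_k w_k\log w_k + \mu(\sum_k w_k - 1)$. Setting $\partial_{w_k}\mathcal{L}=0$ gives $C_k + \log w_k + 1 + \mu = 0$, so $w_k\propto e^{-C_k}$. Normalizing yields the softmax. Since $C_k=\alpha\ell_k+\beta\delta_k$, we have $-C_k = s_k$ as defined in the DaG score, so these are precisely the aggregation weights used by FedTRL.

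\textbf{Certificate of global optimality.} As a cleaner alternative to stationarity, we can verify optimality directly. Let $Z=\sum_j e^{-C_j}$. Then for every $\mathbf{w}\in\Delta_K$,
\begin{equation*}
J(\mathbf{w}) = \mathrm{KL}(\mathbf{w}\,\|\,\mathbf{w}^*) - \log Z,
\end{equation*}
and Gibbs' inequality shows that $J$ is minimized exactly at $\mathbf{w}=\mathbf{w}^*$, with optimal value $-\log Z$. If time permits, we add a remark on the constrained version: for each $h_0\in(0,\log K)$ there is a temperature $\tau>0$ giving the softmax $e^{-C_k/\tau}$. This uses standard convex duality with Slater's condition, which holds because the uniform weight vector is strictly feasible.

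\textbf{Main obstacle.} The calculation itself is routine. The delicate step is the modeling one: stating precisely that the entropy ``constraint'' in the theorem is the penalized (Lagrangian) form with unit multiplier, or equivalently that the temperature is folded into $\alpha$ and $\beta$. Only under that reading does the exact $\operatorname{Softmax}(-C_k)$ follow, rather than a tempered softmax $e^{-C_k/\tau}$.
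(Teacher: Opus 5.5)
Your proposal follows the paper's proof: the same entropy-regularized objective $\sum_k s_kC_k-\tau H(\mathbf{s})$ on the simplex, a Lagrange multiplier only for $\sum_k s_k=1$, the stationarity condition $C_k+\tau(1+\log s_k)+\nu=0$, normalization, and $\tau=1$. Your compactness, strict-convexity and interiority arguments, together with the identity $J(\mathbf{w})=\mathrm{KL}(\mathbf{w}\,\|\,\mathbf{w}^*)-\log Z$, go further than the paper: they justify dropping the nonnegativity constraints and show that this stationary point is the unique global minimizer, which the paper's first-order calculation leaves implicit.

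One point in the optional constrained-version remark needs tightening. Let $m=|\arg\min_k C_k|$. A temperature $\tau>0$ reproducing the optimum exists for every $h_0\in(0,\log K)$ only when $m=1$ or $C$ is constant. If $2\le m<K$ and $h_0\le\log m$, the entropy constraint is inactive and the optimal weights are supported on $\arg\min_k C_k$. No tempered softmax $e^{-C_k/\tau}$ with $\tau>0$ has that support.
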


\begin{proof}
Let $\mathbf{s} = [s_1, \dots, s_K]$ be the aggregation weights lying on the probability simplex $\Delta^{K-1}$ (i.e., $\sum s_k = 1, s_k \ge 0$). We aim to find weights that minimize the weighted incompatibility cost while maintaining diversity (uncertainty). This is formulated as the Entropy-Regularized Minimization problem:
\begin{equation}
\min_{\mathbf{s} \in \Delta^{K-1}} \mathcal{J}(\mathbf{s}) = \sum_{k=1}^K s_k C_k - \tau H(\mathbf{s}),
\end{equation}
where $H(\mathbf{s}) = - \sum_{k} s_k \log s_k$ is the Shannon entropy, and $\tau$ is the temperature parameter.
We form the Lagrangian with multiplier $\nu$ for the constraint $\sum s_k = 1$:
\begin{equation}
\mathcal{L}(\mathbf{s}, \nu) = \sum_{k=1}^K s_k C_k + \tau \sum_{k=1}^K s_k \log s_k + \nu (\sum_{k=1}^K s_k - 1).
\end{equation}
Taking the derivative w.r.t $s_k$ and setting to 0:
\begin{equation}
\frac{\partial \mathcal{L}}{\partial s_k} = C_k + \tau (1 + \log s_k) + \nu = 0.
\end{equation}
Solving for $s_k$:
\begin{equation}
\log s_k = -\frac{C_k + \nu}{\tau} - 1 \implies s_k = \exp\left( \frac{-C_k}{\tau} \right) \cdot \exp\left( \frac{-\nu}{\tau} - 1 \right).
\end{equation}
Using the constraint $\sum s_k = 1$ to eliminate the normalization term, we obtain:
\begin{equation}
s_k = \frac{\exp(-C_k / \tau)}{\sum_{j=1}^K \exp(-C_j / \tau)}.
\end{equation}
By setting $\tau=1$ and substituting our defined cost $C_k = \alpha \ell_k + \beta \delta_k$, we recover the exact formulation of the DaG mechanism in FedTRL:
\begin{equation}
s_k = \operatorname{Softmax}(-\alpha \ell_k - \beta \delta_k).
\end{equation}
This proves that FedTRL's ``Domain Awareness`` theoretically balances minimizing risk from unreliable domains (high $C_k$) with the need to aggregate information from diverse sources (entropy regularization).
\end{proof}

\subsection{Global Dynamics as Wasserstein Barycenter}

\begin{definition}[\textbf{Global Dynamics}]
The Global Dynamics are defined as the probabilistic process governed by the distribution $\mathcal{D}^g$ that minimizes the sum of squared Wasserstein distances to all local client prototype distributions $\{\mathcal{D}^p_k\}_{k=1}^K$.
\end{definition}

\begin{theorem}[\textbf{Convergence to Global Barycenter}]
\label{thm:barycenter_proof}
The prototype alignment objective $\mathcal{L}_{\text{align}}$ and the weighted model aggregation in FedTRL drive the global prototype $\mathbf{p}^g$ to the Fréchet mean (Barycenter) of the local semantic spaces.
\end{theorem}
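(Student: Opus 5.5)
The plan is to reduce the statement to an elementary fact about squared Euclidean distance. Then I check that the two ingredients named in Theorem~\ref{thm:barycenter_proof}, the alignment term $\mathcal{L}_{\text{align}}$ and the server-side prototype averaging, are exactly the first-order optimality conditions of a barycenter problem. First I pin down the objects. Each client prototype $\bar{\mathbf{p}}_k\in\mathbb{R}^d$ is identified with the Dirac measure $\mathcal{D}^p_k=\delta_{\bar{\mathbf{p}}_k}$, so the Global Dynamics of the preceding definition become
\begin{equation}
\mathcal{D}^g \in \arg\min_{\mu} \sum_{k=1}^K w_k\, W_2^2(\mu, \delta_{\bar{\mathbf{p}}_k}).
\end{equation}
Since $W_2^2(\mu,\delta_{\mathbf{p}})=\mathbb{E}_{\mathbf{x}\sim\mu}\|\mathbf{x}-\mathbf{p}\|_2^2$, the objective equals $\mathbb{E}_\mu \sum_k w_k\|\mathbf{x}-\bar{\mathbf{p}}_k\|^2$. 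This is minimized by putting all mass at the minimizer of the strictly convex quadratic $\sum_k w_k\|\mathbf{x}-\bar{\mathbf{p}}_k\|^2$, which is $\mathbf{x}^*=\sum_k w_k\bar{\mathbf{p}}_k$ (with $\sum_k w_k=1$). So the barycenter is $\delta_{\mathbf{x}^*}$, and the Fréchet mean coincides with the FedAvg prototype $\mathbf{p}^{g,t}=\sum_k w_k\bar{\mathbf{p}}_k^t$ computed in Algorithm~\ref{alg:fedtrl_workflow}. This settles the server-side half for each fixed round.

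Next I treat the interaction across rounds as alternating minimization of the joint potential
\begin{equation}
\Phi(\{\bar{\mathbf{p}}_k\},\mathbf{p}^g)=\sum_k w_k\|\bar{\mathbf{p}}_k-\mathbf{p}^g\|_2^2 .
\end{equation}
The server step minimizes $\Phi$ exactly in $\mathbf{p}^g$, by the computation above. The client step takes gradient steps on $\lambda_{\text{align}}\|\bar{\mathbf{p}}_k-\mathbf{p}^{g,t-1}\|^2$, which is client $k$'s term of $\Phi$, plus the task and adversarial losses. Under a smoothness assumption and a small enough step size $\eta$, I would bound the increase that these extra losses cause in $\Phi$ by a term of order $\eta\,G/\lambda_{\text{align}}$, where $G$ bounds the other gradients acting through the prototype. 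The contraction from the alignment step is then $(1-2\eta\lambda_{\text{align}})$ per local step. Combining the two steps gives
\begin{equation}
\Phi^{t}\le(1-c)\Phi^{t-1}+O(\eta G),
\end{equation}
so the prototypes concentrate around $\mathbf{p}^g$ up to a neighbourhood. Along the way $\mathbf{p}^{g,t}$ stays exactly the barycenter of the current prototypes at every round, which is the claimed statement.

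The main obstacle is the coupling between prototypes and encoder weights. $\bar{\mathbf{p}}_k$ is not a free variable: it is $\mathrm{Mean}(E_{\theta_{E,k}}(\mathbf{x}))$, and the encoder is aggregated with the DaG weights $\operatorname{Softmax}(s_k)$ rather than $w_k$. As a result, the prototype of the aggregated encoder is not the average of the prototypes. I would first try to handle this by treating prototypes as uploaded statistics, so that only the server average matters. If I instead want to say something about the aggregated encoder, I would try a first-order linearization of $\theta\mapsto\bar{\mathbf{p}}(\theta)$ with a Lipschitz constant $L_p$. That bounds the gap between $\bar{\mathbf{p}}(\sum_k s_k\theta_{E,k})$ and the barycenter by $L_p$ times the client drift, plus the discrepancy $\|s-w\|_1\max_k\|\bar{\mathbf{p}}_k\|$. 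Honestly, this yields convergence only to a neighbourhood of the barycenter. Exact convergence needs the additional assumptions that client drift vanishes and the weights agree, and I would state these explicitly.
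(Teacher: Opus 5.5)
Your argument is correct. Its first paragraph is a sharper version of the paper's route and already proves the whole statement. The paper sets the gradient of $\sum_k w_k\|\bar{\mathbf{p}}_k-\mathbf{p}^g\|_2^2$ to zero, obtains $\mathbf{p}^g=\sum_k w_k\bar{\mathbf{p}}_k$, and then cites optimal transport to call this the Fréchet mean: the $W_2$ barycenter of Gaussians or point masses has the weighted average of the means as its mean. The paper's ``drive'' claim rests only on an informal remark that alignment pulls local representations toward $\mathbf{p}^g$.

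You instead model the prototypes as Dirac masses and use $W_2^2(\mu,\delta_{\mathbf{p}})=\mathbb{E}_{\mu}\|\mathbf{x}-\mathbf{p}\|_2^2$. This solves the barycenter problem over measures, which is what the definition of Global Dynamics actually asks for, rather than over points. It pins the barycenter down as $\delta_{\mathbf{x}^*}$ and makes the paper's cited identification self-contained. Both routes establish the same per-round identity, namely that $\mathbf{p}^{g,t}$ is exactly the barycenter of the current prototypes. Neither gives a convergence result about the training dynamics.

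Your round-to-round contraction of $\Phi$ goes beyond the paper, but it fails as written, and not only at the aggregation stage you flag.
\begin{itemize}
\item The factor $(1-2\eta\lambda_{\text{align}})$ is what gradient descent gives when $\bar{\mathbf{p}}_k$ itself is the variable. Local updates act on $\theta_{E,k}$, so to first order the prototype moves by $-2\eta\lambda_{\text{align}}J_kJ_k^{\top}(\bar{\mathbf{p}}_k-\mathbf{p}^{g,t-1})$ with $J_k=\partial\bar{\mathbf{p}}_k/\partial\theta_{E,k}$. Contraction then needs a lower bound on the smallest eigenvalue of $J_kJ_k^{\top}$.
\item Each round restarts from the broadcast encoder $\theta_E^{g,t-1}$. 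The starting prototypes are therefore $\bar{\mathbf{p}}_k(\theta_E^{g,t-1})$, not the $\bar{\mathbf{p}}_k^{t-1}$ that appear in $\Phi^{t-1}$.
\item The GRL term is adversarial by design. The bound $G$ is therefore an extra assumption, not a consequence of smoothness.
\end{itemize}
Since the statement needs none of this, either drop that part or present it explicitly as a conditional result under these assumptions.
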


\begin{proof}
Let the local prototype $\bar{\mathbf{p}}_k$ be a sample representing the centroid of distribution $\mathcal{D}_k$ in representation space $\mathbb{R}^d$. The server-side objective implicitly minimizes the weighted Euclidean distance to all local prototypes (via aggregation and subsequent broadcast alignment):
\begin{equation}
\min_{\mathbf{p}^g} \mathcal{J}(\mathbf{p}^g) = \sum_{k=1}^K w_k \| \bar{\mathbf{p}}_k - \mathbf{p}^g \|_2^2,
\end{equation}
where $w_k$ are the aggregation weights.
This is a convex optimization problem. Taking the gradient w.r.t $\mathbf{p}^g$:
\begin{equation}
\nabla_{\mathbf{p}^g} \mathcal{J} = \sum_{k=1}^K 2 w_k (\mathbf{p}^g - \bar{\mathbf{p}}_k) = 0.
\end{equation}
Solving for $\mathbf{p}^g$ (assuming $\sum w_k = 1$):
\begin{equation}
\mathbf{p}^g = \sum_{k=1}^K w_k \bar{\mathbf{p}}_k.
\end{equation}
In Optimal Transport theory~\cite{villani2008optimal}, for Gaussian distributions (or point masses approximated by prototypes), the Fréchet mean with respect to the 2-Wasserstein distance is exactly the weighted Euclidean average of the means. Thus, $\mathbf{p}^g$ represents the center of mass of the Global Dynamics. By aligning local encoders to this $\mathbf{p}^g$ via Eq. (\ref{eq:align}) in the paper, clients are effectively pulling their representation manifolds towards this shared geometric center, ensuring that the global model captures the consensus dynamics across heterogeneous time series.
\end{proof}

\section{Discussion and Additional Results}\label{sec_app:discussion}
\subsection{Model Scale Discussion}\label{sec_app:model_scale}
\textbf{Table~\ref{tab:model_dis}} compares recent time series foundation models across architecture, scale, tokenization, and training setups. Centralized approaches such as Time-MoE, Moirai, and Chronos demonstrate strong zero-shot performance (\textbf{Tables~\ref{tab:zero_shot}} and\textbf{~\ref{tab:gift_eval}}, \textbf{Figs.~\ref{fig:fev_eval}}), validating the effectiveness of large-scale pretraining on aggregated data. Unlike these centralized pipelines, FedTRL is pretrained in a fully federated manner, showing that models with hundreds of millions of parameters can be effectively trained without centralizing data. While comparable in scale to centralized FMs, FedTRL yields flexible yet domain-invariant representations. Beyond matching performance, FedTRL exemplifies a paradigm shift in FM training, demonstrating that FL can serve not only as a privacy-preserving alternative but also as a viable strategy for building large-scale models across distributed domains.

\begin{table}[tbh]
  \caption{Comparison of time series foundation models. \emph{Architecture} indicates the Transformer variant. \emph{Model Size} reports parameter counts across scales. \emph{Pre-training Scale} refers to the number of time points in pre-training datasets. \emph{Token Level} specifies the granularity of time-series tokens. \emph{Tokenization} describes which values are embedded from the series. \emph{Context Length} denotes the maximum supported input length. \emph{Probabilistic} indicates the ability to generate multiple possible predictions, in contrast to deterministic forecasters. \emph{Training} refers to the training scenarios for this model, including centralized and federated (decentralized).}
  \centering
  \resizebox{1\textwidth}{!}{
  \begin{tabular}{c|ccccccccc}
    \toprule
    \multirow{2}{*}{Method} & \textbf{FedTRL} &Time-MoE  &  Moirai & MOMENT & LLMTime & Chronos & Lag-Llama & TimesFM  \\ 
     & \textbf{(Ours)} & \citeyearpar{shi2024time} &
     \citeyearpar{woo2024unifiedtraininguniversaltime} & 
     \citeyearpar{goswami2024moment} & \citeyearpar{gruver2024large}  & \citeyearpar{ansari2024chronos} &
     \citeyearpar{rasul2023lag} &  \citeyearpar{das2023decoder}   \\
    \toprule
    Architecture & Encoder & Decoder & Encoder & Encoder & Decoder & EncDec & Decoder & Decoder \\
    \midrule
    \multirow{3}{*}{Model Size} &  & 113M  & 14M & 40M &   & 46M  &  & 17M  \\
    & 302M & 453M & 91M & 125M & -- &   200M  & 200M & 70M   \\
    &  & 2.4B & 311M & 385M &  &   710M & & 200M  \\
    \midrule
    Pre-training Scale & 300B & 300B  & 231B & 1.13B & -- & 84B & 0.36B & 100B  \\
    \midrule
    Token Level & Patch  & Point & Patch & Patch & Point & Point & Point & Patch  \\
    \midrule
    Tokenization & \scalebox{0.82}{Continuous}  & \scalebox{0.82}{Continuous} & \scalebox{0.82}{Continuous} & \scalebox{0.82}{Continuous} & \scalebox{0.82}{Discrete} & \scalebox{0.82}{Discrete} & \scalebox{0.82}{Continuous} & \scalebox{0.82}{Continuous}  \\
    \midrule
    Context Length & $\le$3072 & $\le$4096  & $\le$5000 & = 512 & - & $\le$512 & $\le$1024 & $\le$512 \\
    \midrule
    Probabilistic & True & False & True & False & True & True & True & False  \\
    \midrule
    Training & \bf Federated & Centralized & Centralized & Centralized & Centralized & Centralized & Centralized & Centralized \\
    \bottomrule
  \end{tabular}}
    \label{tab:model_dis}
\end{table}
\begin{table}[tbh]
  \centering
  \caption{In-domain forecasting results of FL baselines. \textbf{Bold}: the best.}
  \resizebox{.7\textwidth}{!}{
    \begin{tabular}{c|ccccccc}
    \toprule
    Dataset & FedTRL & FFTS   & FedAvg & FedProx & FedPer & FedRep & pFedMe \\
    \midrule
    ETT-h1 & \bf 0.448 & 0.463  & 0.476 & 0.477 & 0.492 & 0.482 & 0.499 \\
    ETT-m1 & \bf 0.375 & 0.380   & 0.399 & 0.395 & 0.405 & 0.401 & 0.414 \\
    Weather & \bf 0.241 & 0.252  & 0.264 & 0.262 & 0.281 & 0.280  & 0.287 \\
    \bottomrule
    \end{tabular}}
  \label{tab:fed}
\end{table}%

\subsection{Federated Baselines under Heterogeneity}\label{sec_app:fed}
A wide range of FL algorithms have been developed to address data heterogeneity~\cite{tan2022towards}, with strong results in CV and NLP. However, our experiments reveal that these methods transfer poorly to time series representation learning. Using the setup in Section 4.1, we evaluate representative approaches (FedProx~\cite{li2020federated}, FedPer~\cite{arivazhagan2019federated}, FedRep~\cite{collins2021exploiting}, and pFedMe~\cite{t2020personalized}) on in-domain point forecasting, aggregating final-round local models for fair comparison. As shown in \textbf{Table~\ref{tab:fed}}, only FedProx slightly improves over vanilla FedAvg, while others perform worse. We attribute this to fundamental differences between discrete classification benchmarks and continuous time series forecasting. Classification tasks have low-dimensional outputs where personalization strategies are effective, whereas forecasting requires modeling high-dimensional, continuous targets with complex temporal dependencies. Algorithms designed for discrete label spaces struggle to capture such dynamics, leading to poor generalization. In contrast, FedTRL directly addresses both inter- and intra-client heterogeneity through domain-invariant local objectives and domain-aware aggregation, yielding more robust TSFMs.

\subsection{In-domain Forecasting}\label{sec_app:forecast}
The full results of in-domain forecasting are shown in \textbf{Table~\ref{tab:forecasting-full}}, where our proposed FedTRL consistently achieves the best performance across nearly all datasets and horizons. Compared to federated baselines, it shows clear error reductions and greater stability, while also rivaling or surpassing centralized training in several cases. These results highlight that FedTRL not only mitigates heterogeneity more effectively than existing FL methods but also delivers representations strong enough to match centralized pretraining, validating its robustness across diverse forecasting scenarios.

\begin{table}[tbh]
    \centering
    \caption{\small Full results of in-domain forecasting (for four different forecasting horizons \(\{96,192,336,720\}\)). \textbf{Bold}: the best; \underline{Underline}: the second best. The $^\dagger$ symbol denotes that the federated unsupervised methods are based on the standard FedAvg~\citep{mcmahan2017communication} aggregation protocol.}
    \label{tab:forecasting-full}
    \renewcommand\arraystretch{1.25}
    \resizebox{1\textwidth}{!}{
    \begin{tabular}{c|c|cc|cccccccc|cccc|cc}
    \toprule
    \multicolumn{2}{c|}{} & \multicolumn{2}{c|}{\textsc{Ours}}     & \multicolumn{8}{c|}{\textsc{Federated Unsupervised}$^\dagger$}                          & \multicolumn{4}{c|}{\textsc{Federated FMs}} & \multicolumn{2}{c}{\textsc{Centralized}} \\
    \multicolumn{2}{c|}{Methods} & \multicolumn{2}{c|}{\textbf{FedTRL}} & \multicolumn{2}{c}{SimMTM} & \multicolumn{2}{c}{PatchTST} & \multicolumn{2}{c}{TimeMAE} & \multicolumn{2}{c|}{CoST} & \multicolumn{2}{c}{FFTS} & \multicolumn{2}{c|}{FedAvg} & \multicolumn{2}{c}{All Mixed} \\
    \multicolumn{2}{c|}{Metric} & MSE & MAE & MSE & MAE & MSE & MAE & MSE & MAE & MSE & MAE & MSE & MAE & MSE & MAE & MSE & MAE \\
    \midrule
    \multirow{4}[2]{*}{\rotatebox{90}{ETTh1}}
    &   96    & \textbf{0.367} & \textbf{0.441} & 0.399 & 0.477 & 0.391 & 0.462 & 0.410 & 0.485 & 0.414 & 0.498 & 0.387 & 0.456 & 0.393 & 0.462 & \underline{0.372} & \underline{0.454} \\
    &  192    & \textbf{0.433} & \textbf{0.464} & 0.466 & 0.503 & 0.449 & 0.482 & 0.476 & 0.507 & 0.500 & 0.531 & 0.444 & 0.481 & 0.454 & 0.490 & \underline{0.437} & \underline{0.478} \\
    &  336    & \textbf{0.467} & \textbf{0.482} & 0.519 & 0.523 & 0.504 & 0.508 & 0.531 & 0.530 & 0.550 & 0.555 & \underline{0.479} & \underline{0.491} & 0.495 & 0.502 & 0.485 & 0.500 \\
    &  720    & \textbf{0.525} & \textbf{0.501} & 0.595 & 0.546 & 0.569 & 0.528 & 0.591 & 0.549 & 0.620 & 0.576 & 0.542 & \underline{0.511} & 0.562 & 0.527 & \underline{0.539} & 0.520 \\
\midrule

\multirow{4}[2]{*}{\rotatebox{90}{ETTh2}}
    &   96    & \textbf{0.317} & \textbf{0.389} & 0.358 & 0.434 & 0.327 & 0.403 & 0.369 & 0.441 & 0.372 & 0.451 & 0.327 & 0.393 & 0.338 & 0.405 & \underline{0.325} & \underline{0.393} \\
    &  192    & \textbf{0.371} & \textbf{0.411} & 0.416 & 0.460 & 0.397 & 0.427 & 0.439 & 0.468 & 0.443 & 0.481 & 0.383 & \underline{0.417} & 0.400 & 0.433 & \underline{0.381} & 0.419 \\
    &  336    & \textbf{0.396} & \textbf{0.419} & 0.469 & 0.480 & 0.440 & 0.442 & 0.479 & 0.482 & 0.499 & 0.501 & \underline{0.409} & \underline{0.424} & 0.421 & 0.436 & 0.418 & 0.433 \\
    &  720    & \textbf{0.445} & \textbf{0.437} & 0.526 & 0.497 & 0.476 & 0.456 & 0.532 & 0.505 & 0.570 & 0.526 & \underline{0.462} & \underline{0.442} & 0.481 & 0.458 & 0.472 & 0.452 \\
\midrule
    \multirow{4}[2]{*}{\rotatebox{90}{ETTm1}}
    &   96    & 0.312 & 0.378 & 0.335 & 0.410 & 0.314 & 0.380 & 0.337 & 0.413 & 0.358 & 0.426 & \underline{0.309} & \underline{0.378} & 0.321 & 0.380 & \textbf{0.309} & \textbf{0.374} \\
    
    &  192    & \textbf{0.351} & \underline{0.395} & 0.396 & 0.433 & 0.363 & 0.402 & 0.405 & 0.439 & 0.408 & 0.447 & 0.369 & 0.403 & 0.385 & 0.404 & \underline{0.352} & \textbf{0.392} \\
    &  336    & \underline{0.397} & \underline{0.409} & 0.443 & 0.447 & 0.411 & 0.418 & 0.443 & 0.450 & 0.453 & 0.461 & 0.399 & 0.413 & 0.419 & 0.416 & \textbf{0.382} & \textbf{0.403} \\
    &  720    & \textbf{0.440} & \underline{0.426} & 0.486 & 0.466 & 0.456 & 0.432 & 0.499 & 0.474 & 0.525 & 0.487 & \underline{0.442} & 0.430 & 0.471 & 0.424 & 0.445 & \textbf{0.423} \\
\midrule
    \multirow{4}[2]{*}{\rotatebox{90}{ETTm2}}
    &   96    & \underline{0.234} & \underline{0.313} & 0.261 & 0.328 & 0.240 & 0.320 & 0.269 & 0.337 & 0.278 & 0.345 & 0.236 & 0.318 & 0.239 & 0.320 & \textbf{0.232} & \textbf{0.313} \\
    &  192    & \textbf{0.269} & \textbf{0.325} & 0.313 & 0.347 & 0.285 & 0.336 & 0.310 & 0.353 & 0.330 & 0.363 & 0.280 & 0.333 & 0.287 & 0.337 & \underline{0.272} & \underline{0.331} \\
    &  336    & \textbf{0.292} & \textbf{0.338} & 0.340 & 0.359 & 0.307 & \underline{0.343} & 0.350 & 0.365 & 0.365 & 0.375 & \underline{0.301} & 0.344 & 0.312 & 0.348 & 0.302 & 0.343 \\
    &  720    & \textbf{0.317} & \textbf{0.344} & 0.382 & 0.374 & 0.353 & 0.361 & 0.390 & 0.384 & 0.407 & 0.393 & 0.343 & \underline{0.356} & 0.360 & 0.359 & \underline{0.339} & 0.356 \\
\midrule
    \multirow{4}[2]{*}{\rotatebox{90}{Electricity}}
    &   96    & \textbf{0.151} & \textbf{0.259} & 0.173 & 0.287 & 0.155 & 0.271 & 0.179 & 0.293 & 0.198 & 0.309 & 0.153 & \underline{0.264} & 0.158 & 0.271 & \underline{0.153} & 0.265 \\
    &  192    & \textbf{0.173} & \textbf{0.269} & 0.209 & 0.302 & 0.185 & 0.282 & 0.213 & 0.309 & 0.229 & 0.323 & \underline{0.178} & \underline{0.275} & 0.188 & 0.283 & 0.182 & 0.283 \\
    &  336    & \textbf{0.189} & \textbf{0.274} & 0.230 & 0.313 & 0.202 & 0.295 & 0.240 & 0.323 & 0.255 & 0.336 & \underline{0.195} & \underline{0.283} & 0.209 & 0.296 & 0.200 & 0.291 \\
    &  720    & \textbf{0.211} & \textbf{0.287} & 0.251 & 0.322 & 0.225 & 0.304 & 0.268 & 0.335 & 0.286 & 0.349 & \underline{0.221} & \underline{0.298} & 0.237 & 0.322 & 0.221 & 0.301 \\
\midrule
\multirow{4}[2]{*}{\rotatebox{90}{Traffic}}
    &   96    & 0.355 & 0.272 & 0.384 & 0.293 & 0.354 & 0.275 & 0.395 & 0.300 & 0.394 & 0.310 & \textbf{0.339} & \textbf{0.265} & 0.343 & 0.269 & \underline{0.352} & \underline{0.272} \\
    &  192    & \underline{0.408} & \underline{0.283} & 0.453 & 0.310 & 0.422 & 0.291 & 0.462 & 0.316 & 0.478 & 0.331 & \textbf{0.395} & \textbf{0.281} & 0.405 & 0.288 & 0.414 & 0.285 \\
    &  336    & \underline{0.451} & \underline{0.295} & 0.505 & 0.324 & 0.461 & 0.300 & 0.511 & 0.329 & 0.533 & 0.346 & \textbf{0.444} & \textbf{0.294} & 0.460 & 0.304 & 0.452 & 0.296 \\
    &  720    & \textbf{0.491} & \underline{0.303} & 0.547 & 0.334 & 0.516 & 0.314 & 0.572 & 0.343 & 0.588 & 0.353 & 0.502 & \textbf{0.301} & 0.528 & 0.316 & \underline{0.495} & 0.304 \\
\midrule
    \multirow{4}[2]{*}{\rotatebox{90}{Weather}}
    &   96    & \textbf{0.201} & \textbf{0.266} & 0.229 & 0.288 & 0.208 & 0.277 & 0.237 & 0.289 & 0.248 & 0.303 & \underline{0.204} & \underline{0.271} & 0.228 & 0.285 & 0.210 & 0.277 \\
    &  192    & \textbf{0.231} & \textbf{0.279} & 0.276 & 0.308 & 0.252 & 0.293 & 0.280 & 0.310 & 0.290 & 0.318 & 0.244 & 0.288 & 0.252 & 0.295 & \underline{0.239} & \underline{0.286} \\
    &  336    & \textbf{0.254} & \textbf{0.286} & 0.302 & 0.319 & 0.277 & 0.301 & 0.310 & 0.317 & 0.316 & 0.326 & \underline{0.262} & \underline{0.296} & 0.270 & 0.304 & 0.266 & 0.297 \\
    &  720    & \textbf{0.277} & \textbf{0.297} & 0.333 & 0.325 & 0.303 & 0.309 & 0.353 & 0.332 & 0.365 & 0.342 & \underline{0.298} & \underline{0.305} & 0.306 & 0.312 & 0.301 & 0.308 \\
\midrule
\multirow{4}[2]{*}{\rotatebox{90}{Exchange}}
    &   96    & \textbf{0.311} & \textbf{0.397} & 0.376 & 0.449 & 0.326 & 0.414 & 0.374 & 0.452 & 0.385 & 0.459 & \underline{0.319} & 0.410 & 0.332 & 0.414 & 0.322 & \underline{0.410} \\
    &  192    & \textbf{0.358} & \textbf{0.419} & 0.442 & 0.474 & 0.388 & 0.439 & 0.449 & 0.485 & 0.446 & 0.483 & 0.367 & 0.432 & 0.380 & 0.436 & \underline{0.367} & \underline{0.430} \\
    &  336    & \textbf{0.393} & \textbf{0.429} & 0.477 & 0.485 & 0.431 & 0.458 & 0.493 & 0.499 & 0.502 & 0.503 & \underline{0.409} & \underline{0.446} & 0.430 & 0.452 & 0.413 & 0.448 \\
    &  720    & \textbf{0.438} & \textbf{0.446} & 0.545 & 0.512 & 0.475 & 0.470 & 0.556 & 0.523 & 0.555 & 0.523 & 0.465 & 0.472 & 0.470 & 0.476 & \underline{0.458} & \underline{0.464} \\
\midrule
\rowcolor[gray]{0.95}
\multicolumn{2}{c|}{\bf $1^{\text{st}}$ Count}  & \multicolumn{2}{c|}{\bf 49} & \multicolumn{2}{c}{0} & \multicolumn{2}{c}{0} & \multicolumn{2}{c}{0} & \multicolumn{2}{c|}{0} & \multicolumn{2}{c}{7} & \multicolumn{2}{c|}{0} & \multicolumn{2}{c}{\underline{8}} \\
    \bottomrule
    \end{tabular}
  }
\end{table}

\subsection{Zero-shot Point Forecasting}\label{sec_app:zero_shot}
We evaluate our FedTRL-trained model on two zero-shot point forecasting benchmarks: TSLib (5 datasets) and RW-Bench (15 datasets), following the evaluation protocol of~\cite{shi2024time}. The full results are reported in Table~\ref{tab:zero_shot_full} (TSLib) and Table~\ref{tab:zero_shot_full_rw} (RW-Bench). Across both benchmarks and multiple prediction horizons, FedTRL consistently achieves state-of-the-art performance, surpassing centralized foundation models trained on aggregated data. These results further demonstrate the effectiveness of FedTRL.

In addition, we provide comparisons with recent TSFM models, including VisionTS~\cite{chen2024visionts} and Sundial (94/230/1032B)~\cite{liu2025sundial}, using the same zero-shot protocol as Table~\ref{tab:zero_shot}. The results are shown in Table~\ref{tab:R6_zero_shot_forecasting} below, indicating that our FedTRL-trained TSFM continues to outperform these models under the same setting.

\begin{table}[H]
  \centering
  \renewcommand{\tabcolsep}{3pt}
  \renewcommand\arraystretch{1.15}
  \caption{Full results of zero-shot forecasting experiments. A lower MSE or MAE indicates a better prediction. TimesFM, due to its use of Weather datasets in pretraining, is not evaluated on this dataset and is denoted by a dash ($-$). \boldres{Bold}: the best, \secondres{Underline}: the second best.}
  \resizebox{\textwidth}{!}{
    \begin{tabular}{cc|cc|cc|cc|cc|cc|cc|cc|cc|cc|cc|cc|cc}
      \toprule
      \multicolumn{2}{c}{\multirow{3}{*}{\textbf{\scalebox{1.2}{Models}}}} & \multicolumn{2}{c}{\textbf{Ours}} & \multicolumn{22}{c}{\textbf{Pretrained Time Series Foundation Models (Zero-shot)}} \\
      
      \cmidrule(lr){3-4} \cmidrule(lr){5-26}
      & & \multicolumn{2}{c}{\bf FedTRL} & \multicolumn{2}{c}{\bf Time-MoE$_{base}$} & \multicolumn{2}{c}{\bf Time-MoE$_{large}$} & \multicolumn{2}{c}{\bf Time-MoE$_{ultra}$} & \multicolumn{2}{c}{\textbf{Moirai$_{small}$}} & \multicolumn{2}{c}{\textbf{Moirai$_{base}$}} & \multicolumn{2}{c}{\textbf{Moirai$_{large}$}} & \multicolumn{2}{c}{\textbf{TimesFM}} & \multicolumn{2}{c}{\textbf{Moment}} & \multicolumn{2}{c}{\textbf{Chronos$_{small}$}} & \multicolumn{2}{c}{\textbf{Chronos$_{base}$}} & \multicolumn{2}{c}{\textbf{Chronos$_{large}$}} \\
      
      \cmidrule(lr){3-4} \cmidrule(lr){5-6} \cmidrule(lr){7-8} \cmidrule(lr){9-10} \cmidrule(lr){11-12} \cmidrule(lr){13-14} \cmidrule(lr){15-16} \cmidrule(lr){17-18} \cmidrule(lr){19-20} \cmidrule(lr){21-22} \cmidrule(lr){23-24} \cmidrule(lr){25-26}
      \multicolumn{2}{c}{\scalebox{1.2}{\textbf{Metrics}}} & \textbf{MSE} & \textbf{MAE} & \textbf{MSE} & \textbf{MAE} & \textbf{MSE} & \textbf{MAE} & \textbf{MSE} & \textbf{MAE} & \textbf{MSE} & \textbf{MAE} & \textbf{MSE} & \textbf{MAE} & \textbf{MSE} & \textbf{MAE} & \textbf{MSE} & \textbf{MAE} & \textbf{MSE} & \textbf{MAE} & \textbf{MSE} & \textbf{MAE} & \textbf{MSE} & \textbf{MAE} & \textbf{MSE} & \textbf{MAE} \\
      \midrule
      
      \multirow{4}[1]{*}{ETTh1} 
        & 96    & \boldres{0.346} & \secondres{0.381} & 0.357 & \secondres{0.381} & 0.350 & 0.382 & \secondres{0.349} & \boldres{0.379} & 0.401 & 0.402 & 0.376 & 0.392 & \secondres{0.349} & \boldres{0.379} & 0.414 & 0.404 & 0.688 & 0.557 & 0.466 & 0.409 & 0.440 & 0.393 & 0.441 & 0.390 \\
        
        & 192   & \secondres{0.386} & \secondres{0.407} & \boldres{0.384} & \boldres{0.404} & 0.388 & 0.412 & 0.395 & 0.413 & 0.388 & 0.412 & 0.412 & 0.413 & 0.434 & 0.415 & 0.465 & 0.434 & 0.688 & 0.560 & 0.530 & 0.450 & 0.492 & 0.426 & 0.502 & 0.424 \\
        
        & 336   & \secondres{0.420} & 0.443 & \boldres{0.411} & 0.434 & \boldres{0.411} & \secondres{0.430} & 0.447 & 0.453 & 0.433 & \boldres{0.428} & 0.433 & \boldres{0.428} & 0.495 & 0.445 & 0.503 & 0.456 & 0.675 & 0.563 & 0.570 & 0.486 & 0.550 & 0.462 & 0.576 & 0.467 \\
        
        & 720   & 0.444 & \secondres{0.449} & 0.449 & 0.477 & \boldres{0.427} & 0.455 & 0.457 & 0.462 & \secondres{0.439} & 0.454 & 0.447 & \boldres{0.444} & 0.611 & 0.510 & 0.511 & 0.481 & 0.683 & 0.585 & 0.615 & 0.543 & 0.882 & 0.591 & 0.835 & 0.583 \\
        
        & {\textbf{Avg.}} & \secondres{0.399} & \secondres{0.420} & 0.400 & 0.424 & \boldres{0.394} & \boldres{0.419} & 0.412 & 0.426 & 0.428 & 0.427 & 0.417 & \boldres{0.419} & 0.480 & 0.439 & 0.473 & 0.443 & 0.683 & 0.566 & 0.545 & 0.472 & 0.591 & 0.468 & 0.588 & 0.466 \\
      \midrule
      
      \multirow{4}[0]{*}{ETTh2}
        & 96    & 0.300 & 0.352 & 0.305 & 0.359 & 0.302 & 0.354 & \boldres{0.292} & 0.352 & 0.297 & \secondres{0.336} & \secondres{0.294} & \boldres{0.330} & 0.296 & \boldres{0.330} & 0.315 & 0.349 & 0.342 & 0.396 & 0.307 & 0.356 & 0.308 & 0.343 & 0.320 & 0.345 \\
        
        & 192   & \boldres{0.335} & \boldres{0.369} & 0.351 & 0.386 & 0.364 & 0.385 & \secondres{0.347} & 0.379 & 0.368 & 0.381 & 0.365 & 0.375 & 0.361 & \secondres{0.371} & 0.388 & 0.395 & 0.354 & 0.402 & 0.376 & 0.401 & 0.384 & 0.392 & 0.406 & 0.399 \\
        
        & 336   & 0.371 & \secondres{0.392} & 0.391 & 0.418 & 0.417 & 0.425 & 0.406 & 0.419 & \secondres{0.370} & 0.393 & 0.376 & \boldres{0.390} & 0.390 & \boldres{0.390} & 0.422 & 0.427 & \boldres{0.356} & 0.407 & 0.408 & 0.431 & 0.429 & 0.430 & 0.492 & 0.453 \\
        
        & 720   & \boldres{0.394} & \boldres{0.407} & 0.419 & 0.454 & 0.537 & 0.496 & 0.439 & 0.447 & 0.411 & 0.426 & 0.416 & 0.433 & 0.423 & \secondres{0.418} & 0.443 & 0.454 & \secondres{0.395} & 0.434 & 0.604 & 0.533 & 0.501 & 0.477 & 0.603 & 0.511 \\
        
        & {\textbf{Avg.}} & \boldres{0.350} & \secondres{0.380} & 0.366 & 0.404 & 0.405 & 0.415 & 0.371 & 0.399 & \secondres{0.361} & 0.384 & 0.362 & 0.382 & 0.367 & \boldres{0.377} & 0.392 & 0.406 & \secondres{0.361} & 0.409 & 0.424 & 0.430 & 0.405 & 0.410 & 0.455 & 0.427 \\
      \midrule
      
      \multirow{4}[0]{*}{ETTm1}
        & 96    & \secondres{0.284} & \secondres{0.342} & 0.338 & 0.368 & 0.309 & 0.557 & \boldres{0.281} & \boldres{0.341} & 0.418 & 0.392 & 0.363 & 0.356 & 0.380 & 0.361 & 0.361 & 0.370 & 0.654 & 0.527 & 0.511 & 0.423 & 0.454 & 0.408 & 0.457 & 0.403 \\
        
        & 192   & \secondres{0.319} & \secondres{0.366} & 0.353 & 0.388 & 0.346 & 0.381 & \boldres{0.305} & \boldres{0.358} & 0.431 & 0.405 & 0.388 & 0.375 & 0.412 & 0.383 & 0.414 & 0.405 & 0.662 & 0.532 & 0.618 & 0.485 & 0.567 & 0.477 & 0.530 & 0.450 \\
        
        & 336   & \boldres{0.358} & 0.405 & 0.381 & 0.413 & 0.373 & 0.408 & \secondres{0.369} & \secondres{0.395} & 0.433 & 0.412 & 0.416 & \boldres{0.392} & 0.436 & 0.400 & 0.445 & 0.429 & 0.672 & 0.537 & 0.683 & 0.524 & 0.662 & 0.525 & 0.577 & 0.481 \\
        
        & 720   & \boldres{0.435} & \secondres{0.419} & 0.504 & 0.493 & 0.475 & 0.477 & 0.469 & 0.472 & 0.462 & 0.432 & \secondres{0.460} & \boldres{0.418} & 0.462 & 0.420 & 0.512 & 0.471 & 0.692 & 0.551 & 0.748 & 0.566 & 0.900 & 0.591 & 0.660 & 0.526 \\
        
        & {\textbf{Avg.}} & \boldres{0.349} & \boldres{0.383} & 0.394 & 0.415 & 0.376 & 0.405 & \secondres{0.356} & 0.391 & 0.436 & 0.410 & 0.406 & \secondres{0.385} & 0.422 & 0.391 & 0.433 & 0.418 & 0.670 & 0.536 & 0.640 & 0.499 & 0.645 & 0.500 & 0.555 & 0.465 \\
      \midrule
      \multirow{4}[0]{*}{ETTm2}
        & 96    & 0.205 & 0.279 & 0.201 & 0.291 & \boldres{0.197} & 0.286 & \secondres{0.198} & 0.288 & 0.214 & 0.288 & 0.205 & 0.273 & 0.211 & 0.274 & 0.202 & \boldres{0.270} & 0.260 & 0.335 & 0.209 & 0.291 & 0.199 & 0.274 & \boldres{0.197} & \secondres{0.271} \\
        
        & 192   & \secondres{0.245} & 0.321 & 0.258 & 0.334 & 0.250 & 0.322 & \boldres{0.235} & \boldres{0.312} & 0.284 & 0.332 & 0.275 & 0.316 & 0.281 & 0.318 & 0.289 & 0.321 & 0.289 & 0.350 & 0.280 & 0.341 & 0.261 & 0.322 & 0.254 & \secondres{0.314} \\
        
        & 336   & \secondres{0.311} & \boldres{0.343} & 0.324 & 0.373 & 0.337 & 0.375 & \boldres{0.293} & \secondres{0.348} & 0.331 & 0.362 & 0.329 & 0.350 & 0.341 & 0.355 & 0.360 & 0.366 & 0.324 & 0.369 & 0.354 & 0.390 & 0.326 & 0.366 & 0.313 & 0.353 \\
        
        & 720   & \boldres{0.367} & \boldres{0.377} & 0.488 & 0.464 & 0.480 & 0.461 & 0.427 & 0.423 & 0.402 & \secondres{0.408} & 0.437 & 0.411 & 0.485 & 0.428 & 0.462 & 0.430 & \secondres{0.394} & 0.409 & 0.553 & 0.499 & 0.455 & 0.439 & 0.416 & 0.415 \\
        & {\textbf{Avg.}} & \boldres{0.282} & \boldres{0.330} & 0.317 & 0.365 & 0.316 & 0.361 & \secondres{0.288} & 0.344 & 0.307 & 0.347 & 0.311 & \secondres{0.337} & 0.329 & 0.343 & 0.328 & 0.346 & 0.316 & 0.365 & 0.349 & 0.380 & 0.310 & 0.350 & 0.295 & 0.338 \\
      \midrule
      \multirow{4}[0]{*}{Weather}
        & 96    & \boldres{0.149} & \boldres{0.210} & 0.160 & 0.214 & 0.159 & 0.213 & \secondres{0.157} & \secondres{0.211} & 0.198 & 0.222 & 0.220 & 0.217 & 0.199 & \secondres{0.211}& -- & -- & 0.243 & 0.255 & 0.211 & 0.243 & 0.203 & 0.238 & 0.194 & 0.235 \\
        
        & 192   & \boldres{0.188} & \boldres{0.240} & 0.210 & 0.260 & 0.215 & 0.266 & \secondres{0.208} & 0.256 & 0.247 & 0.265 & 0.271 & 0.259 & 0.246 & \secondres{0.251} & -- & -- & 0.278 & 0.329 & 0.263 & 0.294 & 0.256 & 0.290 & 0.249 & 0.285 \\
        
        & 336   & \boldres{0.249} & \boldres{0.282} & 0.274 & 0.309 & 0.291 & 0.322 & \secondres{0.255} & \secondres{0.290} & 0.283 & 0.303 & 0.286 & 0.297 & 0.274 & 0.291 & -- & -- & 0.306 & 0.346 & 0.321 & 0.339 & 0.314 & 0.336 & 0.302 & 0.327 \\
        
        & 720   & 0.366 & 0.384 & 0.418 & 0.405 & 0.415 & 0.400 & 0.405 & 0.397 & 0.373 & \secondres{0.354} & 0.373 & \secondres{0.354} & \boldres{0.337} & \boldres{0.340} & -- & -- & \secondres{0.350} & 0.374 & 0.404 & 0.397 & 0.397 & 0.396 & 0.372 & 0.378 \\
        
        & {\textbf{Avg.}} & \boldres{0.238} & \secondres{0.279} & 0.265 & 0.297 & 0.270 & 0.300 & \secondres{0.256} & 0.288 & 0.275 & 0.286 & 0.287 & 0.281 & 0.264 & \boldres{0.273} & -- & -- & 0.294 & 0.326 & 0.300 & 0.318 & 0.292 & 0.315 & 0.279 & 0.306 \\
      \midrule
      \rowcolor[gray]{0.95}
      \multicolumn{2}{c|}{\scalebox{1.1}{\textbf{{$1^{\text{st}}$ Count}}}}
            &
    \multicolumn{2}{c|}{\boldres{22}} &
    \multicolumn{2}{c|}{3} &
    \multicolumn{2}{c|}{5} &
    \multicolumn{2}{c|}{9} &
    \multicolumn{2}{c|}{1} &
    \multicolumn{2}{c|}{\secondres{7}} &
    \multicolumn{2}{c|}{\secondres{7}} &
    \multicolumn{2}{c|}{1} &
    \multicolumn{2}{c|}{1} &
    \multicolumn{2}{c|}{0} &
    \multicolumn{2}{c|}{0} &
    \multicolumn{2}{c}{1} \\
      \bottomrule
    \end{tabular}
  }
  \label{tab:zero_shot_full}%
\end{table}
\begin{table}[H]
  \centering
  \caption{Full results of real-world weather benchmark (RW-bench).}
  \resizebox{\textwidth}{!}{
    \begin{tabular}{cc|cc|cc|cc|cc|cc|cc|cc|cc|cc|cc|cc}
      \toprule
      \multicolumn{2}{c}{\multirow{3}{*}{\textbf{\scalebox{1.2}{Models}}}} & \multicolumn{2}{c}{\textbf{Ours}} & \multicolumn{20}{c}{\textbf{Pretrained Time Series Foundation Models (Zero-shot)}} \\
      
      \cmidrule(lr){3-4} \cmidrule(lr){5-24}
      
      & & \multicolumn{2}{c}{\bf FedTRL} & \multicolumn{2}{c}{\bf Time-MoE$_{base}$} & \multicolumn{2}{c}{\bf Time-MoE$_{large}$}  & \multicolumn{2}{c}{\textbf{Moirai$_{small}$}} & \multicolumn{2}{c}{\textbf{Moirai$_{base}$}} & \multicolumn{2}{c}{\textbf{Moirai$_{large}$}} & \multicolumn{2}{c}{\textbf{TimesFM}} & \multicolumn{2}{c}{\textbf{Moment}} & \multicolumn{2}{c}{\textbf{Chronos$_{small}$}} & \multicolumn{2}{c}{\textbf{Chronos$_{base}$}} & \multicolumn{2}{c}{\textbf{Chronos$_{large}$}} \\
      
      \cmidrule(lr){3-4} \cmidrule(lr){5-6} \cmidrule(lr){7-8} \cmidrule(lr){9-10} \cmidrule(lr){11-12} \cmidrule(lr){13-14} \cmidrule(lr){15-16} \cmidrule(lr){17-18} \cmidrule(lr){19-20} \cmidrule(lr){21-22} \cmidrule(lr){23-24} 
      \multicolumn{2}{c}{\scalebox{1.2}{\textbf{Metrics}}} & \textbf{MSE} & \textbf{MAE} & \textbf{MSE} & \textbf{MAE} & \textbf{MSE} & \textbf{MAE} & \textbf{MSE} & \textbf{MAE} & \textbf{MSE} & \textbf{MAE} & \textbf{MSE} & \textbf{MAE} & \textbf{MSE} & \textbf{MAE} & \textbf{MSE} & \textbf{MAE} & \textbf{MSE} & \textbf{MAE} & \textbf{MSE} & \textbf{MAE} & \textbf{MSE} & \textbf{MAE} \\
      \midrule
    \multirow{4}[0]{*}{RW1} & 96    &  \secondres{0.508} & \boldres{0.453} & 0.510 & 0.460 & \boldres{0.505} & \secondres{0.455} & 0.885 & 0.547 & 0.879 & 0.529 & 0.986 & 0.530 & 0.883 & 0.703 & 0.674 & 0.594 & 0.850 & 0.581 & 0.879 & 0.589 & 0.873 & 0.583 \\
          & 192   & \boldres{0.528} & \boldres{0.473} & 0.540 & 0.492 & \secondres{0.532} & \secondres{0.485} & 0.950 & 0.569 & 0.896 & 0.546 & 1.011 & 0.548 & 0.903 & 0.711 & 0.717 & 0.600 & 0.917 & 0.620 & 0.979 & 0.634 & 0.950 & 0.623 \\
          & 336   & \boldres{0.546} & \boldres{0.488} & 0.566 & 0.519 & \secondres{0.556} & \secondres{0.505} & 1.148 & 0.588 & 0.813 & 0.552 & 0.868 & 0.560 & 0.889 & 0.708 & 0.913 & 0.700 & 0.957 & 0.649 & 1.047 & 0.667 & 0.973 & 0.646 \\
          & 720   & \boldres{0.552} & \boldres{0.497} & 0.643 & 0.580 & \secondres{0.568} & \secondres{0.519} & 1.869 & 0.667 & 0.958 & 0.602 & 1.259 & 0.625 & 0.877 & 0.705 & 1.740 & 1.019 & 1.089 & 0.708 & 1.142 & 0.715 & 1.012 & 0.687 \\
          & \textbf{Avg} & \boldres{0.534} & \boldres{0.477} & 0.565 & 0.513 & \secondres{0.540} & \secondres{0.491} & 1.213 & 0.593 & 0.887 & 0.557 & 1.031 & 0.566 & 0.888 & 0.707 & 1.011 & 0.728 & 0.953 & 0.639 & 1.012 & 0.651 & 0.952 & 0.635 \\
    \midrule
    \multirow{4}[0]{*}{RW2} & 96   & \secondres{0.433} & \boldres{0.420} & \boldres{0.431} & 0.426 & 0.436 & \secondres{0.424} & 1.541 & 0.526 & 1.314 & 0.483 & 1.877 & 0.491 & 0.846 & 0.702 & 0.677 & 0.567 & 0.778 & 0.543 & 0.779 & 0.546 & 0.811 & 0.553 \\
          & 192   & \boldres{0.460} & \boldres{0.445} & \boldres{0.460} & \secondres{0.455} & \secondres{0.470} & 0.456 & 1.407 & 0.560 & 1.021 & 0.504 & 2.416 & 0.529 & 0.841 & 0.700 & 0.788 & 0.623 & 0.855 & 0.587 & 0.874 & 0.595 & 0.895 & 0.598 \\
          & 336   & \boldres{0.481} & \boldres{0.459} & \secondres{0.497} & 0.487 & \secondres{0.497} & \secondres{0.477} & 1.666 & 0.596 & 1.168 & 0.529 & 1.910 & 0.553 & 0.842 & 0.700 & 0.881 & 0.653 & 0.943 & 0.625 & 0.942 & 0.629 & 0.960 & 0.630 \\
          & 720   & \boldres{0.505} & \boldres{0.481} & 0.575 & 0.545 & \secondres{0.539} & \secondres{0.505} & 2.236 & 0.660 & 2.600 & 0.603 & 4.302 & 0.637 & 0.847 & 0.702 & 1.933 & 1.044 & 1.146 & 0.691 & 1.054 & 0.681 & 1.049 & 0.677 \\
          & \textbf{Avg} & \boldres{0.470} & \boldres{0.451} & 0.491 & 0.479 & \secondres{0.485} & \secondres{0.465} & 1.713 & 0.585 & 1.526 & 0.530 & 2.626 & 0.552 & 0.844 & 0.701 & 1.070 & 0.721 & 0.931 & 0.612 & 0.912 & 0.613 & 0.929 & 0.614 \\
    \midrule
    \multirow{4}[0]{*}{RW3} & 96    & 0.580 & \secondres{0.485} & \boldres{0.566} & \boldres{0.474} & \secondres{0.579} & \secondres{0.485} & 1.461 & 0.616 & 1.254 & 0.576 & 1.511 & 0.561 & 0.952 & 0.728 & 0.726 & 0.598 & 0.977 & 0.614 & 0.980 & 0.625 & 0.943 & 0.600 \\
          & 192   & \boldres{0.594} & \secondres{0.501} & \boldres{0.594} & \boldres{0.498} & \secondres{0.597} & 0.508 & 1.224 & 0.610 & 0.991 & 0.577 & 1.450 & 0.571 & 0.963 & 0.730 & 0.808 & 0.629 & 1.032 & 0.649 & 1.058 & 0.668 & 1.029 & 0.640 \\
          & 336   & \boldres{0.605} & \boldres{0.511} & 0.620 & 0.522 & \secondres{0.608} & \secondres{0.519} & 1.578 & 0.629 & 0.900 & 0.583 & 1.053 & 0.587 & 0.973 & 0.735 & 0.908 & 0.674 & 1.084 & 0.675 & 1.121 & 0.696 & 1.054 & 0.658 \\
          & 720   & \boldres{0.617} & \boldres{0.529} & 0.681 & 0.576 & \secondres{0.631} & \secondres{0.544} & 2.740 & 0.709 & 1.233 & 0.650 & 1.755 & 0.653 & 0.953 & 0.732 & 2.197 & 1.137 & 1.181 & 0.725 & 1.218 & 0.741 & 1.165 & 0.729 \\
          & \textbf{Avg} & \boldres{0.599} & \boldres{0.506} & 0.615 & 0.517 & \secondres{0.604} & \secondres{0.514} & 1.751 & 0.641 & 1.094 & 0.597 & 1.442 & 0.593 & 0.960 & 0.731 & 1.159 & 0.760 & 1.068 & 0.666 & 1.094 & 0.682 & 1.048 & 0.657 \\
    \midrule
    \multirow{4}[0]{*}{RW4} & 96    & \boldres{0.501} & \secondres{0.484} & 0.512 & 0.486 & \secondres{0.502} & \boldres{0.483} & 1.634 & 0.603 & 1.313 & 0.562 & 1.808 & 0.571 & 0.895 & 0.728 & 0.707 & 0.588 & 0.891 & 0.629 & 0.870 & 0.628 & 0.889 & 0.630 \\
          & 192   & \boldres{0.517} & \boldres{0.501} & 0.535 & 0.510 & \secondres{0.522} & \secondres{0.505} & 1.299 & 0.606 & 1.104 & 0.571 & 1.651 & 0.592 & 0.886 & 0.726 & 0.776 & 0.619 & 0.951 & 0.663 & 0.959 & 0.671 & 0.961 & 0.669 \\
          & 336   & \boldres{0.534} & \boldres{0.512} & 0.557 & 0.534 & \secondres{0.544} & \secondres{0.522} & 1.457 & 0.629 & 0.931 & 0.575 & 1.174 & 0.610 & 0.893 & 0.727 & 0.862 & 0.667 & 1.004 & 0.690 & 1.025 & 0.701 & 1.013 & 0.695 \\
          & 720   & \boldres{0.558} & \boldres{0.530} & 0.616 & 0.582 & \secondres{0.582} & \secondres{0.547} & 2.364 & 0.703 & 1.576 & 0.644 & 2.106 & 0.683 & 0.888 & 0.726 & 2.205 & 1.126 & 1.169 & 0.747 & 1.170 & 0.757 & 1.089 & 0.740 \\
          & \textbf{Avg} & \boldres{0.527} & \boldres{0.507} & 0.555 & 0.528 & \secondres{0.538} & \secondres{0.514} & 1.688 & 0.635 & 1.231 & 0.588 & 1.685 & 0.614 & 0.890 & 0.727 & 1.138 & 0.750 & 1.004 & 0.682 & 1.006 & 0.689 & 0.988 & 0.684 \\
    \midrule
    \multirow{4}[0]{*}{RW5} & 96    & \boldres{0.489} & \secondres{0.469} & \secondres{0.491} & 0.470 & 0.498 & \boldres{0.468} & 1.002 & 0.571 & 0.911 & 0.541 & 0.941 & 0.533 & 0.900 & 0.732 & 0.707 & 0.600 & 0.897 & 0.622 & 0.911 & 0.626 & 0.924 & 0.627 \\
          & 192   & \boldres{0.509} & \boldres{0.490} & \secondres{0.513} & 0.494 & 0.514 & \secondres{0.491} & 1.049 & 0.588 & 0.815 & 0.547 & 0.863 & 0.551 & 0.912 & 0.735 & 0.759 & 0.617 & 0.973 & 0.661 & 0.998 & 0.672 & 0.991 & 0.668 \\
          & 336   & \boldres{0.517} & \boldres{0.499} & 0.538 & 0.516 & \secondres{0.529} & \secondres{0.507} & 1.503 & 0.614 & 0.805 & 0.555 & 1.040 & 0.577 & 0.905 & 0.732 & 1.013 & 0.727 & 1.039 & 0.695 & 1.071 & 0.706 & 1.053 & 0.696 \\
          & 720   & \boldres{0.526} & \boldres{0.510} & 0.613 & 0.570 & \secondres{0.541} & \secondres{0.524} & 2.697 & 0.687 & 1.226 & 0.616 & 1.825 & 0.653 & 0.910 & 0.735 & 1.741 & 1.008 & 1.155 & 0.748 & 1.199 & 0.755 & 1.166 & 0.738 \\
          & \textbf{Avg} & \boldres{0.510} & \boldres{0.492} & 0.539 & 0.512 & \secondres{0.520} & \secondres{0.497} & 1.563 & 0.615 & 0.939 & 0.565 & 1.167 & 0.578 & 0.907 & 0.733 & 1.055 & 0.738 & 1.016 & 0.682 & 1.045 & 0.690 & 1.033 & 0.683 \\
    \midrule
    \multirow{4}[0]{*}{RW6} & 96    & 0.786 & \boldres{0.499} & 0.796 & \secondres{0.516} & \secondres{0.785} & \boldres{0.499} & 1.442 & 0.596 & 1.169 & 0.561 & 1.555 & 0.566 & 1.290 & 0.778 & \boldres{0.692} & 0.597 & 1.365 & 0.656 & 1.388 & 0.660 & 1.349 & 0.660 \\
          & 192   & \secondres{0.808} & \boldres{0.517} & \secondres{0.808} & 0.531 & 0.809 & \secondres{0.521} & 1.228 & 0.597 & 1.021 & 0.565 & 1.706 & 0.586 & 1.299 & 0.779 & \boldres{0.754} & 0.627 & 1.462 & 0.692 & 1.533 & 0.704 & 1.463 & 0.699 \\
          & 336   & \boldres{0.821} & \boldres{0.525} & 0.835 & 0.549 & \secondres{0.828} & \secondres{0.537} & 1.733 & 0.623 & 1.126 & 0.585 & 1.456 & 0.610 & 1.301 & 0.780 & 0.860 & 0.667 & 1.524 & 0.720 & 1.630 & 0.737 & 1.552 & 0.727 \\
          & 720   & \boldres{0.831} & \boldres{0.539} & 0.897 & 0.599 & \secondres{0.837} & \secondres{0.552} & 2.410 & 0.694 & 1.877 & 0.657 & 2.881 & 0.690 & 1.290 & 0.780 & 1.703 & 0.982 & 1.627 & 0.772 & 1.752 & 0.786 & 1.670 & 0.766 \\
          & \textbf{Avg} & \boldres{0.812} & \boldres{0.520} & 0.834 & 0.549 & \secondres{0.815} & \secondres{0.527} & 1.703 & 0.628 & 1.298 & 0.592 & 1.900 & 0.613 & 1.295 & 0.779 & 1.002 & 0.718 & 1.495 & 0.710 & 1.576 & 0.722 & 1.508 & 0.713 \\
    \midrule
    \multirow{4}[0]{*}{RW7} & 96    & \boldres{0.479} & \boldres{0.478} & \secondres{0.492} & 0.481 & 0.493 & \secondres{0.479} & 0.703 & 0.550 & 0.695 & 0.541 & 0.660 & 0.531 & 0.887 & 0.726 & 0.728 & 0.615 & 0.879 & 0.625 & 0.897 & 0.626 & 0.888 & 0.626 \\
          & 192   & \boldres{0.503} & \secondres{0.504} & \secondres{0.514} & \boldres{0.503} & \secondres{0.514} & \secondres{0.504} & 0.712 & 0.561 & 0.686 & 0.549 & 0.655 & 0.553 & 0.891 & 0.727 & 0.817 & 0.634 & 0.939 & 0.662 & 0.981 & 0.670 & 0.953 & 0.665 \\
          & 336   & \boldres{0.518} & \boldres{0.520} & 0.536 & 0.525 & \secondres{0.527} & \secondres{0.522} & 0.791 & 0.582 & 0.696 & 0.567 & 0.661 & 0.566 & 0.885 & 0.725 & 0.899 & 0.696 & 0.981 & 0.687 & 1.028 & 0.697 & 1.001 & 0.691 \\
          & 720   & \boldres{0.526} & \boldres{0.532} & 0.582 & 0.564 & \secondres{0.537} & \secondres{0.533} & 1.462 & 0.643 & 0.733 & 0.610 & 0.868 & 0.615 & 0.880 & 0.724 & 1.961 & 1.071 & 1.081 & 0.731 & 1.094 & 0.735 & 1.021 & 0.722 \\
          & \textbf{Avg} & \boldres{0.506} & \boldres{0.508} & 0.531 & 0.518 & \secondres{0.518} & \secondres{0.509} & 0.917 & 0.584 & 0.703 & 0.567 & 0.711 & 0.566 & 0.886 & 0.725 & 1.101 & 0.754 & 0.970 & 0.676 & 1.000 & 0.682 & 0.966 & 0.676 \\
    \midrule
    \multirow{4}[0]{*}{RW8} & 96    & 0.520 & 0.468 & 0.525 & 0.471 & 0.527 & 0.466 & 0.853 & 0.552 & 0.828 & 0.525 & 0.881 & 0.531 & 1.035 & 0.748 & 0.786 & 0.643 & 0.858 & 0.598 & 0.929 & 0.622 & 0.910 & 0.615 \\
          & 192   & \secondres{0.539} & \secondres{0.492} & \boldres{0.538} & \boldres{0.489} & 0.542 & \boldres{0.489} & 0.769 & 0.563 & 0.773 & 0.539 & 0.736 & 0.546 & 1.019 & 0.746 & 0.920 & 0.701 & 0.925 & 0.638 & 1.058 & 0.679 & 1.012 & 0.665 \\
          & 336   & \boldres{0.545} & \secondres{0.503} & 0.557 & 0.509 & \secondres{0.547} & \boldres{0.502} & 0.800 & 0.581 & 0.805 & 0.557 & 0.718 & 0.559 & 1.018 & 0.748 & 1.026 & 0.752 & 0.986 & 0.668 & 1.170 & 0.723 & 1.119 & 0.702 \\
          & 720   & \boldres{0.560} & \boldres{0.516} & 0.621 & 0.556 & \secondres{0.573} & \secondres{0.527} & 1.054 & 0.631 & 0.818 & 0.589 & 0.937 & 0.614 & 1.018 & 0.749 & 2.017 & 1.111 & 1.147 & 0.718 & 1.356 & 0.789 & 1.298 & 0.777 \\
          & \textbf{Avg} & \boldres{0.541} & \boldres{0.495} & 0.560 & 0.506 & \secondres{0.547} & \secondres{0.496} & 0.869 & 0.582 & 0.806 & 0.553 & 0.818 & 0.563 & 1.022 & 0.748 & 1.187 & 0.802 & 0.979 & 0.655 & 1.128 & 0.703 & 1.085 & 0.690 \\
    \midrule
    \multirow{4}[0]{*}{RW9} & 96    & \boldres{0.576} & \secondres{0.479} & \boldres{0.576} & \boldres{0.476} & \secondres{0.585} & 0.482 & 1.143 & 0.578 & 1.005 & 0.557 & 1.304 & 0.559 & 1.012 & 0.720 & 0.791 & 0.645 & 0.939 & 0.605 & 1.062 & 0.636 & 1.041 & 0.630 \\
          & 192   & \secondres{0.622} & \secondres{0.513} & \boldres{0.613} & \boldres{0.508} & 0.629 & 0.517 & 1.124 & 0.607 & 0.996 & 0.582 & 1.066 & 0.586 & 1.025 & 0.725 & 0.913 & 0.685 & 1.039 & 0.654 & 1.228 & 0.703 & 1.178 & 0.695 \\
          & 336   & \boldres{0.644} & \boldres{0.528} & \secondres{0.648} & 0.532 & \secondres{0.648} & \secondres{0.529} & 1.121 & 0.623 & 0.966 & 0.596 & 1.061 & 0.618 & 1.036 & 0.728 & 1.091 & 0.791 & 1.106 & 0.686 & 1.357 & 0.748 & 1.321 & 0.738 \\
          & 720   & \boldres{0.658} & \boldres{0.544} & 0.707 & 0.573 & \secondres{0.669} & \secondres{0.552} & 1.313 & 0.661 & 1.029 & 0.634 & 1.557 & 0.704 & 1.037 & 0.731 & 2.049 & 1.133 & 1.221 & 0.734 & 1.518 & 0.802 & 1.500 & 0.778 \\
          & \textbf{Avg} & \boldres{0.625} & \boldres{0.516} & 0.636 & 0.522 & \secondres{0.633} & \secondres{0.520} & 1.175 & 0.617 & 0.999 & 0.592 & 1.247 & 0.617 & 1.028 & 0.726 & 1.211 & 0.813 & 1.076 & 0.670 & 1.291 & 0.722 & 1.260 & 0.710 \\
    \midrule
    \multirow{4}[0]{*}{RW10} & 96    & \boldres{0.566} & \boldres{0.505} & 0.585 & \secondres{0.518} & \secondres{0.579} & \boldres{0.505} & 0.923 & 0.607 & 0.926 & 0.591 & 0.836 & 0.578 & 0.989 & 0.769 & 0.752 & 0.626 & 1.014 & 0.657 & 1.073 & 0.661 & 1.016 & 0.651 \\
          & 192   & \boldres{0.599} & \boldres{0.537} & 0.612 & \secondres{0.546} & \secondres{0.610} & \boldres{0.537} & 0.935 & 0.629 & 0.949 & 0.617 & 0.874 & 0.615 & 0.986 & 0.768 & 0.777 & 0.638 & 1.087 & 0.703 & 1.144 & 0.708 & 1.098 & 0.698 \\
          & 336   & \boldres{0.616} & \boldres{0.553} & 0.626 & 0.566 & \secondres{0.621} & \secondres{0.554} & 0.907 & 0.637 & 0.972 & 0.633 & 0.864 & 0.634 & 0.976 & 0.768 & 0.852 & 0.678 & 1.117 & 0.728 & 1.173 & 0.730 & 1.120 & 0.720 \\
          & 720   & \boldres{0.618} & \boldres{0.564} & 0.655 & \secondres{0.600} & \secondres{0.629} & \boldres{0.564} & 1.119 & 0.682 & 1.063 & 0.680 & 1.070 & 0.690 & 0.968 & 0.767 & 1.872 & 1.069 & 1.206 & 0.774 & 1.260 & 0.774 & 1.186 & 0.755 \\
          & \textbf{Avg} & \boldres{0.600} & \boldres{0.540} & 0.620 & \secondres{0.558} & \secondres{0.609} & \boldres{0.540} & 0.971 & 0.639 & 0.977 & 0.630 & 0.911 & 0.629 & 0.980 & 0.768 & 1.063 & 0.753 & 1.106 & 0.716 & 1.162 & 0.718 & 1.105 & 0.706 \\
    \midrule
    \multirow{4}[0]{*}{RW11} & 96    & \boldres{0.529} & \boldres{0.487} & 0.556 & 0.498 & \secondres{0.546} & \secondres{0.495} & 0.975 & 0.601 & 0.929 & 0.593 & 0.967 & 0.583 & 0.901 & 0.740 & 0.707 & 0.623 & 0.906 & 0.601 & 0.956 & 0.608 & 0.906 & 0.596 \\
          & 192   & \boldres{0.586} & \secondres{0.539} & \secondres{0.588} & \boldres{0.535} & 0.596 & 0.542 & 1.062 & 0.642 & 0.999 & 0.630 & 0.936 & 0.619 & 0.898 & 0.738 & 0.726 & 0.625 & 0.999 & 0.656 & 1.108 & 0.670 & 1.014 & 0.651 \\
          & 336   & \secondres{0.628} & \secondres{0.567} & \boldres{0.614} & \boldres{0.559} & 0.639 & 0.571 & 0.916 & 0.635 & 0.900 & 0.643 & 0.846 & 0.635 & 0.906 & 0.740 & 0.985 & 0.765 & 1.089 & 0.690 & 1.146 & 0.698 & 1.070 & 0.679 \\
          & 720   & \boldres{0.655} & \secondres{0.586} & \secondres{0.673} & 0.600 & \boldres{0.655} & \boldres{0.585} & 0.931 & 0.669 & 0.915 & 0.675 & 0.920 & 0.675 & 0.908 & 0.739 & 1.685 & 1.023 & 1.264 & 0.738 & 1.267 & 0.746 & 1.328 & 0.712 \\
          & \textbf{Avg} & \boldres{0.599} & \boldres{0.545} & \secondres{0.608} & \secondres{0.548} & 0.609 & \secondres{0.548} & 0.971 & 0.637 & 0.936 & 0.635 & 0.917 & 0.628 & 0.903 & 0.739 & 1.026 & 0.759 & 1.065 & 0.671 & 1.119 & 0.681 & 1.080 & 0.659 \\
    \midrule
    \multirow{4}[0]{*}{RW12} & 96    & \boldres{0.708} & \boldres{0.554} & 0.751 & 0.569 & \secondres{0.726} & \secondres{0.556} & 1.124 & 0.661 & 1.168 & 0.657 & 1.134 & 0.641 & 1.140 & 0.798 & 0.782 & 0.654 & 1.196 & 0.693 & 1.227 & 0.684 & 1.191 & 0.682 \\
          & 192   & \boldres{0.711} & \boldres{0.605} & 0.791 & \boldres{0.605} & \secondres{0.781} & \secondres{0.606} & 1.285 & 0.707 & 1.206 & 0.690 & 1.183 & 0.688 & 1.149 & 0.802 & 0.836 & 0.650 & 1.331 & 0.754 & 1.415 & 0.755 & 1.342 & 0.746 \\
          & 336   & \boldres{0.811} & \boldres{0.629} & \secondres{0.821} & \secondres{0.631} & 0.827 & 0.637 & 1.544 & 0.720 & 1.252 & 0.720 & 1.236 & 0.722 & 1.146 & 0.800 & 1.011 & 0.755 & 1.423 & 0.792 & 1.502 & 0.793 & 1.454 & 0.783 \\
          & 720  & \boldres{0.838} & \boldres{0.650} & 0.880 & 0.668 & \secondres{0.858} & \secondres{0.657} & 2.417 & 0.780 & 1.342 & 0.768 & 1.529 & 0.790 & 1.158 & 0.804 & 1.910 & 1.092 & 1.589 & 0.842 & 1.662 & 0.846 & 1.572 & 0.827 \\
          & \textbf{Avg}   & \boldres{0.782} & \boldres{0.610} & 0.811 & 0.618 & \secondres{0.798} & \secondres{0.614} & 1.592 & 0.717 & 1.242 & 0.709 & 1.270 & 0.710 & 1.148 & 0.801 & 1.135 & 0.788 & 1.385 & 0.770 & 1.451 & 0.770 & 1.390 & 0.759 \\
    \midrule
    \multirow{4}[0]{*}{RW13} & 96 & \boldres{0.584} & \boldres{0.493} & 0.601 & 0.509 & \secondres{0.597} & \secondres{0.499} & 0.969 & 0.593 & 0.832 & 0.571 & 0.845 & 0.570 & 1.047 & 0.773 & 0.736 & 0.623 & 1.057 & 0.657 & 1.081 & 0.650 & 1.075 & 0.650 \\
          & 192   & \boldres{0.619} & \boldres{0.522} & 0.643 & 0.541 & \secondres{0.622} & \secondres{0.523} & 1.055 & 0.616 & 0.908 & 0.601 & 0.944 & 0.609 & 1.045 & 0.770 & 0.781 & 0.636 & 1.149 & 0.706 & 1.202 & 0.701 & 1.151 & 0.693 \\
          & 336   & \boldres{0.644} & \boldres{0.539} & 0.668 & 0.564 & \secondres{0.648} & \secondres{0.542} & 0.986 & 0.632 & 0.901 & 0.617 & 0.914 & 0.627 & 1.048 & 0.771 & 0.901 & 0.688 & 1.232 & 0.740 & 1.248 & 0.732 & 1.195 & 0.723 \\
          & 720   & \boldres{0.657} & \boldres{0.556} & 0.725 & \secondres{0.613} & \secondres{0.660} & \boldres{0.556} & 1.290 & 0.689 & 0.921 & 0.654 & 1.021 & 0.666 & 1.035 & 0.771 & 1.925 & 1.065 & 1.329 & 0.793 & 1.299 & 0.776 & 1.243 & 0.755 \\
          & \textbf{Avg} & \boldres{0.626} & \boldres{0.528} & 0.659 & 0.557 & \secondres{0.632} & \secondres{0.530} & 1.075 & 0.633 & 0.891 & 0.611 & 0.931 & 0.618 & 1.044 & 0.771 & 1.086 & 0.753 & 1.192 & 0.724 & 1.208 & 0.715 & 1.166 & 0.705 \\
    \midrule
    \multirow{4}[0]{*}{RW14} & 96    & \boldres{0.648} & \boldres{0.520} & 0.666 & 0.532 & \secondres{0.653} & \secondres{0.525} & 1.144 & 0.620 & 1.044 & 0.602 & 1.111 & 0.597 & 1.107 & 0.782 & 0.743 & 0.630 & 1.118 & 0.667 & 1.166 & 0.674 & 1.129 & 0.669 \\
          & 192  & \boldres{0.682} & \boldres{0.550} & 0.696 & 0.560 & \secondres{0.688} & \secondres{0.557} & 1.211 & 0.646 & 0.989 & 0.615 & 1.026 & 0.623 & 1.098 & 0.777 & 0.819 & 0.649 & 1.221 & 0.713 & 1.281 & 0.727 & 1.256 & 0.721 \\
          & 336   & \boldres{0.714} & \boldres{0.573} & 0.722 & 0.581 & \secondres{0.720} & \secondres{0.580} & 1.358 & 0.674 & 1.029 & 0.637 & 1.080 & 0.648 & 1.109 & 0.780 & 0.883 & 0.688 & 1.277 & 0.742 & 1.401 & 0.766 & 1.314 & 0.752 \\
          & 720   & \boldres{0.724} & \boldres{0.587} & 0.789 & 0.628 & \secondres{0.734} & \secondres{0.592} & 1.793 & 0.732 & 1.096 & 0.689 & 1.219 & 0.697 & 1.099 & 0.778 & 1.891 & 1.067 & 1.359 & 0.786 & 1.459 & 0.805 & 1.389 & 0.783 \\
          & \textbf{Avg} & \boldres{0.692} & \boldres{0.557} & 0.718 & 0.575 & \secondres{0.699} & \secondres{0.563} & 1.376 & 0.668 & 1.039 & 0.636 & 1.109 & 0.641 & 1.103 & 0.780 & 1.084 & 0.759 & 1.244 & 0.727 & 1.327 & 0.743 & 1.272 & 0.731 \\
    \midrule
    \multirow{4}[0]{*}{RW15} & 96    & \secondres{0.508} & \secondres{0.484} & \boldres{0.505} & \boldres{0.483} & 0.512 & 0.488 & 0.891 & 0.565 & 0.797 & 0.559 & 0.765 & 0.540 & 0.972 & 0.760 & 0.740 & 0.623 & 0.843 & 0.587 & 0.861 & 0.600 & 0.836 & 0.588 \\
          & 192   & \secondres{0.559} & \secondres{0.524} & \boldres{0.547} & \boldres{0.519} & \secondres{0.559} & 0.529 & 1.051 & 0.613 & 0.837 & 0.599 & 0.758 & 0.584 & 0.962 & 0.758 & 0.751 & 0.632 & 0.932 & 0.635 & 0.980 & 0.659 & 0.933 & 0.639 \\
          & 336   & \boldres{0.583} & \boldres{0.543} & 0.595 & \secondres{0.555} & \secondres{0.594} & \secondres{0.555} & 1.329 & 0.638 & 0.852 & 0.615 & 0.886 & 0.616 & 0.973 & 0.762 & 0.852 & 0.689 & 0.999 & 0.664 & 1.045 & 0.693 & 0.998 & 0.670 \\
          & 720   & \boldres{0.616} & \boldres{0.568} & 0.672 & 0.608 & \secondres{0.641} & \secondres{0.585} & 2.119 & 0.704 & 0.967 & 0.663 & 1.145 & 0.682 & 0.976 & 0.762 & 1.787 & 1.044 & 1.120 & 0.715 & 1.169 & 0.745 & 1.101 & 0.740 \\
          & \textbf{Avg} & \boldres{0.566} & \boldres{0.530} & 0.580 & 0.541 & \secondres{0.576} & \secondres{0.539} & 1.348 & 0.630 & 0.863 & 0.609 & 0.888 & 0.606 & 0.971 & 0.760 & 1.033 & 0.747 & 0.974 & 0.650 & 1.014 & 0.675 & 0.967 & 0.659 \\
\midrule
      \rowcolor[gray]{0.95}
      \multicolumn{2}{c|}{\scalebox{1.1}{\textbf{{$1^{\text{st}}$ Count}}}}
            &
    \multicolumn{2}{c|}{\boldres{124}} &
    \multicolumn{2}{c|}{21} &
    \multicolumn{2}{c|}{12} &
    \multicolumn{2}{c|}{0} &
    \multicolumn{2}{c|}{0} &
    \multicolumn{2}{c|}{0} &
    \multicolumn{2}{c|}{0} &
    \multicolumn{2}{c|}{2} &
    \multicolumn{2}{c|}{0} &
    \multicolumn{2}{c|}{0} &
    \multicolumn{2}{c}{0} \\
      \bottomrule
    \end{tabular}
  }
  \label{tab:zero_shot_full_rw}%
\end{table}

\begin{table}[tbh]
    \centering
    \small
    \setlength{\tabcolsep}{4.5pt}
    \renewcommand{\arraystretch}{1.15}
    \caption{Additional zero-shot forecasting results averaged over four horizons (96, 192, 336, 720). The results from Sundial are from the original paper~\cite{liu2025sundial} because its evaluation protocol is consistent with ours.}
    \begin{tabular}{l c c c c c}
        \toprule
        \textbf{Dataset/Model} & \textbf{FedTRL (Ours)} & \textbf{VisionTS} & \textbf{Sundial (94B)} & \textbf{Sundial (230B)} & \textbf{Sundial (1032B)} \\
        \midrule
        ETTh1 & \textbf{0.399} & 0.412 & 0.402 & 0.403 & 0.411 \\
        ETTh2 & \textbf{0.350} & 0.365 & 0.377 & 0.364 & 0.333 \\
        ETTm1 & \textbf{0.349} & 0.391 & 0.367 & 0.352 & 0.336 \\
        ETTm2 & \textbf{0.282} & 0.296 & 0.280 & 0.273 & 0.258 \\
        Weather & \textbf{0.238} & 0.303 & 0.254 & 0.252 & 0.234 \\
        \midrule
        $1^{st}$ Count & \textbf{5} & 0 & 0 & 0 & 0 \\
        \bottomrule
    \end{tabular}
    \label{tab:R6_zero_shot_forecasting}
\end{table}

\subsection{Zero-shot Probabilistic Forecasting}\label{sec_app:zero_shot_prob}
We evaluate our FedTRL-trained model on GIFT-Eval~\cite{aksu2024gift}, a benchmark designed to comprehensively assess forecasting across diverse time series. It contains 23 datasets with 144,000 series and 177 million data points, covering 97 forecasting configurations. Following the official evaluation suite, we report aggregated results in Table~\ref{tab:gift_eval}. We further assess zero-shot forecasting and inference efficiency on FEV leaderboard~\cite{ansari2024chronos}, established by AutoGluon, which includes 27 datasets. Aggregated forecasting metrics are presented in Fig.~\ref{fig:fev_eval}.

\subsection{Scalability}\label{sec_app:scalability}
We evaluate the scalability of FedTRL by training models of different sizes, with configurations summarized in Table~\ref{tab:scale_config} and results reported in Fig.~\ref{fig:scale}. We further assess scalability with respect to dataset size by controlling the amount of data used for pretraining across \{90\text{B}, 120\text{B}, 180\text{B}, 300\text{B}, 450\text{B}, 540\text{B}\}, where the 450B and 540B configurations are augmented using ERA5 daily, weekly, and monthly data. We evaluate the resulting TSFMs on full-shot, zero-shot, and zero-shot probabilistic forecasting tasks. As shown in \textbf{Table~\ref{tab:rebuttal_forecasting_scales}}, performance consistently improves with larger pretraining datasets, demonstrating the positive scaling behavior of FedTRL.
\begin{table}[tbh]
  \caption{Model configurations of scalability exploration, where gray shading is the default setting.}
  \centering
  \begin{tabular}{ccccccc}
    \toprule
     \scalebox{0.9}{Patch Size} & \scalebox{0.9}{Context Length} & \scalebox{0.9}{Prediction Length}  & \scalebox{0.9}{Layers} & \scalebox{0.9}{Dimension} & \scalebox{0.9}{Heads} & \scalebox{0.9}{Total Parameters} \\
     $(P)$ & $(T)$ & $(F)$  & $(L)$ & $(D, D_\text{ff})$ & $H$  & $\#\text{Count}$ \\
    \midrule
     $16$ & $3072$ & $\{16, 720\}$ & 12 & $(512, 2048)$ & $8$ & $38$M \\
    \midrule
      $16$ & $3072$ & $\{16, 720\}$ & 16 & $(768, 3072)$ & $12$ & $114$M \\
    \midrule
    \rowcolor[gray]{0.95}
     $16$ &  $3072$ & $\{16, 720\}$ & 24 & $(1024, 4096)$ & $16$ & $302$M \\
    \bottomrule
  \end{tabular}
    \label{tab:scale_config}
\end{table}
\begin{table}[tbh]
    \centering
    \small
    \setlength{\tabcolsep}{3.5pt}
    \renewcommand{\arraystretch}{1.12}
    \caption{Full/Zero-shot point forecasting (MSE report) and zero-shot probabilistic forecasting (Avg. MASE report) results across different pretraining datasets scales.}
    \resizebox{.7\textwidth}{!}{%
    \begin{tabular}{llcccccc}
        \toprule
        \multirow{2}{*}{\textbf{Task / Report}} & \multirow{2}{*}{\textbf{Datasets / Metric}} & \multicolumn{6}{c}{\textbf{Pretraining Datasets Scale (Billion)}} \\
        \cmidrule(lr){3-8}
        & & \textbf{90B} & \textbf{120B} & \textbf{180B} & \textbf{300B (Original)} & \textbf{450B} & \textbf{540B} \\
        \midrule
        \multicolumn{8}{l}{\textbf{Full-shot Forecasting}} \\
        \midrule
        \multirow{3}{*}{MSE Report}
        & ETTh1 & 0.401 & 0.387 & 0.372 & 0.371 & 0.363 & \textbf{0.363} \\
        & ETTm1 & 0.343 & 0.336 & 0.328 & 0.316 & 0.312 & \textbf{0.310} \\
        & Weather & 0.232 & 0.231 & 0.227 & 0.214 & 0.204 & \textbf{0.202} \\
        \midrule
        \multicolumn{8}{l}{\textbf{Zero-shot Point Forecasting}} \\
        \midrule
        \multirow{4}{*}{MSE Report}
        & ETTh1 & 0.428 & 0.432 & 0.411 & 0.399 & 0.400 & \textbf{0.393} \\
        & ETTm1 & 0.402 & 0.371 & 0.376 & 0.350 & 0.344 & \textbf{0.341} \\
        & Weather & 0.275 & 0.273 & 0.269 & 0.238 & 0.214 & \textbf{0.208} \\
        & RW-Bench & 0.689 & 0.647 & 0.641 & 0.599 & 0.581 & \textbf{0.575} \\
        \midrule
        \multicolumn{8}{l}{\textbf{Zero-shot Probabilistic}} \\
        \midrule
        \multirow{2}{*}{Ave. MASE Report}
        & GIFT-eval & 0.770 & 0.744 & 0.721 & 0.675 & 0.677 & \textbf{0.669} \\
        & FEV Leaderboard & 0.945 & 0.873 & 0.847 & 0.836 & 0.833 & \textbf{0.829} \\
        \bottomrule
    \end{tabular}}
    \label{tab:rebuttal_forecasting_scales}
\end{table}

\subsection{Additional Discussions}\label{sec_app:additional}
This subsection provides additional analyses of our proposed FedTRL, including its generalization to unseen domains, its representation- and gradient-level behavior in handling heterogeneity, and its robustness under different simulated heterogeneity settings.

\paragraph{Generalization to Unseen Domain.} To further test generalization, we conducted a leave-one-domain-out study (\textbf{Tables~\ref{tab:R3_in_domain_forecasting}–\ref{tab:R4_zero_shot_forecasting}} below), where an entire domain was removed during pretraining. Even under this stricter setting, FedTRL consistently outperforms centralized mixed-domain pretraining and other FL baselines on the unseen domain. Performance naturally decreases when excluding related domains (an expected phenomenon for all foundation models) but FedTRL remains the most robust under such conditions.

\begin{table}[tbh]
    \centering
    \small
    \setlength{\tabcolsep}{5pt}
    \renewcommand{\arraystretch}{1.15}
    \caption{In-domain forecasting results (MSE averaged across $\{96, 192, 336, 720\}$). Corresponding domain dataset has been excluded from the pretraining dataset (energy, traffic, respectively).}
    \begin{tabular}{l c c c c}
        \toprule
        \textbf{Method} & \textbf{ETTh1} & \textbf{ETTm1} & \textbf{Traffic} & \textbf{Weather} \\
        \midrule
        FedAvg & 0.511 & 0.433 & 0.488 & 0.295 \\
        FFTS & 0.492 & 0.421 & 0.479 & 0.290 \\
        Centralized All Mixed & 0.498 & 0.427 & 0.483 & 0.290 \\
        FedTRL (Original) & 0.448 & 0.375 & 0.426 & 0.241 \\
        \textbf{FedTRL (Ours)} & \textbf{0.472} & \textbf{0.411} & \textbf{0.477} & \textbf{0.282} \\
        \bottomrule
    \end{tabular}
    \label{tab:R3_in_domain_forecasting}
\end{table}

\begin{table}[tbh]
    \centering
    \small
    \setlength{\tabcolsep}{5pt}
    \renewcommand{\arraystretch}{1.15}
    \caption{Zero-shot forecasting results (MSE averaged across $\{96, 192, 336, 720\}$). Corresponding domain dataset has been excluded from the pretraining dataset (energy, traffic, respectively).}
    \begin{tabular}{l c c c}
        \toprule
        \textbf{Method} & \textbf{ETTh1} & \textbf{ETTm1} & \textbf{Weather} \\
        \midrule
        FedAvg & 0.438 & 0.384 & 0.292 \\
        FFTS & 0.427 & 0.380 & 0.282 \\
        Centralized All Mixed & 0.431 & 0.382 & 0.288 \\
        FedTRL (Original) & 0.399 & 0.349 & 0.238 \\
        \midrule
        \textbf{FedTRL} & 0.418 ($\downarrow$ 4.76\%) & 0.366 ($\downarrow$ 4.87\%) & 0.269 ($\downarrow$ 13.03\%) \\
        \bottomrule
    \end{tabular}
    \label{tab:R4_zero_shot_forecasting}
\end{table}

\paragraph{Representation- and Gradient-Level Analysis.} \textbf{Table~\ref{tab:R1_domain_analysis}} shows that centralized mixed-domain pretraining yields the smallest inter-domain centroid distance and the largest within-domain variance, indicating collapsed and weakly structured representations. In contrast, FedTRL maintains clear domain separation while producing the most compact within-domain clusters. \textbf{Table~\ref{tab:R2_gradient_conflict}} further shows that All-Mixed exhibits the strongest gradient conflict (lowest cosine similarity and highest gradient-norm variance), whereas FedTRL substantially alleviates these conflicts.

\begin{table}[tbh]
    \centering
    \small
    \setlength{\tabcolsep}{5pt}
    \renewcommand{\arraystretch}{1.15}
    \caption{Representation-level Domain Analysis. We report the average inter-domain centroid distance (higher is better separation) and average within-domain variance (lower is more compact).}
    \begin{tabular}{l c c}
        \toprule
        \textbf{Method} & \textbf{Inter-domain Centroid Distance} ($\uparrow$) & \textbf{Within-domain Variance} ($\downarrow$) \\
        \midrule
        All Mixed (Centralized) & 1.08 & 0.91 \\
        FedAvg & 1.06 & 0.85 \\
        FFTS & 1.34 & 0.69 \\
        \textbf{FedTRL (Ours)} & \textbf{1.27} & \textbf{0.53} \\
        \bottomrule
    \end{tabular}
    \label{tab:R1_domain_analysis}
\end{table}

\begin{table}[tbh]
    \centering
    \small
    \setlength{\tabcolsep}{5pt}
    \renewcommand{\arraystretch}{1.15}
    \caption{Gradient Conflict Analysis Across Domains. We report the average pairwise cosine similarity between domain-wise gradients (higher means less conflict) and the variance of gradient norms across domains (lower means more balanced contributions).}
    \begin{tabular}{l c c}
        \toprule
        \textbf{Method} & \textbf{Avg. Gradient Cosine Similarity} ($\uparrow$) & \textbf{Gradient Norm Variance} ($\downarrow$) \\
        \midrule
        All Mixed (Centralized) & 0.23 & 0.021 \\
        FedAvg & 0.33 & 0.015 \\
        FFTS & 0.37 & 0.013 \\
        \textbf{FedTRL (Ours)} & \textbf{0.43} & \textbf{0.010} \\
        \bottomrule
    \end{tabular}
    \label{tab:R2_gradient_conflict}
\end{table}

\paragraph{Robustness Under Increasing Heterogeneity.} By gradually increasing domain imbalance (from Balanced to Mildly Imbalanced to Real Skew) by controlling the data scale of each domain, we observe consistent performance degradation for All-Mixed, FedAvg, and FFTS across in-domain, full-shot, zero-shot, and probabilistic forecasting (as shown in \textbf{Table~\ref{tab:combined_imbalance_results}}). In contrast, FedTRL remains markedly more stable in all settings, indicating that severe heterogeneity harms federated training and that FedTRL effectively mitigates this effect.

\begin{table}[tbh]
    \centering
    \small
    \setlength{\tabcolsep}{3.5pt}
    \renewcommand{\arraystretch}{1.12}
    \caption{Forecasting results under different inter-domain imbalance levels.}
    \resizebox{\textwidth}{!}{%
    \begin{tabular}{llccc}
        \toprule
        \multirow{2}{*}{\textbf{Task Type / Dataset (Metric)}} & \multirow{2}{*}{\textbf{Method}} & \multicolumn{3}{c}{\textbf{Inter-domain Imbalance Level}} \\
        \cmidrule(lr){3-5}
        & & \textbf{Balanced (Simulated)} & \textbf{Mildly Imbalanced (Simulated)} & \textbf{Real Skew (Original Pretraining)} \\
        \midrule
        \multirow{4}{*}{\shortstack[l]{\textbf{In-domain Forecasting} \\ (MSE on Weather)}}
        & All Mixed (Centralized) & 0.235 & 0.248 & 0.254 \\
        & FedAvg & 0.231 & 0.248 & 0.264 \\
        & FFTS & 0.224 & 0.235 & 0.252 \\
        & \textbf{FedTRL (Ours)} & \textbf{0.216} & \textbf{0.230} & \textbf{0.241} \\
        \midrule
        \multirow{4}{*}{\shortstack[l]{\textbf{Full-shot Forecasting} \\ (MSE on Weather)}}
        & All Mixed (Centralized) & 0.227 & 0.235 & 0.238 \\
        & FedAvg & 0.228 & 0.239 & 0.240 \\
        & FFTS & 0.212 & 0.219 & 0.227 \\
        & \textbf{FedTRL (Ours)} & \textbf{0.204} & \textbf{0.211} & \textbf{0.214} \\
        \midrule
        \multirow{4}{*}{\shortstack[l]{\textbf{Zero-shot Forecasting} \\ (MSE on RW-Bench)}}
        & All Mixed (Centralized) & 0.610 & 0.608 & 0.614 \\
        & FedAvg & 0.708 & 0.710 & 0.729 \\
        & FFTS & 0.605 & 0.614 & 0.614 \\
        & \textbf{FedTRL (Ours)} & \textbf{0.587} & \textbf{0.592} & \textbf{0.599} \\
        \midrule
        \multirow{4}{*}{\shortstack[l]{\textbf{Zero-shot Probabilistic} \\ (MASE on GIFT-eval / FEV)}}
        & All Mixed (Centralized) & 0.812 / 0.867 & 0.820 / 0.875 & 0.823 / 0.894 \\
        & FedAvg & 0.835 / 0.877 & 0.849 / 0.879 & 0.872 / 0.890 \\
        & FFTS & 0.752 / 0.852 & 0.754 / 0.855 & 0.766 / 0.876 \\
        & \textbf{FedTRL (Ours)} & \textbf{0.670 / 0.826} & \textbf{0.669 / 0.832} & \textbf{0.675 / 0.836} \\
        \bottomrule
    \end{tabular}}
    \label{tab:combined_imbalance_results}
\end{table}

\section{Showcases}\label{sec_app:showcases}
\paragraph{Reconstruction from Noise in Pretraining.} To illustrate the effectiveness of our diffusion-based reconstruction strategy, we showcase examples of noisy inputs, reconstructions, and original sequences. As shown in \textbf{Fig.~\ref{fig:showcase_noise}}, the model successfully restores a wide range of temporal dynamics, including non-stationary series with high-frequency fluctuations, periodic components, approximate stationarity, and nonlinear trends. These cases highlight the ability of FedTRL to preserve fine-grained temporal structures and denoise corrupted inputs, even under complex and unstable conditions, thereby ensuring robust representation learning during pretraining.

\begin{figure}[tbh]
    \centering
    \includegraphics[width=1\textwidth]{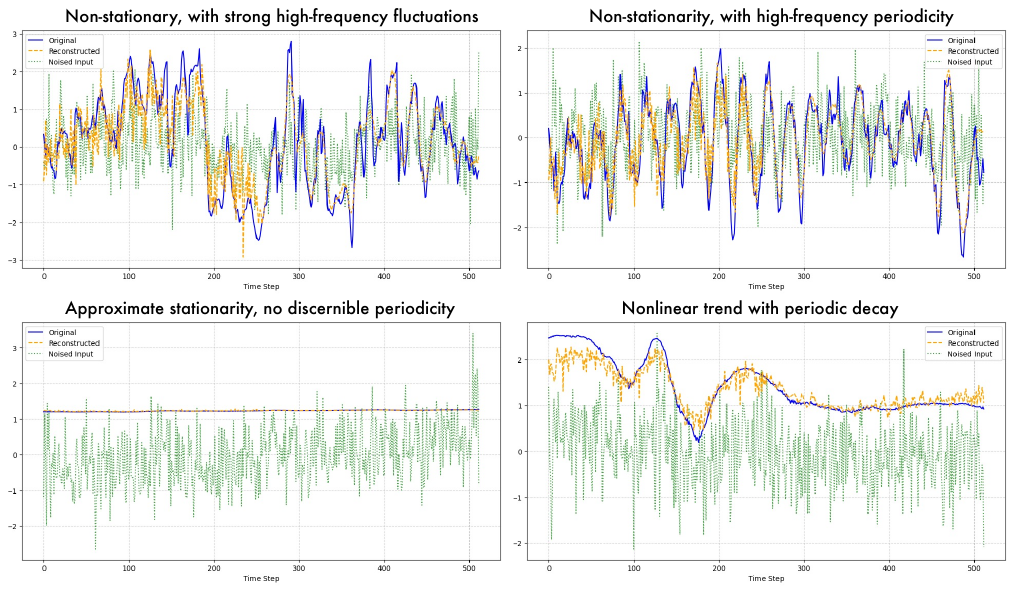}
    \caption{Showcase of diffusion-based reconstruction during pretraining.}
    \label{fig:showcase_noise}
\end{figure}

\paragraph{Forecasting.} \textbf{Fig.~\ref{fig:poin_showcase}-~\ref{fig:pro_showcase}} present zero-shot forecasting showcases on all the datasets from our RW-Bench (point forecasting) and FEV-leaderboard~\cite{ansari2024chronos} (probabilistic forecasting).

\begin{figure}[tbh]
    \centering
    \includegraphics[width=1\textwidth]{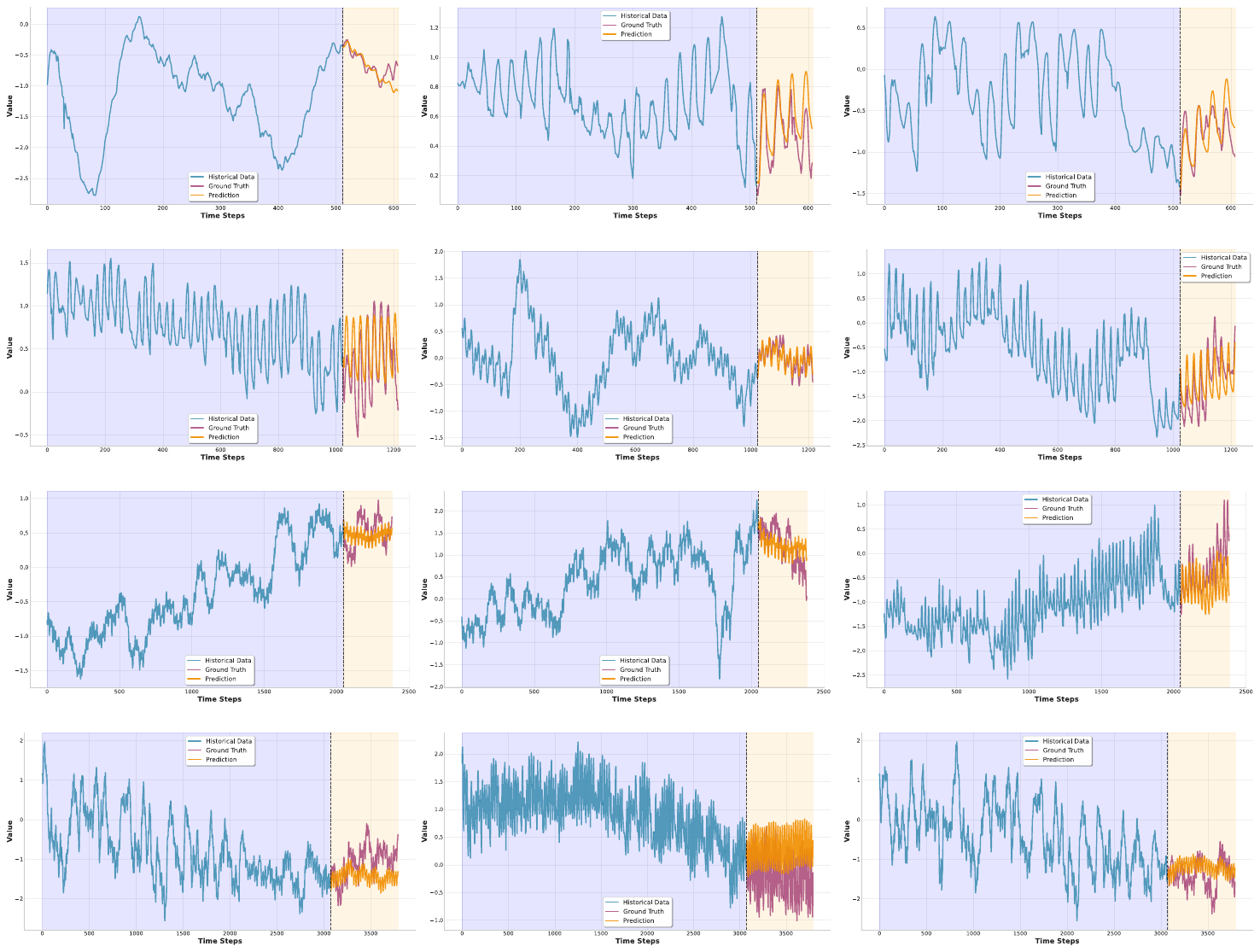}
    \caption{\small Zero-shot point forecasting results on our RW-Bench. From top to bottom, the prediction horizons are 96, 192, 336, and 720, corresponding to lookback window lengths of 512, 1024, 2048, and 3072, respectively. The visual samples are randomly selected from RW-Bench.}
    \label{fig:poin_showcase}
\end{figure}

\begin{figure}[tbh]
    \centering
    \includegraphics[width=1\linewidth]{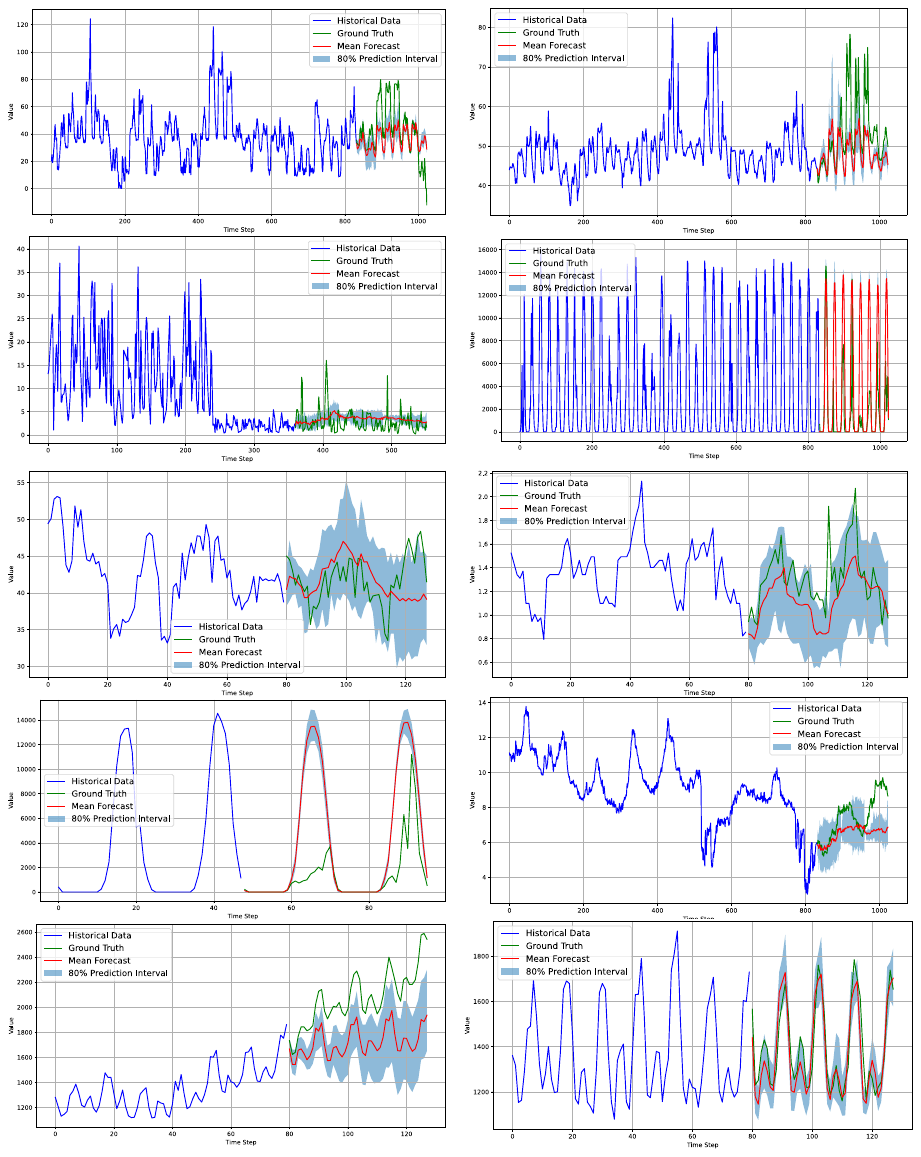}
    \caption{\small Zero-shot probabilistic forecasting results on FEV-leaderboard datasets, with visual samples randomly drawn from the benchmark.}
    \label{fig:pro_showcase}
\end{figure}


\end{document}